\documentclass[10pt]{article}

\usepackage[numbers,sort&compress]{natbib}

\usepackage{graphicx}
\graphicspath{{figure/}{figure/intro/}{figure/experiments/}{figure/toy_example/}}
\usepackage{subcaption}
\usepackage{float}

\usepackage{booktabs}
\usepackage{multirow}
\usepackage{makecell}

\usepackage{amsmath}
\usepackage{bm}
\usepackage{amsthm}
\usepackage{amsfonts}
\usepackage{amssymb}
\usepackage{mathtools}
\usepackage{xparse}
\usepackage{xspace}
\mathtoolsset{showonlyrefs=true}
\newtheorem{theorem}{Theorem}[section]
\newtheorem{definition}{Definition}[section]
\newtheorem{lemma}{Lemma}[section]

\newtheorem{proposition}{Proposition}[section]

\usepackage[
    vlined,
    ruled,
    linesnumbered
]{algorithm2e}

\newcommand{\R}{\mathbb{R}}
\newcommand{\Prob}{\mathbb{P}}
\newcommand{\proposed}{\texttt{post-ADC}\xspace}
\newcommand{\withouteta}{\texttt{w/o-}$\eta$\xspace}
\newcommand{\withoutT}{\texttt{w/o-}$\mathcal{T}$\xspace}
\newcommand{\bonferroni}{\texttt{Bonferroni}\xspace}
\newcommand{\naive}{\texttt{naive}\xspace}

\usepackage[hidelinks]{hyperref}

\title{\vspace{-3cm}Post-ADC Inference:\\ Valid Inference After Active Data Collection}

\date{\today}

\makeatletter
\def\@fnsymbol#1{\ensuremath{\ifcase#1\or
{1}\or
{\ast}\or
{2}\or
{\dagger}\or
\else\@ctrerr\fi}}
\makeatother

\author{
Shuichi Nishino\thanks{Nagoya University} \thanks{RIKEN},
Tomohiro Shiraishi\footnotemark[1] \footnotemark[2],
Teruyuki Katsuoka\footnotemark[1],\\
Ichiro Takeuchi\footnotemark[1] \footnotemark[2]  \thanks{Corresponding author. e-mail: takeuchi.ichiro.n6@f.mail.nagoya-u.ac.jp}
}

\begin{document}

\maketitle
\thispagestyle{empty}

\begin{abstract}
    \noindent
    The validity of statistical inference depends critically on how data are collected.
When data gathered through active data collection (ADC) are reused for a post-hoc inferential task, conventional inference can fail because the sampling is adaptively biased toward regions favored by the collection strategy.
This issue is especially pronounced in black-box optimization, where sequential model-based optimization (SMBO) methods such as the tree-structured Parzen estimator (TPE) and Gaussian process upper confidence bound (GP-UCB) preferentially concentrate evaluations in promising regions.
We study statistical inference on actively collected data when the inferential target is constructed in a data-dependent manner after data collection.
To enable valid inference in this setting, we propose post-ADC inference, a framework that accounts for the biases arising from both the active data collection process and the subsequent data-driven target construction.
Our method builds on selective inference and provides valid $p$-values and confidence intervals that correct for both sources of bias.
The framework applies to a broad class of ADC processes by imposing only assumptions on the observation noise, without requiring any assumptions on the underlying black-box function or the surrogate model used by the SMBO algorithm.
Empirical results also show that post-ADC inference provides valid inference for data collected by GP-UCB and TPE.

\end{abstract}

\newpage
\section{Introduction}
\label{sec:intro}

Quantifying the statistical significance of insights obtained from data fundamentally depends on how the data are collected.
In principle, valid statistical inference, such as the computation of $p$-values or confidence intervals, requires that data be sampled through a procedure aligned with the inferential objective.
However, in practice, datasets collected for one task are often reused for another.
When the data collection process is biased with respect to the new task, the resulting statistical conclusions may be invalid.
This issue becomes particularly critical in settings involving active data collection (ADC).
Many statistical methods assume that data are independently and identically distributed (i.i.d.), but this assumption is violated when data are collected adaptively based on past observations.
We refer to this setting as post-ADC analysis.

As a proof-of-concept (PoC) of post-ADC analysis, we study black-box optimization problems by sequential model-based optimization (SMBO) methods~\cite{jones1998efficient,srinivas2010gaussian,bergstra2011algorithms,hutter2011sequential,frazier2018tutorial}.
SMBO is a widely used approach for black-box optimization, in which a surrogate model of the objective function is iteratively updated and used to actively select evaluation points.
It has been successfully applied to hyperparameter tuning in machine learning, as well as to optimization problems across various scientific domains.
Representative methods include density-based methods such as the tree-structured Parzen estimator (TPE)~\citep{bergstra2011algorithms} and Gaussian process-based approaches such as Gaussian process upper confidence bound (GP-UCB)~\citep{srinivas2010gaussian}.
A common characteristic of these methods is that the collected data are highly biased.
For example, in maximization problems, samples tend to concentrate in regions with high observed values, while regions with low values are underexplored.
Consequently, the dataset is not an i.i.d. sample from the underlying distribution, but rather the outcome of a goal-directed adaptive process.

This bias poses a fundamental challenge when such data are used for a post-hoc inferential task.
The difficulty is compounded when the inferential target is itself constructed in a data-driven manner from the same adaptively collected data.
For example, after an ADC process, one may wish to formulate a new statistical question about the underlying black-box function, such as a two-sample test comparing ``high'' and ``low'' regions.
However, because the sampling process depends on past observations and the inferential target is defined after observing the resulting dataset, naive statistical inference can yield misleading conclusions, resulting in overconfident or spurious findings.
Figure~\ref{fig:intro_demonstration} provides a concrete illustration of this setting.

\begin{figure}[h]
 \centering
 \includegraphics[width=0.49\textwidth,trim=0.2cm 0.2cm 0.1cm 1.2cm, clip]{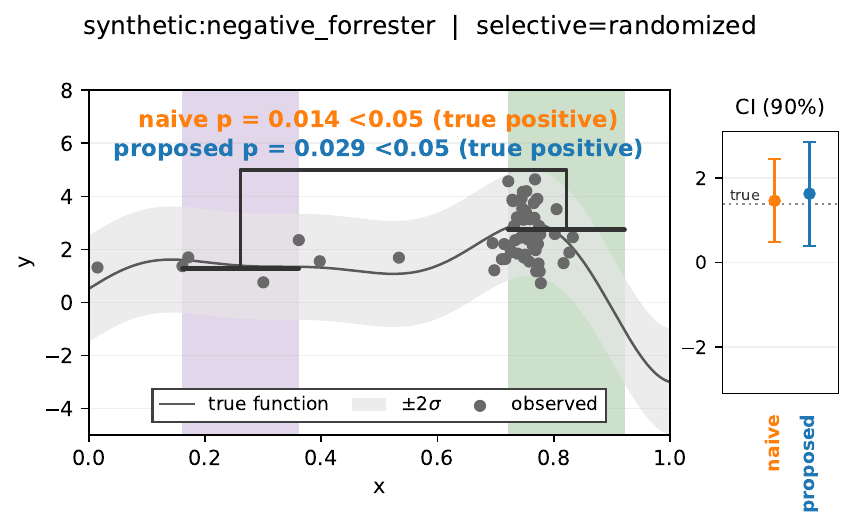}
 \includegraphics[width=0.49\textwidth,trim=0.1cm 0.2cm 0.2cm 1.2cm, clip]{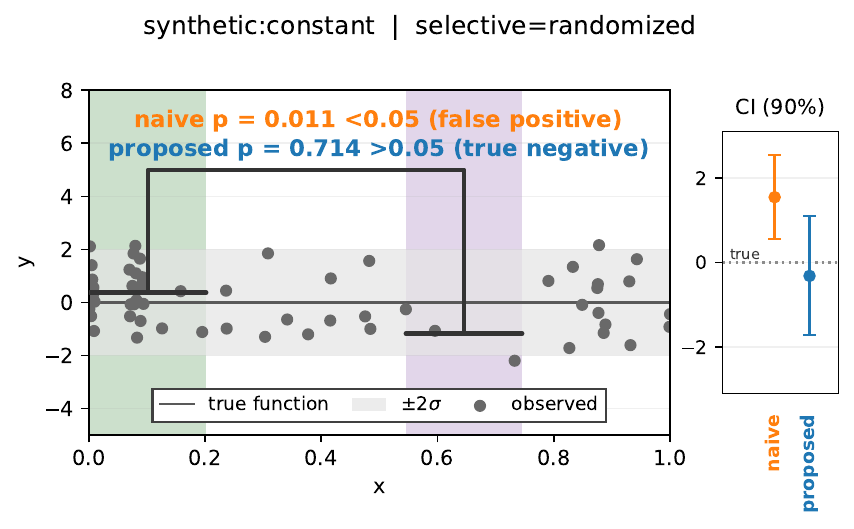}
 \vspace{-0.75em}
 \caption{
 Schematic illustration of a post-ADC analysis.
 The left panel shows the case where the true black-box function depends on $x$, indicating underlying differences between regions.
 In contrast, the right panel shows the case where it does not.
 Dots indicate data points collected through an SMBO process.
 Based on observed data, regions with high (green) and low (purple) estimated means are selected, and statistical inference ($p$-values and confidence intervals) is performed to compare the two regions.
 Naive inference methods can lead to false-positive findings and miscalibrated confidence intervals, whereas the proposed inference approach provides valid inference.
}
\label{fig:intro_demonstration}
\end{figure}

The goal of this paper is to develop a principled framework for valid statistical inference in post-ADC analysis.
Our approach is based on selective inference (SI), which enables valid inference for data-dependent inferential targets by conditioning on the selection event~\citep{benjamini2009selective, fithian2014optimal, lee2016exact, tibshirani2016exact}.
We refer to this framework as \textit{post-ADC inference}.
In SI, inference is performed not by treating the target as fixed in advance, but by conditioning on the fact that it was selected after observing the data.
The key novelty of our setting is that selection occurs at two levels: not only is the inferential target constructed from the data, but the data themselves are collected through an adaptive process.
We therefore treat the entire pipeline—from the ADC process to the data-driven construction of the inferential target—as a single selection event and perform inference conditional on it.
This allows us to properly account for the bias induced by ADC.
We show that, for a broad class of ADC algorithms, including GP-UCB and TPE, the conditional distribution of the test statistic can be characterized in a closed form.
This enables the construction of valid $p$-values that control type-I error and confidence intervals with valid coverage, even when the data are reused for different tasks after ADC.
The proposed post-ADC inference provides exact finite-sample guarantees of statistical significance and coverage without requiring any assumptions about the underlying black-box function or the quality of the surrogate model, except for assumptions on the observation noise.

\paragraph{Related work.}
SMBO is a standard framework for black-box optimization, in which a surrogate model is fitted to past observations and the next evaluation point is selected based on its predictions.
Depending on the design of the surrogate model, SMBO methods can be broadly classified into three categories~\cite{jones1998efficient,srinivas2010gaussian,bergstra2011algorithms,hutter2011sequential,frazier2018tutorial}.
The first category consists of Gaussian process–based methods, which model the objective function with a Gaussian process (GP) surrogate and use acquisition functions, such as expected improvement and GP-UCB, to select evaluation points based on predictive means and uncertainty estimates~\cite{jones1998efficient,srinivas2010gaussian}.
The second category consists of density estimation-based methods such as TPE, which use density ratios between high- and low-performing regions and work well with conditional or discrete search spaces~\cite{bergstra2011algorithms}.
The third category consists of ensemble-based methods such as sequential model-based algorithm configuration (SMAC), which use random forest surrogates and are robust in nonlinear mixed-type spaces~\cite{hutter2011sequential}.
Despite these differences, all concentrate evaluations in regions that past observations make promising, so the collected data are typically not i.i.d. but biased.
Existing SMBO research has therefore emphasized optimization performance, sample efficiency, and regret, while the validity of post hoc inference from actively collected data remains underexplored.

SI is a framework for valid statistical inference after hypotheses or targets are selected from the data~\cite{benjamini2009selective,fithian2014optimal,lee2016exact,tibshirani2016exact}.
Early work focused on variable selection in linear regression, including Lasso, marginal screening, and stepwise selection~\cite{fithian2014optimal,lee2016exact,tibshirani2016exact,taylor2018post,tian2018selective,tian2017asymptotics,yang2016selective,liu2018more,hyun2018exact,benjamini2020selective,taylor2014post,fithian2015selective}.
It has since been extended to clustering, change-point detection, and image or deep learning interpretation, yielding error-rate and coverage guarantees for data-driven targets~\cite{gao2022selective,duy2022quantifying,miwa2023valid,shiraishi2025statistical,matsukawa2025statistical,markovic2017unifying}.
Most SI studies, however, assume a fixed sampling mechanism and address selection only after the dataset is observed.
Related work on adaptive experiments and bandits treats sequentially changing data collection, for example in treatment assignment or arm selection~\cite{nie2018adaptive_bias,freidling2024sri,chen2023optimal}.
For post-ADC analysis, where the surrogate model and acquisition function make the sampling rule highly data-dependent, a principled SI framework remains largely unexplored.
In this study, we propose an SI framework that treats the ADC process and the construction of post hoc inferential targets in a unified manner.

\paragraph{Contributions.}
Our contributions are threefold.
First, in \S~\ref{sec:selective_pvalues}, we formulate post-ADC inference by conditioning on a unified selection event that combines the ADC process and the data-driven construction of the inferential target, thereby yielding valid selective $p$-values and confidence intervals.
Second, in \S~\ref{sec:exact_computation}, we show that, for a broad class of ADC processes, the conditional distribution of the test statistic admits a closed-form expression, enabling exact finite-sample computation of these quantities.
Third, as PoC studies, we consider the post-ADC setting for data collected by GP-UCB and TPE, and empirically demonstrate that the post-ADC inference produces valid inferential results.
For reproducibility, we provide the implementation code at \url{https://github.com/ni-shu/paper_post-adc-inference}.

\newpage
\section{Active Data Collection}
\label{sec:problem_setup}

We consider ADC on a finite candidate set
\(
\mathcal{X}=\{\bm{x}^{(1)},\ldots,\bm{x}^{(M)}\}\subset \R^d
\).
At step \(n \in [N-1] := \{1, 2, \ldots, N-1\}\), the algorithm has access to the dataset
\(
\mathcal D_n=((\bm x_1,y_1),\ldots,(\bm x_n,y_n))
\in
(\mathcal X\times\mathbb R)^n .
\)
An ADC algorithm is represented by a sequence of update rules
\(
\mathcal A^{\rm ADC}:(\mathcal X\times\mathbb R)^n\to\mathcal X,
\)
and the next query point \(\bm x_{n+1}\) is given by
\begin{equation}
\bm x_{n+1}
=
\mathcal A^{\rm ADC}(\mathcal D_n).
\label{eq:adc_map}
\end{equation}
Often, this update rule is expressed as the maximization of an acquisition score $a(\bm{x}; \mathcal{D}_n)$:
\begin{equation}
  \bm x_{n+1} =
  \arg\max_{\bm{x}\in\mathcal{X}} a(\bm{x};\mathcal{D}_n).
  \label{eq:adc_acquisition_maximization}
\end{equation}
The following TPE and GP-UCB are representative examples of ADC algorithms that arise in this form.
For notational simplicity, we write
\(\bm x_{1:n}:=(\bm x_1,\ldots,\bm x_n)\)
and
\(\bm y_{1:n}:=(y_1,\ldots,y_n)^\top\).

\paragraph{Example~1: TPE.}

Fix a quantile level \(\gamma\in(0,1)\) and set \(m_n=\lceil \gamma n\rceil\). For \(\bm{y}_{1:n}\), let \(\mathcal{H}_n(\bm{y}_{1:n})\subset[n]\) denote the indices of the top \(m_n\) responses and let \(\mathcal{L}_n(\bm{y}_{1:n})=[n]\setminus \mathcal{H}_n(\bm{y}_{1:n})\) denote the remaining indices.
Given a positive kernel
\(
\kappa_{\mathrm{TPE}}:\mathcal{X}\times\mathcal{X}\to(0,\infty)
\),
define
\begin{equation}
  \begin{aligned}
  g_n(\bm{x})
  &=
  \frac{1}{|\mathcal{H}_n(\bm{y}_{1:n})|}
  \sum_{i\in\mathcal{H}_n(\bm{y}_{1:n})}
  \kappa_{\mathrm{TPE}}(\bm{x},\bm{x}_i),\\
  \ell_n(\bm{x})
  &=
  \frac{1}{|\mathcal{L}_n(\bm{y}_{1:n})|}
  \sum_{i\in\mathcal{L}_n(\bm{y}_{1:n})}
  \kappa_{\mathrm{TPE}}(\bm{x},\bm{x}_i).
  \end{aligned}
  \label{eq:tpe_density_estimates}
\end{equation}
Then, the TPE acquisition score is
\begin{equation}
  a^{\mathrm{TPE}}(\bm{x};\mathcal{D}_n)
  =
  \frac{g_n(\bm{x})}{\ell_n(\bm{x})}.
  \label{eq:tpe_score}
\end{equation}
The corresponding ADC algorithm \(\mathcal A^{\mathrm{TPE}}\) is any deterministic maximizer of \(a^{\mathrm{TPE}}(\bm{x};\mathcal{D}_n)\) over \(\mathcal X\).

\paragraph{Example~2: GP-UCB.}
Suppose the optimizer uses a Gaussian-process surrogate with kernel
\(
k:\mathcal{X}\times\mathcal{X}\to\R
\).
Then the surrogate posterior mean and variance are
\begin{equation}
  \begin{aligned}
  \mu_n(\bm{x};\bm{y}_{1:n})
  &=
  \bm{k}_n(\bm{x})^\top
  (K_n+\sigma^2 I_n)^{-1}\bm{y}_{1:n},\\
  s_n^2(\bm{x})
  &=
  k(\bm{x},\bm{x})
  -
  \bm{k}_n(\bm{x})^\top
  (K_n+\sigma^2 I_n)^{-1}
  \bm{k}_n(\bm{x}).
  \end{aligned}
  \label{eq:gp_posterior}
\end{equation}
where
\(K_n\in\R^{n\times n}\) has entries
\([K_n]_{ij}=k(\bm{x}_i,\bm{x}_j)\), \(\sigma^2\) is the response-noise variance, and
\[
  \bm{k}_n(\bm{x})
  =
  (k(\bm{x},\bm{x}_1),\ldots,k(\bm{x},\bm{x}_n))^\top
  \in \R^n .
\]
Under this surrogate, the GP-UCB acquisition score is
\begin{equation}
  a^{\mathrm{GP\mathchar`-UCB}}(\bm{x};\mathcal{D}_n)
  =
  \mu_n(\bm{x};\bm{y}_{1:n})+\kappa s_n(\bm{x}),
  \label{eq:GPUCB}
\end{equation}
where \(\kappa>0\) controls the exploration-exploitation tradeoff.
The corresponding ADC algorithm \(\mathcal A^{\mathrm{GP\mathchar`-UCB}}\) is any deterministic maximizer of \(a^{\mathrm{GP\mathchar`-UCB}}(\bm{x};\mathcal{D}_n)\) over \(\mathcal X\).

Examples~1 and 2 illustrate representative ADC algorithms covered by our framework.
More generally, the framework applies to a broad class of ADC algorithms, which is formally characterized in Appendix~\ref{app:admissibility_framework}.
Roughly speaking, it covers algorithms whose acquisition rules can be decomposed into a sequence of piecewise-linear operations.
For rules such as \eqref{eq:tpe_score} and \eqref{eq:GPUCB}, this condition is satisfied whenever comparisons between candidate query points can be characterized by linear inequalities in the responses \(\bm{y}_{1:n}\).
Consequently, the framework applies to many SMBO algorithms, including the TPE and GP-UCB methods considered here.

We fix a collection budget \(N\) and run the ADC algorithm for \(N\) queries, producing the collected dataset \(\mathcal D_N\).
Our goal is not to analyze or improve those ADC methods themselves.
Rather, our goal is to carry out statistically valid inference that properly accounts for the data collection bias created by those ADC mechanisms.

\newpage
\section{Target of Inference and Difficulty of Naive Inference}
\label{sec:target_of_inference}

To formulate post-ADC inference, we now introduce the data model and the target of inference.

\subsection{Data Model for Post-ADC Inference}
\label{sec:data_model}

Let \(f_\star:\mathcal{X}\to\R\) be an unknown non-random objective function.
At ADC step \(i\), the algorithm queries a point \(\bm{x}_i\in\mathcal{X}\) and observes
\begin{equation}
  Y_i = f_\star(\bm{x}_i) + \varepsilon_i,
  \qquad
  \varepsilon_i \overset{\mathrm{i.i.d.}}{\sim} \mathcal{N}(0,\sigma^2),
  \label{eq:obs_model}
\end{equation}
where the noise variance \(\sigma^2>0\) is assumed to be either known or estimable from an independent dataset.
Importantly, the validity results below rely only on the noise model in \eqref{eq:obs_model} and do not require any assumption on $f_\star$ or on the accuracy of the surrogate model used by the ADC algorithm.

We regard the responses $y_1,\ldots,y_N$ observed during ADC in \S~\ref{sec:problem_setup} as realizations of \eqref{eq:obs_model}.
In the following, we use uppercase notation for random variables, $\bm{Y}_{1:n}=(Y_1,\ldots,Y_n)^\top$, and lowercase notation for realizations, $\bm{y}_{1:n}=(y_1,\ldots,y_n)^\top$.
Similarly, we write the dataset up to step $n$ as $\mathcal{D}_n(\bm{Y}_{1:n})= ((\bm x_1,Y_1),\ldots,(\bm x_n,Y_n))$ when random, and as $\mathcal{D}_n(\bm{y}_{1:n}) = ((\bm x_1,y_1),\ldots,(\bm x_n,y_n))$ when realized.
Furthermore, for notational simplicity, let $\bm{Y}=\bm{Y}_{1:N}\in\R^N$ denote the full response vector after ADC, and let $\bm{y}=\bm{y}_{1:N}\in\R^N$ denote its realization.

\subsection{Target of Inference and Validity Goals}
\label{subsec:target_of_inference}

We now define a post-ADC target of inference based on the collected data $\mathcal{D}_N(\bm{Y})$.
First, we introduce representative examples of data-driven targets in Examples~1 and 2.

\paragraph{Example~1: Two-sample inference for high versus low region.}
This example is illustrated in Figure~\ref{fig:intro_demonstration}.
The high and low regions are defined as the axis-aligned hypercubes of side length \(\ell\) with the largest and smallest average observed responses, respectively:
\begin{equation}
  \mathcal{I}_{\bm{Y}} \in \operatorname*{arg\,max}_{\mathcal{C}(\bm{s};\ell)|\mathcal{C}(\bm{s};\ell)\neq\emptyset, \bm{s}\in\R^d} \bar Y(\mathcal{C}),
  \qquad
  \mathcal{J}_{\bm{Y}} \in \operatorname*{arg\,min}_{\mathcal{C}(\bm{s};\ell)|\mathcal{C}(\bm{s};\ell)\neq\emptyset, \bm{s}\in\R^d} \bar Y(\mathcal{C}),
  \label{eq:region_selector}
\end{equation}
where $\ell>0$ is the side length of the sliding window, $\bm{s}\in\R^d$ is a sliding-window anchor,
\(
\mathcal{C}(\bm{s};\ell)=\{t\in[N]:\ \bm{x}_t \in \prod_{j=1}^d [s_j,s_j+\ell]\}
\)
is the sliding-window index set, \(\bar Y(\mathcal{C})=|\mathcal{C}|^{-1}\sum_{t\in\mathcal{C}} Y_t\) is the sample mean over the sliding window \(\mathcal{C}\), and \(\emptyset\) is the empty set.
The target of inference is then the true mean difference between these two regions:
\begin{equation}
  \Delta_{\bm{Y}}
  =
  \frac{1}{|\mathcal{I}_{\bm{Y}}|}
  \sum_{t\in\mathcal{I}_{\bm{Y}}} f_\star(\bm{x}_t)
  -
  \frac{1}{|\mathcal{J}_{\bm{Y}}|}
  \sum_{t\in\mathcal{J}_{\bm{Y}}} f_\star(\bm{x}_t).
  \label{eq:selected_effect_sum}
\end{equation}
This problem is similar to a standard two-sample test, but differs in that the comparison target is selected in a data-driven manner after the ADC process.
For example, in applications such as scientific parameter optimization, this target corresponds to the true performance gap between high- and low-performance regions in the parameter space.

\paragraph{Example~2: One-sample inference for top-$n$ performance.}
For a given integer \(n\in[N]\), let
\[
  \mathcal I_{\bm Y}^{\mathrm{top},n}
  \in
  \operatorname*{arg\,max}_{\mathcal S\subset[N]:\ |\mathcal S|=n}
  \sum_{t\in\mathcal S} Y_t .
\]
The target of inference is the average true response over the selected
top-\(n\) queried points:
\[
  \Delta^{\mathrm{top}}_{\bm Y}
  =
  \frac{1}{n}
  \sum_{t\in\mathcal I_{\bm Y}^{\mathrm{top},n}} f_\star(\bm x_t).
\]
This problem can be viewed as a data-driven one-sample inference problem.
For example, in applications such as hyperparameter optimization of machine learning models, this target corresponds to the true performance of the top-\(n\) parameter configurations found by ADC.

Both Examples~1 and~2 are special cases of a general class of data-dependent targets of the form
\begin{equation}
  \Delta_{\bm{Y}}
  =
  \bm{\eta}_{\bm{Y}}^\top \bm{\mu},
  \label{eq:inference_target}
\end{equation}
where \(\bm{\mu}=(f_\star(\bm{x}_1),\ldots,f_\star(\bm{x}_N))^\top\).
For Example~1, the weight vector is \(\bm{\eta}^{\mathrm{reg}}_{\bm{Y}}:=|\mathcal{I}_{\bm{Y}}|^{-1}\bm{1}_{\mathcal{I}_{\bm{Y}}}-|\mathcal{J}_{\bm{Y}}|^{-1}\bm{1}_{\mathcal{J}_{\bm{Y}}}\),
where \(\bm{1}_{\mathcal{C}}\in\{0,1\}^N\) is the indicator vector of \(\mathcal{C}\subset[N]\).
For Example~2, the weight vector is \(
\bm\eta^{\mathrm{top}}_{\bm Y}
  :=
  n^{-1}\bm 1_{\mathcal I_{\bm Y}^{\mathrm{top},n}}
\).
Note that \(\bm\eta_{\bm Y}\) is a random variable that depends on the data \(\bm Y\).
Naturally, a test statistic for \eqref{eq:inference_target} is defined by
\begin{equation}
  T(\bm{Y})
  =
  \bm{\eta}_{\bm{Y}}^\top \bm{Y}.
  \label{eq:post_bo_random_stat}
\end{equation}

Examples~1 and~2 show typical data-dependent targets covered by our framework.
The common feature behind these examples is that the target is constructed after observing the data, through the map
\(\bm{Y}\mapsto\bm{\eta}_{\bm{Y}}\).
Our framework covers a broad class of such target-construction rules, formally defined in Appendix~\ref{app:admissibility_framework}.
Informally, the required condition is that the construction of \(\bm{\eta}_{\bm{Y}}\) can be described by a sequence of piecewise-linear operations.
This condition is satisfied, for example, when the weight vector is chosen through comparisons among the observed responses, as long as those comparisons can be characterized as linear inequalities in \(\bm{Y}\).
Thus, the framework includes a wide range of data-dependent targets, with data-independent targets appearing as a special case.
The two examples above satisfy this requirement.
Appendix~\ref{app:additional_targets} provides several additional examples.

One inference goal is to construct a valid confidence interval for \(\Delta_{\bm{Y}}\) using the test statistic \(T(\bm{Y})\).
A valid \((1-\alpha)\)-confidence interval \(C_{1-\alpha}^{\Delta}(\bm{Y})\) should satisfy
\begin{equation}
  \Prob\!\left(
    \Delta_{\bm{Y}} \in C_{1-\alpha}^{\Delta}(\bm{Y})
  \right)
  =
  1-\alpha.
  \label{eq:valid_ci}
\end{equation}
In Example~1, this corresponds to estimating the true performance difference between the high-performance and low-performance regions, while controlling the miscoverage probability at $\alpha$.

Another inference goal is to construct a valid $p$-value for testing the hypotheses:
\begin{equation}
  \mathrm{H}_{0,\bm{Y}}:\ \Delta_{\bm{Y}} \le 0
  \qquad
  \text{vs.}
  \qquad
  \mathrm{H}_{1,\bm{Y}}:\ \Delta_{\bm{Y}} > 0.
  \label{eq:post_bo_random_null}
\end{equation}
Formally, a valid $p$-value should satisfy
\begin{equation}
  \Prob_{\mathrm{H}_0}(p \le \alpha) \le \alpha
  \qquad
  \text{for all }
  \alpha\in(0,1).
  \label{eq:valid_pvalue}
\end{equation}
In Example~1, this tests whether the high-performance region significantly outperforms the low-performance region, rather than reflecting random fluctuation, while controlling the false finding probability at level $\alpha$.

Although we have focused here on a one-sided test and a two-sided confidence interval, our framework also covers the corresponding inverted procedures, namely the equal-tailed two-sided test and the lower one-sided confidence interval.
Appendix~\ref{app:formal_selective_inference} provides their formal definitions and properties.

\subsection{Why Naive Inference Fails}
\label{subsec:selection_bias}
The key challenge in this post-ADC analysis is that the same response vector \(\bm{Y}\) is used three times in this workflow.
First, \(\bm{Y}\) determines the ADC trajectory \(\mathcal T_{\bm Y}=(\bm x_1,\ldots,\bm x_N)\), i.e., the sequence of queried locations.
Thus, the ADC trajectory may tend to select locations with favorable outcomes for \(\bm Y\).
For example, in an ADC procedure such as SMBO for maximization, regions with high observed values tend to be explored more intensively, whereas regions that appear low due to random noise may be left underexplored.
Second, after this data-driven trajectory has been determined, the same \(\bm{Y}\) are used to define the regions
\(\mathcal{I}_{\bm{Y}}\) and \(\mathcal{J}_{\bm{Y}}\), and hence the weight vector
\(\bm{\eta}_{\bm{Y}}\), the target \(\Delta_{\bm{Y}}\), and the hypotheses in
\eqref{eq:post_bo_random_null}.
The important point is that \(\bm{\eta}_{\bm{Y}}\) is itself a function of \(\bm{Y}\).
For example, if the regions are chosen so that the observed difference along the ADC trajectory appears large, then the resulting
\(\bm{\eta}_{\bm{Y}}\) can reflect not only the underlying signal but also favorable random fluctuations in the observed responses.
Finally, the same \(\bm{Y}\) are used to compute the test statistic \(T(\bm{Y})\) and to perform inference.
Naive post-ADC inference ignores these multiple uses of data and treats the queried locations and the weight vector
\(\bm{\eta}_{\bm{Y}}\) as if they had been fixed in advance.
This is a statistically inappropriate procedure, and the validity guarantees in \eqref{eq:valid_pvalue} and \eqref{eq:valid_ci} are no longer guaranteed to hold.
As a result, naive post-ADC inference can exhibit inflated type-I error and confidence intervals whose coverage falls below the nominal level.
This is the same selection-bias phenomenon studied throughout the selective-inference literature
\citep{fithian2014optimal, lee2016exact}.
We will later confirm these failures empirically in \S~\ref{sec:experiments}.

\newpage
\section{Selective Inference for Post-ADC Analysis}
\label{sec:selective_pvalues}

As discussed in \S~\ref{subsec:selection_bias}, the main obstacle in post-ADC inference is that the locations of the queried points, the definition of the inferential target, and the test statistic all depend on the same data $\bm{Y}$.
We now propose a post-ADC inference framework to overcome this obstacle by leveraging SI principles.
The key idea is to condition on the selection event that determines the inferential target.

\subsection{Conditioning on the Selection Event}
\label{subsec:si_concept}
In the setup described in \S~\ref{sec:data_model} and \S~\ref{sec:target_of_inference}, the data are used to select both the queried trajectory \(\mathcal{T}_{\bm{Y}}\) and the data-dependent weight vector $\bm{\eta}_{\bm{Y}}$ that defines the inferential target.
To account for the bias induced by these data dependencies, we condition on the entire selection process \(\mathcal{E}_{\bm{Y}}
= (\mathcal{T}_{\bm{Y}},\bm{\eta}_{\bm{Y}})\) by its realization \(\mathcal{E}_{\bm{y}}\) and perform inference conditional on the selection event:
\begin{equation}
  \{\mathcal{E}_{\bm{Y}}=\mathcal{E}_{\bm{y}}\}
  =\{\mathcal{T}_{\bm{Y}}=\mathcal{T}_{\bm{y}},\ \bm{\eta}_{\bm{Y}}=\bm{\eta}_{\bm{y}}\}.
  \label{eq:selection_event}
\end{equation}
We then consider the conditional distribution of the test statistic given the selection event:
\begin{equation}
  T(\bm{Y})
  \ \big|\
  \{\mathcal{E}_{\bm{Y}}=\mathcal{E}_{\bm{y}}\}.
  \label{eq:conditional_test_statistic}
\end{equation}
This conditioning plays two key roles.
\paragraph{Conditioning on \(\mathcal{T}_{\bm{Y}}\).}
If the queried trajectory \(\mathcal{T}_{\bm{Y}}\) were fixed in advance, then \(\bm{Y}\) follows the fixed-mean Gaussian model:
\begin{equation}
  \bm{Y}\sim N(\bm{\mu},\Sigma),
  \qquad
  \bm{\mu}=(f_\star(\bm{x}_1),\ldots,f_\star(\bm{x}_N))^\top,
  \qquad
  \Sigma=\sigma^2 I_N.
  \label{eq:observed_model_gaussian}
\end{equation}
Conditioning on \(\mathcal{T}_{\bm{Y}}=\mathcal{T}_{\bm{y}}\) therefore restricts the above Gaussian distribution to the subset of \(\bm{Y}\) that reproduce the same queried trajectory \(\mathcal{T}_{\bm{y}}\).
Although the resulting conditional distribution is not an ordinary Gaussian distribution, it retains the Gaussian density structure, making it substantially more tractable than the unconditional distribution induced by the ADC process.

\paragraph{Conditioning on \(\bm{\eta}_{\bm{Y}}\).}
This fixes the target of inference \(\Delta_{\bm{Y}}=\bm{\eta}_{\bm{Y}}^\top\bm{\mu}\) and the corresponding hypotheses in \eqref{eq:post_bo_random_null}.
For the Example~1 in \S~\ref{subsec:target_of_inference}, fixing \(\bm{\eta}_{\bm{Y}}\) is equivalent to fixing the selected high and low regions.
Let \(\bm{\eta}=\bm{\eta}_{\bm{y}}\) and \(T_{\bm{\eta}}(\bm{Y})=\bm{\eta}^\top\bm{Y}\).
Then the inferential target in \eqref{eq:inference_target} is fixed at
\(
\Delta^{\mathrm{sel}}=\bm{\eta}^\top\bm{\mu}
\)
and the hypotheses in \eqref{eq:post_bo_random_null} become
\(\mathrm H_0:\Delta^{\mathrm{sel}}\le 0\) vs. \(\mathrm H_1:\Delta^{\mathrm{sel}}>0\).

Thus, after conditioning on the selection event, the statistic of interest \(\bm{\eta}^\top\bm{Y}\) becomes a linear function of \(\bm{Y}\), while the conditional distribution retains the Gaussian density structure restricted to the selected data space.
This allows us to characterize the conditional distribution in \eqref{eq:conditional_test_statistic} in a relatively tractable form in \S~\ref{sec:exact_computation}.

\subsection{Selective $p$-Values and Confidence Intervals}
\label{subsec:inferential_outputs}

The conditional distribution in \eqref{eq:conditional_test_statistic} still involves nuisance parameters in addition to the effect \(\Delta^{\mathrm{sel}}\).
Under \eqref{eq:observed_model_gaussian}, decompose the mean vector as
\begin{equation}
  \bm{\mu}
  =
  \underbrace{\Delta^{\mathrm{sel}}\,\frac{\Sigma\bm{\eta}}{\bm{\eta}^\top\Sigma\bm{\eta}}}_{\text{parameter of interest}}
  +
  \underbrace{
  \left(
    I_N
    -
    \frac{\Sigma\bm{\eta}\bm{\eta}^\top}
         {\bm{\eta}^\top \Sigma \bm{\eta}}
  \right)\bm{\mu}
  }_{\text{nuisance parameter (denoted as \(\bm{\mu}^{\mathrm{nui}}\))}}.
  \label{eq:observed_parameter_decomposition}
\end{equation}
Thus \(\Delta^{\mathrm{sel}}\) is the parameter of interest, while \(\bm{\mu}^{\mathrm{nui}}\) collects the nuisance component orthogonal to the selected weight vector.
To identify the conditional distribution in \eqref{eq:conditional_test_statistic}, we need to eliminate the nuisance parameter, which is difficult to marginalize out.
We therefore condition on its sufficient statistic, namely the residual after projection onto the weight-vector direction:
\begin{equation}
  \mathcal{Q}_{\bm{Y}}
  =
  \left(
    I_N
    -
    \frac{\Sigma\bm{\eta}\bm{\eta}^\top}
         {\bm{\eta}^\top \Sigma \bm{\eta}}
  \right)\bm{Y}.
  \label{eq:nuisance_parameter}
\end{equation}
A formal statement of this sufficient statistic and its proof are provided in Appendix~\ref{app:nuisance_reduction} and~\ref{app:proof_observable_nuisance_sufficiency} respectively.
Therefore, the conditional test statistic used for inference is
\begin{align}
  T_{\bm{\eta}}(\bm{Y}) \mid
  \left\{
    \mathcal{E}_{\bm{Y}} = \mathcal{E}_{\bm{y}},
    \mathcal{Q}_{\bm{Y}} = \mathcal{Q}_{\bm{y}}
  \right\}.
  \label{eq:conditional_test_statistic_2}
\end{align}

The corresponding cumulative distribution function (CDF) of the statistic is defined as
\begin{equation}
  G_{\Delta}^{\mathrm{sel}}(t)
  =
  \Prob_{\Delta^{\mathrm{sel}}=\Delta}\!\left(
    T_{\bm{\eta}}(\bm{Y})\le t
    \,\middle|\,
    \mathcal{E}_{\bm{Y}}=\mathcal{E}_{\bm{y}},
    \mathcal{Q}_{\bm{Y}}=\mathcal{Q}_{\bm{y}}
  \right).
  \label{eq:selective_cdf}
\end{equation}

The selective $p$-value for the directional claim in \eqref{eq:post_bo_random_null} is evaluated at the boundary value \(\Delta^{\mathrm{sel}}=0\):
\begin{equation}
  p_{\mathrm{sel}}(\bm{y})
  =
  \Prob_{\Delta^{\mathrm{sel}}=0}\!\left(
    T_{\bm{\eta}}(\bm{Y}) \ge T_{\bm{\eta}}(\bm{y})
    \,\middle|\,
    \mathcal{E}_{\bm{Y}}=\mathcal{E}_{\bm{y}},
    \mathcal{Q}_{\bm{Y}}=\mathcal{Q}_{\bm{y}}
  \right).
  \label{eq:selective_pvalue}
\end{equation}

The same conditional distribution also yields a selective confidence interval:
\begin{equation}
  C^{\Delta}_{1-\alpha}(\bm{y})
  =
  \bigl\{\Delta\in\mathbb R: \alpha/2 \le G_{\Delta}^{\mathrm{sel}}(T_{\bm{\eta}}(\bm{y})) \le 1-\alpha/2\bigr\}.
  \label{eq:selective_ci_delta}
\end{equation}

Appendix~\ref{app:formal_selective_inference} provides the corresponding complementary one-sided and two-sided selective outputs.
At this point the inferential outputs are fully specified by selective-inference principles, but their computation still depends on the conditional distribution in \eqref{eq:selective_cdf}.
The next section shows that the conditional distribution reduces to a truncated normal distribution that can be evaluated exactly.

\newpage
\section{Computing Selective $p$-Values and Confidence Intervals}
\label{sec:exact_computation}

In this section, we show that, for a broad class of ADC processes, the conditional distribution of the test statistic in \eqref{eq:conditional_test_statistic} admits a closed-form expression, and thus the selective $p$-values and confidence intervals can be computed exactly under the finite-sample setup.

\subsection{Conditional Distribution of the Test Statistic}
\label{subsec:conditional_distribution}

As discussed in \S~\ref{sec:selective_pvalues}, after conditioning on the selection event, the distribution for the data $\bm{Y}$ retains the Gaussian density structure in \eqref{eq:observed_model_gaussian}, and the data-dependent selected weight vector $\bm{\eta}_{\bm{Y}}$ is fixed at its realized value $\bm{\eta}$.
Consequently, the test statistic follows normal distribution, but truncated to the subset of $\bm{Y}$ that reproduce the same selection event as the observed data.
This observation is formalized in the following lemma.

\begin{lemma}
  \label{lem:line_reduction}
  Assume the Gaussian distribution in \eqref{eq:observed_model_gaussian}, and let \(\bm{y}\) be the observed realization of \(\bm{Y}\) with \(\bm{\eta}_{\bm{y}}\neq \bm{0}\).
  Fix \(\bm{\eta}=\bm{\eta}_{\bm{y}}\).
  Define
  \begin{equation}
    \bm{a}=\mathcal{Q}_{\bm{y}},
    \qquad
    \bm{b}=
    \frac{\Sigma \bm{\eta}}
         {\bm{\eta}^\top \Sigma \bm{\eta}}.
    \label{eq:line_parameters}
  \end{equation}
  Then every response vector on the resulting one-dimensional slice can be written as
  \begin{equation}
    \bm{Y}(z)=\bm{a}+\bm{b}z,
    \qquad
    z=T(\bm{Y}(z)).
    \label{eq:one_dimensional_slice}
  \end{equation}
  Let
  \begin{equation}
    \mathcal{Z}
    =
    \left\{
      z\in\R:
      \mathcal{E}_{\bm{Y}(z)}=\mathcal{E}_{\bm{y}}
    \right\}.
    \label{eq:truncation_set}
  \end{equation}
  Then the conditional distribution of $T(\bm{Y})$ given
  \(
  \mathcal{E}_{\bm{Y}}=\mathcal{E}_{\bm{y}}
  \)
  and
  \(
  \mathcal{Q}_{\bm{Y}}=\mathcal{Q}_{\bm{y}}
  \)
  is a truncated normal distribution:
  \[
  T(\bm{Y})\mid\{\mathcal{E}_{\bm{Y}}=\mathcal{E}_{\bm{y}},\mathcal{Q}_{\bm{Y}}=\mathcal{Q}_{\bm{y}}\}
   \sim \mathrm{TN}(\Delta^{\mathrm{sel}},v_{\bm{\eta}},\mathcal{Z})
  \]
  with mean $\Delta^{\mathrm{sel}}$ and variance $v_{\bm{\eta}}=\bm{\eta}^\top \Sigma \bm{\eta}$, truncated to $\mathcal{Z}$.
\end{lemma}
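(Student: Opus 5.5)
The plan is to follow the standard polyhedral/conditional argument of \citet{lee2016exact}, in three steps: an algebraic decomposition, an independence claim under the fixed-trajectory Gaussian model, and a final conditioning step.

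\textbf{Step 1: decomposition.} For any $\bm{Y}$ we have the identity
\begin{equation}
\bm{Y}=\mathcal{Q}_{\bm{Y}}+\bm{b}\,\bm{\eta}^\top\bm{Y},
\end{equation}
where $\bm{b}=\Sigma\bm{\eta}/(\bm{\eta}^\top\Sigma\bm{\eta})$. This is immediate from the definition \eqref{eq:nuisance_parameter}, since $\mathcal{Q}_{\bm{Y}}=\bm{Y}-\bm{b}\bm{\eta}^\top\bm{Y}$. Hence on the event $\{\mathcal{Q}_{\bm{Y}}=\mathcal{Q}_{\bm{y}}=\bm{a}\}$ we can write $\bm{Y}=\bm{a}+\bm{b}z$ with $z=\bm{\eta}^\top\bm{Y}$. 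To confirm that $z$ is recovered as the statistic, note that $\bm{\eta}^\top\bm{a}=0$ and $\bm{\eta}^\top\bm{b}=1$. Therefore $\bm{\eta}^\top\bm{Y}(z)=z$, and since $\bm{\eta}$ is fixed this gives $T_{\bm{\eta}}(\bm{Y}(z))=z$. On the truncation set, where $\bm{\eta}_{\bm{Y}(z)}=\bm{\eta}$, this also coincides with $T(\bm{Y}(z))$.

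\textbf{Step 2: independence under the Gaussian model.} Under \eqref{eq:observed_model_gaussian}, the pair $(\bm{\eta}^\top\bm{Y},\mathcal{Q}_{\bm{Y}})$ is jointly Gaussian, being a linear image of $\bm{Y}$. Their cross-covariance is
\begin{equation}
\mathrm{Cov}(\mathcal{Q}_{\bm{Y}},\bm{\eta}^\top\bm{Y})=\Bigl(I_N-\tfrac{\Sigma\bm{\eta}\bm{\eta}^\top}{\bm{\eta}^\top\Sigma\bm{\eta}}\Bigr)\Sigma\bm{\eta}=\Sigma\bm{\eta}-\Sigma\bm{\eta}=\bm{0},
\end{equation}
so the two are independent. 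Moreover $\bm{\eta}^\top\bm{Y}\sim N(\bm{\eta}^\top\bm{\mu},\bm{\eta}^\top\Sigma\bm{\eta})=N(\Delta^{\mathrm{sel}},v_{\bm{\eta}})$. The subtle point is that the trajectory $\mathcal{T}_{\bm{Y}}$ is itself random. I will handle this by working on the canonical noise space: $\bm{Y}$ is a deterministic function of the noise $\bm{\varepsilon}$, obtained by running ADC sequentially with $Y_i=f_\star(\bm{x}_i)+\varepsilon_i$. On the event $\{\mathcal{T}_{\bm{Y}}=\mathcal{T}_{\bm{y}}\}$ this map is simply $\bm{\varepsilon}\mapsto\bm{\mu}+\bm{\varepsilon}$, with $\bm{\mu}$ fixed by the realized trajectory. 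Consequently, the joint law of $\bm{Y}$ restricted to this event equals the $N(\bm{\mu},\Sigma)$ density restricted to the set $\{\bm{w}:\mathcal{T}_{\bm{w}}=\mathcal{T}_{\bm{y}}\}$. Here $\mathcal{T}_{\bm{w}}$ denotes the trajectory obtained by feeding responses $\bm{w}$ into the update rules \eqref{eq:adc_map}. This is exactly the sense in which the lemma ``assumes'' \eqref{eq:observed_model_gaussian}.

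\textbf{Step 3: conditioning.} Define the selection set $A=\{\bm{w}:\mathcal{E}_{\bm{w}}=\mathcal{E}_{\bm{y}}\}$. The conditional law of $\bm{\eta}^\top\bm{Y}$ given $\bm{Y}\in A$ and $\mathcal{Q}_{\bm{Y}}=\bm{a}$ is obtained as follows. By independence, conditioning on $\mathcal{Q}_{\bm{Y}}=\bm{a}$ leaves $z$ distributed as $N(\Delta^{\mathrm{sel}},v_{\bm{\eta}})$. By Step 1, the event $\bm{Y}\in A$ then becomes $\bm{a}+\bm{b}z\in A$, which is exactly $z\in\mathcal{Z}$. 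The conditional density is therefore proportional to $\phi((z-\Delta^{\mathrm{sel}})/\sqrt{v_{\bm{\eta}}})\,\mathbf{1}\{z\in\mathcal{Z}\}$, which is the claimed truncated normal.

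\textbf{Main obstacle.} The hardest part is making this conditioning on the measure-zero event $\{\mathcal{Q}_{\bm{Y}}=\bm{a}\}$ rigorous. I would do it via a disintegration argument: for any Borel sets $B$ and $C$,
\begin{equation}
\Prob(\bm{\eta}^\top\bm{Y}\in B,\ \mathcal{Q}_{\bm{Y}}\in C,\ \bm{Y}\in A)=\int_C\int_B \mathbf{1}\{\bm{q}+\bm{b}z\in A\}\,\phi_{\Delta,v}(z)\,dz\,dP_{\mathcal{Q}}(\bm{q}).
\end{equation}
This identity follows from the decomposition in Step 1 and the independence in Step 2, applied on the noise space. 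Dividing by the corresponding quantity with $B=\R$ yields a regular conditional distribution. This requires $A$ to be measurable, which holds because each update rule is piecewise defined. It also requires $\Prob(z\in\mathcal{Z}\mid\mathcal{Q}=\bm{a})>0$; this holds for almost every $\bm{a}$, and for the observed $\bm{y}$ it holds generically because $\mathcal{Z}$ then contains a nondegenerate interval around $\bm{\eta}^\top\bm{y}$ under the piecewise-linear structure.
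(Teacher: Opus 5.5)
Your proposal is correct and follows essentially the same route as the paper's proof. It uses the decomposition $\bm{Y}=\mathcal{Q}_{\bm{Y}}+\bm{b}\,\bm{\eta}^\top\bm{Y}$, derives independence of $\bm{\eta}^\top\bm{Y}$ and $\mathcal{Q}_{\bm{Y}}$ from their zero cross-covariance under the Gaussian model, and restricts the $N(\Delta^{\mathrm{sel}},v_{\bm{\eta}})$ density to $\mathcal{Z}$. The extra pieces make rigorous steps the paper only asserts: the noise-space argument that the law of $\bm{Y}$ on $\{\mathcal{T}_{\bm{Y}}=\mathcal{T}_{\bm{y}}\}$ is the $N(\bm{\mu},\Sigma)$ law restricted to that set, the disintegration for conditioning on $\mathcal{Q}_{\bm{Y}}$, and the remark that $T(\bm{Y}(z))=z$ holds on $\mathcal{Z}$.
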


The proof of the lemma is given in Appendix~\ref{app:proof_line_reduction} and a toy example is provided in Appendix~\ref{subsec:toy_example} to give intuition behind the result.
The statement of Lemma~\ref{lem:line_reduction} indicates that the problem of characterizing the conditional distribution of the test statistic reduces to characterizing the truncation set \(\mathcal{Z}\) along the line \(\bm{Y}(z)=\bm{a}+\bm{b}z\).
Consequently, once \(\mathcal{Z}\) has been computed, the identified selective CDF \(G^{\mathrm{sel}}_\delta\) in \eqref{eq:selective_cdf} directly yields the selective $p$-value in \eqref{eq:selective_pvalue} and the selective confidence interval in \eqref{eq:selective_ci_delta}.
Furthermore, the selective $p$-value is valid, in the sense that it controls the type-I error under the null hypothesis \(\mathrm{H}_0\) in \eqref{eq:post_bo_random_null}.
Similarly, the confidence interval achieves the valid coverage for the selected effect \(\Delta^{\mathrm{sel}}\).
These results are formally stated in the following theorems.
\begin{theorem}
  \label{thm:validity}
  Under the setup of Lemma~\ref{lem:line_reduction}, suppose that \(\Prob(\bm{\eta}_{\bm{Y}}\neq \bm{0})=1\).
  The $p$-value defined in \eqref{eq:selective_pvalue} satisfies
  \begin{equation}
    \Prob_{\mathrm{H}_0}(p_{\mathrm{sel}}(\bm{Y}) \le \alpha) \le \alpha, \quad \forall \alpha \in (0,1).
    \label{eq:validity_theorem}
  \end{equation}
\end{theorem}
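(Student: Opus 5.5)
The plan is the standard selective-inference argument: first establish conditional validity for each fixed selection outcome, then integrate. Fix a realization $\bm{y}$ with $\bm{\eta}_{\bm{y}}\neq\bm{0}$ and write $\bm{\eta}=\bm{\eta}_{\bm{y}}$. By Lemma~\ref{lem:line_reduction}, conditional on $\{\mathcal{E}_{\bm{Y}}=\mathcal{E}_{\bm{y}},\mathcal{Q}_{\bm{Y}}=\mathcal{Q}_{\bm{y}}\}$, the statistic $T_{\bm{\eta}}(\bm{Y})$ follows $\mathrm{TN}(\Delta^{\mathrm{sel}},v_{\bm{\eta}},\mathcal{Z})$. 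Let $F_{\Delta}$ denote its CDF. The $p$-value in \eqref{eq:selective_pvalue} is then $1-F_0(T_{\bm{\eta}}(\bm{y}))$, where $F_0$ is the CDF with $\Delta=0$ (evaluated via left limits if needed, although $F_0$ is continuous because the truncated normal has a density).

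\textbf{Step 1 (boundary case $\Delta^{\mathrm{sel}}=0$).} The probability integral transform shows that, conditionally, $1-F_0(T)$ is exactly Uniform$(0,1)$. This requires $\mathcal{Z}$ to have positive Lebesgue measure, which holds almost surely for the conditioning events that actually occur.

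\textbf{Step 2 (interior of the null, $\Delta^{\mathrm{sel}}<0$).} Here I use the fact that the truncated normal family $\{\mathrm{TN}(\Delta,v,\mathcal{Z})\}_\Delta$ has monotone likelihood ratio in $t$. The density ratio is proportional to $\exp(t(\Delta_1-\Delta_0)/v)$ on the common support $\mathcal{Z}$. MLR implies stochastic ordering, so $F_{\Delta}(t)\ge F_0(t)$ for $\Delta\le 0$. Hence
\[
\Prob_{\Delta}\bigl(1-F_0(T)\le\alpha\bigr)\le \Prob_{\Delta}\bigl(1-F_\Delta(T)\le \alpha\bigr)=\alpha .
\]
This gives, for every conditioning event compatible with the null,
\[
\Prob_{\mathrm{H}_0}\bigl(p_{\mathrm{sel}}(\bm{Y})\le\alpha \mid \mathcal{E}_{\bm{Y}}=\mathcal{E}_{\bm{y}},\mathcal{Q}_{\bm{Y}}=\mathcal{Q}_{\bm{y}}\bigr)\le\alpha .
\]

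\textbf{Step 3 (removing the conditioning).} I would use the tower property. The difficulty is that $\mathrm{H}_0$ is itself random, since it reads $\bm{\eta}_{\bm{Y}}^\top\bm{\mu}\le 0$. I interpret $\Prob_{\mathrm{H}_0}(p\le\alpha)$ as $\Prob(p\le\alpha\mid \bm{\eta}_{\bm{Y}}^\top\bm{\mu}\le 0)$, or equivalently as the joint probability bound
\[
\Prob\bigl(p\le\alpha,\ \bm{\eta}_{\bm{Y}}^\top\bm{\mu}\le0\bigr)\le\alpha\,\Prob\bigl(\bm{\eta}_{\bm{Y}}^\top\bm{\mu}\le0\bigr).
\]
The event $\{\bm{\eta}_{\bm{Y}}^\top\bm{\mu}\le 0\}$ is measurable with respect to $\mathcal{E}_{\bm{Y}}$, so it is constant on each conditioning cell. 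Integrating the conditional bound over the cells on which the null holds then gives the result. The assumption $\Prob(\bm{\eta}_{\bm{Y}}\neq\bm{0})=1$ guarantees that Lemma~\ref{lem:line_reduction} applies almost surely.

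\textbf{Main obstacle.} I expect the main obstacle to be the measure-theoretic care in Step 3. The conditioning event $\{\mathcal{Q}_{\bm{Y}}=\mathcal{Q}_{\bm{y}}\}$ has probability zero, so it must be handled as a regular conditional distribution. My approach would be to view $(\mathcal{E}_{\bm{Y}},\mathcal{Q}_{\bm{Y}})$ as a random element and to work with the conditional law given the $\sigma$-field it generates. I would also use that $\mathcal{E}_{\bm{Y}}$ takes countably many values: the trajectory lies in the finite set $\mathcal{X}^N$, and $\bm{\eta}$ ranges over finitely many index-set choices. This makes the decomposition over $\mathcal{E}$ a countable sum, and the continuous $\mathcal{Q}$ part is handled by the independence of $\bm{\eta}^\top\bm{Y}$ and $\mathcal{Q}_{\bm{Y}}$ under the fixed-$\bm{\eta}$ Gaussian model, on which Lemma~\ref{lem:line_reduction} already relies.
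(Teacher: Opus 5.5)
Your proposal is correct and follows essentially the same route as the paper. Lemma~\ref{lem:line_reduction} gives the truncated normal conditional law, and the monotone likelihood ratio in $t$ makes $\Delta^{\mathrm{sel}}=0$ least favorable; the paper phrases this through a critical value $c_\alpha$ with $\bar F_{0,\mathcal{Z}}(c_\alpha)=\alpha$ rather than your equivalent comparison of $1-F_0$ with $1-F_\Delta$, and then integrates the conditional bound over $(\mathcal{E}_{\bm{Y}},\mathcal{Q}_{\bm{Y}})$. Your Step~3 is a more careful version of the paper's one-line ``taking expectations'' step: it treats the random null $\{\bm{\eta}_{\bm{Y}}^\top\bm{\mu}\le 0\}$ as an $\mathcal{E}_{\bm{Y}}$-measurable event and decomposes over the countably many selection outcomes.
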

\begin{theorem}
  \label{thm:ci_validity}
  Under the assumptions of Theorem~\ref{thm:validity},
  the confidence interval defined in \eqref{eq:selective_ci_delta} satisfies
  \begin{equation}
    \Prob\!\left(
      \Delta_{\bm{Y}} \in C^{\Delta}_{1-\alpha}(\bm{Y})
    \right)
    =1-\alpha, \quad \forall \alpha \in (0,1).
    \label{eq:ci_marginal_validity}
  \end{equation}
\end{theorem}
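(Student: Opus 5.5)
The statement to prove is Theorem~\ref{thm:ci_validity}, the coverage claim. I will prove it by first showing \emph{conditional} exact coverage given the selection event and the nuisance statistic, and then marginalizing. Fix a realization and write $\bm{\eta}=\bm{\eta}_{\bm{y}}$.

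On the event $\{\mathcal{E}_{\bm{Y}}=\mathcal{E}_{\bm{y}},\mathcal{Q}_{\bm{Y}}=\mathcal{Q}_{\bm{y}}\}$ we have $\Delta_{\bm{Y}}=\bm{\eta}^\top\bm{\mu}=\Delta^{\mathrm{sel}}$. By Lemma~\ref{lem:line_reduction}, conditionally on this event $T_{\bm{\eta}}(\bm{Y})\sim\mathrm{TN}(\Delta^{\mathrm{sel}},v_{\bm{\eta}},\mathcal{Z})$. Its CDF at the true parameter, $G^{\mathrm{sel}}_{\Delta^{\mathrm{sel}}}$, is therefore continuous and strictly increasing on $\mathcal{Z}$. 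This needs $\mathcal{Z}$ to have positive Lebesgue measure, which I would justify by noting that $\mathcal{Z}$ is a finite union of intervals by the piecewise-linear admissibility assumption of Appendix~\ref{app:admissibility_framework}; degenerate realizations then form a null set. The probability integral transform now gives
\[
U:=G^{\mathrm{sel}}_{\Delta^{\mathrm{sel}}}\bigl(T_{\bm{\eta}}(\bm{Y})\bigr)\,\big|\,\{\mathcal{E}_{\bm{Y}}=\mathcal{E}_{\bm{y}},\mathcal{Q}_{\bm{Y}}=\mathcal{Q}_{\bm{y}}\}\sim\mathrm{Unif}(0,1).
\]

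By definition \eqref{eq:selective_ci_delta}, $\Delta^{\mathrm{sel}}\in C^{\Delta}_{1-\alpha}(\bm{Y})$ holds exactly when $\alpha/2\le U\le 1-\alpha/2$. The conditional probability of this event is therefore $1-\alpha$. I stress that no monotonicity of $\Delta\mapsto G^{\mathrm{sel}}_\Delta(t)$ is needed here. Monotonicity (the truncated normal family having monotone likelihood ratio in its mean) would only be invoked to show that the set is in fact an interval; it plays no role in the coverage claim.

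The step I expect to be most delicate is the marginalization. The conditioning events, indexed by the realized $(\mathcal{E},\mathcal{Q})$, partition the sample space up to a null set, since $\Prob(\bm{\eta}_{\bm{Y}}\neq\bm{0})=1$ guarantees that $\bm{\eta}$ and hence $\mathcal{Q}$ are well defined almost surely. The trajectory $\mathcal{T}_{\bm{Y}}$ and the vector $\bm{\eta}_{\bm{Y}}$ take finitely or countably many values, because $\mathcal{X}$ is finite and the selection rules are discrete. So I would sum over the realized values of $\mathcal{E}$ and, within each one, integrate over $\mathcal{Q}$ using the tower property:
\[
\Prob(\Delta_{\bm{Y}}\in C^{\Delta}_{1-\alpha}(\bm{Y}))=\mathbb{E}\bigl[\Prob(\cdot\mid\mathcal{E}_{\bm{Y}},\mathcal{Q}_{\bm{Y}})\bigr]=1-\alpha.
\]

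Some care is needed because $\mathcal{Q}$ is a continuous statistic, so conditioning on it is formally on a null set. I would make this rigorous by working within a fixed $\bm{\eta}$. There $\bm{Y}$ decomposes into the independent Gaussian pieces $\bm{\eta}^\top\bm{Y}$ and $\mathcal{Q}_{\bm{Y}}$, independence coming from $\Sigma=\sigma^2I_N$, and the selection event becomes a measurable set in these coordinates. A regular conditional distribution given $\mathcal{Q}$ then exists and coincides with the truncated normal law of Lemma~\ref{lem:line_reduction}, which lets the uniformity argument above go through exactly.
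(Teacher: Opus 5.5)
Your proposal is correct and follows essentially the same route as the paper. Both arguments condition on $(\mathcal{E}_{\bm Y},\mathcal{Q}_{\bm Y})$, apply the probability integral transform to the truncated normal of Lemma~\ref{lem:line_reduction} at the true effect to get conditional coverage $1-\alpha$, and then average over the conditioning variables; the paper uses the monotone likelihood ratio only to show the acceptance set is an interval, which, as you note, is not needed for the coverage identity, and your measure-theoretic remarks (almost-sure positive measure of $\mathcal{Z}$, discreteness of $\mathcal{E}_{\bm Y}$, existence of the regular conditional law given $\mathcal{Q}_{\bm Y}$) fill in details the paper leaves implicit.
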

Their proofs are given in Appendix~\ref{app:proof_validity}.
Theorems~\ref{thm:validity} and \ref{thm:ci_validity} show that the proposed post-ADC inference enables us to draw insights from ADC data while controlling the probability of false findings at a pre-specified level \(\alpha\).
Proposition~\ref{prop:appendix_one_sided_validity} in Appendix~\ref{app:one_sided_test} further proves the stronger statement that, under the simple null hypothesis \(\Delta^{\mathrm{sel}}=0\), the selective $p$-value is exactly uniform.
Appendix~\ref{app:formal_selective_inference} also provides the formal definition and properties of the two-sided selective test and the one-sided lower confidence interval, which are also valid by the same argument as above.

\subsection{Computation of the Truncation Set \(\mathcal{Z}\)}
\label{subsec:truncation_set}

To obtain an exact computational procedure from the conditional distribution above, we must determine the set \(\mathcal{Z}\) of statistic values \(z\) that reproduce the same observed selection event as the observed data.
Due to space limitations, we omit the details of how to compute $\mathcal{Z}$; see Appendix~\ref{app:computation} and the accompanying code for the full procedure.
Roughly speaking, the piecewise-linearity of ADC algorithms and target-construction rules introduced in \S~\ref{sec:problem_setup} and \S~\ref{sec:target_of_inference} and formalized in Appendix~\ref{app:admissibility_framework} imply that the selection event can be represented in the response space by finitely many linear equalities and inequalities in \(\bm{Y}\).
Restricting those constraints to the one-dimensional slice \(\bm{Y}(z)=\bm{a}+\bm{b}z\) from Lemma~\ref{lem:line_reduction} turns the problem of computing \(\mathcal{Z}\) into solving finitely many linear inequalities in \(z\).
Appendix~\ref{app:interval_computation} provides the explicit derivations for GP-UCB and TPE.
Existing SI literature has pointed out the usefulness of randomization for stabilizing inferential outputs~\citep{tian2018selective, kivaranovic2024tight}.
Our framework can also be extended to incorporate randomization while retaining the validity guarantees.
This extension is described in Appendix~\ref{app:randomized_extension}.

\newpage
\section{Experiments}
\label{sec:experiments}

We evaluate post-ADC inference on synthetic and real data, assessing type-I error, coverage, and power.
Unless stated otherwise, we use dimension $d=3$, Gaussian noise variance $\sigma^2=1$, $N_{\mathrm{init}}=10$, and $N_{\mathrm{steps}}=50$, giving $N=N_{\mathrm{init}}+N_{\mathrm{steps}}$ samples.
We compare TPE \eqref{eq:tpe_score} and GP-UCB \eqref{eq:GPUCB}, using the high-region-versus-low-region target from \S~\ref{sec:target_of_inference}.
Tests use $\alpha=0.05$, intervals use $0.90$ coverage, and rates are estimated from $5{,}000$ independent Monte Carlo replicates per configuration.
Implementation details are in Appendix~\ref{app:appendix_setup_details}.
We compare the proposed method (\proposed) against four baselines: \naive, a $Z$-test without selection correction; \withouteta and \withoutT are ablations, which remove conditioning on $\bm{\eta}_{\bm{Y}}$ and $\mathcal{T}_{\bm{Y}}$, respectively; and \bonferroni, which corrects for $|\mathcal{X}|^{N_{\mathrm{steps}}}\cdot 3^{|\mathcal{X}|}$ possible trajectories and region selections.

\subsection{Type-I Error and Coverage}
\label{subsec:experiments_fpr}
\label{subsec:experiments_coverage}

Under the global null $f_\star\equiv 0$, we vary $N_{\mathrm{steps}}\in\{25,50,75,100\}$ and report rejection probability at $\alpha=0.05$ and coverage of nominal $90\%$ intervals in Figure~\ref{fig:validity}.
\proposed stays near the nominal levels, while \naive and the partially conditioned ablations are anti-conservative and \bonferroni is overly conservative, matching Theorem~\ref{thm:validity} and Theorem~\ref{thm:ci_validity}.
Additional dimensionality and hyperparameter sensitivity analyses are provided in Appendices~\ref{app:appendix_fpr_dimension} and~\ref{app:appendix_fpr_hyperparameter}, respectively, while interval-width results are reported in Appendix~\ref{app:appendix_ci_length}.

\begin{figure}[t]
  \centering
  \setlength{\tabcolsep}{0pt}
  \begin{tabular}{@{}c@{\hspace{0.02\textwidth}}c@{}}
    \multicolumn{2}{@{}c@{}}{
      \includegraphics[page=3,height=2.2cm,trim=0.2cm 1.5cm 2.0cm 2.0cm,clip]{experiments/coverage_nsteps.pdf}
    }\\[-0.25em]
    \includegraphics[page=1,width=0.49\textwidth,trim=0.0cm 0.0cm 1.0cm 0.90cm,clip]{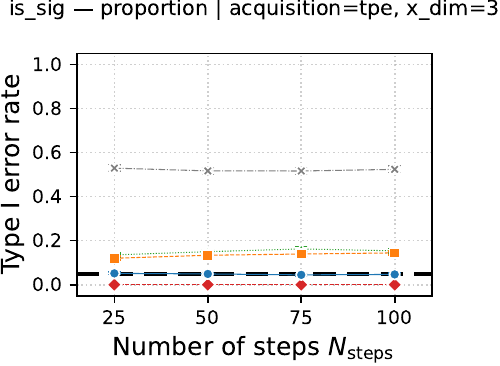} &
    \includegraphics[page=2,width=0.49\textwidth,trim=0.0cm 0.0cm 1.0cm 0.90cm,clip]{experiments/fpr_nsteps.pdf} \\[-0.55em]
    {Type I Error (\texttt{TPE})} &
    {Type I Error (\texttt{GP-UCB})} \\[2.0em]
    \includegraphics[page=1,width=0.49\textwidth,trim=0.2cm 0.0cm 1.4cm 0.90cm,clip]{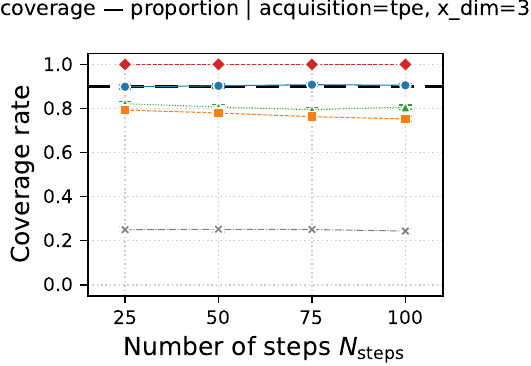} &
    \includegraphics[page=2,width=0.49\textwidth,trim=0.2cm 0.0cm 1.4cm 0.90cm,clip]{experiments/coverage_nsteps.pdf} \\[-0.55em]
    {Coverage (\texttt{TPE})} &
    {Coverage (\texttt{GP-UCB})}\\
  \end{tabular}
  \caption{Empirical Type-I error and coverage.
    \proposed remains close to nominal type-I error and coverage across horizons, while \naive and the ablations are anti-conservative and \bonferroni is conservative.  In GP-UCB, \withouteta nearly overlaps \proposed; failure cases appear in Appendix~\ref{app:appendix_fpr_hyperparameter}.}
  \label{fig:validity}
  \label{fig:fpr}
  \label{fig:coverage}
\end{figure}

\subsection{Power Experiments}
\label{subsec:experiments_power}
We test six synthetic objective families (\texttt{sinc}, \texttt{cos}, \texttt{chirp}, \texttt{bump}, \texttt{peak}, and \texttt{negative\_forrester}).
They are rescaled to $[-1,1]$ and multiplied by $a\in\{0,1,2,4,8\}$; functional forms are in Appendix~\ref{app:synthetic_objective_families}.
Because \naive, \withoutT, and \withouteta are anti-conservative under the null in Figure~\ref{fig:validity}, Figure~\ref{fig:power} compares only \proposed and \bonferroni.
\proposed is calibrated at $a=0$ and gains power quickly as $a$ increases.

\begin{figure}[t]
  \centering
  \setlength{\tabcolsep}{0pt}
  \begin{tabular}{@{}c@{\hspace{0.02\textwidth}}c@{}}
    \multicolumn{2}{@{}c@{}}{
      \includegraphics[page=3,width=0.72\textwidth,trim=0.0cm 3.5cm 0.05cm 4.0cm,clip]{experiments/power_signal.pdf}
    }\\[-0.4em]
    \includegraphics[page=1,width=0.49\textwidth,trim=0.0cm 0.0cm 0.0cm 0.90cm,clip]{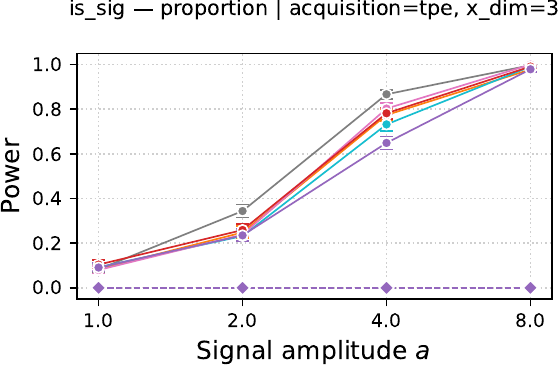} &
    \includegraphics[page=2,width=0.49\textwidth,trim=0.0cm 0.0cm 0.0cm 0.90cm,clip]{experiments/power_signal.pdf} \\[-0.5em]
    {\scriptsize \texttt{TPE}} &
    {\scriptsize \texttt{GP-UCB}}\\
  \end{tabular}
  \caption{Empirical power.
    \proposed gains power as the signal increases across six synthetic objective families, while all \bonferroni results (dashed lines) remain near zero.}
  \label{fig:power}
\end{figure}

\subsection{Experiments on Real Data}
\label{subsec:experiments_real}

We use four UCI benchmark datasets as black-box objectives, with dataset, feature-selection, and noise-estimation details in Appendix~\ref{app:appendix_real_data}.
For each configuration, we run $1{,}000$ bootstrap replicates.
Table~\ref{tab:real_d1} gives the $d=1$ results; \proposed detects signal on all datasets, whereas \bonferroni is essentially powerless.
Analogous results for varying $d$ are in Appendix~\ref{app:appendix_real_data}.

\begin{table}[t]
  \centering
  \caption{Empirical rejection probability on four real datasets.}
  \label{tab:real_d1}
  \small
  \setlength{\tabcolsep}{3pt}
  \begin{tabular}{@{}llcccc@{}}
  \toprule
  Acquisition & Method & Concrete & \makecell{Gas Turbine\\(CO)} & \makecell{Gas Turbine\\(NOx)} & \makecell{Power\\Plant} \\
  \midrule
  \multirow{2}{*}{ucb} & \proposed & \textbf{0.432} & \textbf{0.995} & \textbf{0.911} & \textbf{0.999} \\
  & \bonferroni & 0.000 & 0.014 & 0.000 & 0.000 \\
  \midrule
  \multirow{2}{*}{tpe} & \proposed & \textbf{0.396} & \textbf{0.989} & \textbf{0.918} & \textbf{0.929} \\
  & \bonferroni & 0.000 & 0.058 & 0.000 & 0.000 \\
  \bottomrule
  \end{tabular}
\end{table}
\newpage
\section{Discussion and Limitations}
\label{sec:discussion}

We introduced post-ADC inference for valid inference after active data collection.
The key issue is that the same noisy observations determine both where ADC samples and which statistical question is later tested.
By treating this pipeline as a single selection event, we derive exact selective $p$-values and confidence intervals under the known-variance Gaussian model.
The framework therefore allows us to make post hoc statements from ADC data while controlling the probability of false findings.
Several scope restrictions remain important.
First, the exact theory relies on finite candidate sets, so continuous domains require discretization.
Second, the analysis assumes known Gaussian noise.
Extending the framework to estimated variance or non-Gaussian response models remains open.
Third, exact computation requires that the ADC rule and the target-construction rule can be expressed as compositions of finitely many piecewise-linear maps.
This covers TPE and GP-UCB, but not every ADC algorithm.
These limitations are important targets for future work.

\newpage
\subsubsection*{Acknowledgments}
This work was supported by RIKEN Junior Research Associate Program and partially supported by JST CREST (JPMJCR21D3, JPMJCR22N2),
JST Moonshot R\&D (JPMJMS2033-05), and RIKEN Center for Advanced Intelligence\\
Project.

\clearpage
\appendix

\newpage
\section{Additional Targets of Inference}

\label{app:additional_targets}

The main text already discussed two data-dependent target constructions in \S~\ref{subsec:target_of_inference}.
Our framework applies more widely to any target construction rule, which is formally defined in Appendix~\ref{app:admissibility_framework}.
This subsection collects additional examples of targets that fit our framework, including both data-dependent and data-independent targets.

\paragraph{Data-dependent targets.}
We summarize two additional data-dependent weight vector families in Table~\ref{tab:appendix_hypotheses}.
They all fit the same linear form
\[
  \Delta_{\bm{y}} = \bm{\eta}_{\bm{y}}^\top \bm{\mu}.
\]

\begin{table}[htbp]
  \centering
  \caption{Additional data-dependent target families beyond the two main-text examples.}
  \label{tab:appendix_hypotheses}
  \begingroup
  \small
  \setlength{\tabcolsep}{3pt}
  \renewcommand{\arraystretch}{1.2}
  \begin{tabular}{@{}p{0.05\linewidth}p{0.18\linewidth}p{0.38\linewidth}p{0.31\linewidth}@{}}
    \toprule
    \# & Name & $\mathcal{I}_{\bm{y}}$ & $\mathcal{J}_{\bm{y}}$ \\
    \midrule
    1 & Top-$n$ vs.
bottom-$m$ &
    \makecell[l]{$\displaystyle \mathcal{I}_{\bm{y}}=\operatorname*{arg\,max}_{|\mathcal{S}|=n}\sum_{t\in\mathcal{S}} y_t$} &
    \makecell[l]{$\displaystyle \mathcal{J}_{\bm{y}}=\operatorname*{arg\,min}_{|\mathcal{S}|=m}\sum_{t\in\mathcal{S}} y_t$} \\
    2 & Winner vs.
runner-up &
    \makecell[l]{$\displaystyle \mathcal{I}_{\bm{y}}=\{\hat{i}\},$ \\
                  $\displaystyle \hat{i}=\arg\max_{t\in[N]} y_t$} &
    \makecell[l]{$\displaystyle \mathcal{J}_{\bm{y}}=\{\tilde{j}\},$ \\
                  $\displaystyle \tilde{j}=\arg\max_{t\in[N]\setminus\{\hat{i}\}} y_t$} \\
    \bottomrule
  \end{tabular}
  \endgroup
\end{table}

\paragraph{Data-independent targets.}
The framework also covers useful data-independent targets, where the weight vector is fixed once the queried design points are fixed.
A first example is the observed GP posterior mean at a target location $\bm{x}^{\circ}$,
\begin{equation}
  \begin{aligned}
  \widehat{\mu}_N(\bm{x}^{\circ})
  &=
  \bm{k}_N(\bm{x}^{\circ})^\top
  (K_N+\sigma^2 I_N)^{-1}
  \bm{y}
  =
  (\bm{\eta}^{\mathrm{GP}}_{\mathcal{T}})^\top \bm{y},
  \\
  \bm{\eta}^{\mathrm{GP}}_{\mathcal{T}}
  &:=
  (K_N+\sigma^2 I_N)^{-1}\bm{k}_N(\bm{x}^{\circ}).
  \end{aligned}
  \label{eq:appendix_gp_posterior_mean_weight vector}
\end{equation}

A second example is the observed average response over a pre-specified region $\mathcal{R}\subset\R^d$,
\begin{equation}
  \overline{y}(\mathcal{R})
  =
  \frac{1}{|\mathcal{I}(\mathcal{R})|}
  \sum_{t\in\mathcal{I}(\mathcal{R})} y_t
  =
  (\bm{\eta}^{\mathcal{R}}_{\mathcal{T}})^\top \bm{y},
  \qquad
  \mathcal{I}(\mathcal{R}) := \{t:\ \bm{x}_t\in\mathcal{R}\},
  \label{eq:appendix_fixed_region_average}
\end{equation}
provided $|\mathcal{I}(\mathcal{R})|>0$.

In these data-independent cases, conditioning remains essential because the ADC trajectory depends on the responses, but the selection event can be simplified to \(\{\mathcal{T}_{\bm{Y}}=\mathcal{T}_{\bm{y}}\}\).
Our framework still yields valid selective tests and confidence intervals for these simpler targets.

\newpage
\section{Formal Selective Inference Statements}
\label{app:formal_selective_inference}

This appendix provides the formal statements about the elimination of the nuisance parameter by conditioning on the residual statistic in \eqref{eq:nuisance_parameter}.
It also provides the formal definitions and properties of the selective $p$-values and confidence intervals.
The one-sided selective $p$-value in \eqref{eq:selective_pvalue} and the equal-tailed two-sided selective confidence interval in \eqref{eq:selective_ci_delta} have already been formally defined in \S~\ref{subsec:inferential_outputs}.
Therefore, this appendix focuses on the complementary one-sided selective confidence interval and the two-sided selective test, providing their formal definitions and basic properties.

Throughout, let \(G^{\mathrm{sel}}_\delta(\cdot)\) denote the conditional CDF in \eqref{eq:selective_cdf} evaluated at effect value \(\delta\).

\subsection{Elimination of Nuisance Parameter}
\label{app:nuisance_reduction}

In \S~\ref{subsec:inferential_outputs}, we eliminate the nuisance parameter from the conditional distribution by conditioning on an appropriate residual statistic.
We formalize that step here.

\begin{proposition}[Nuisance parameter-elimination by conditioning]
  \label{prop:appendix_observable_nuisance_sufficiency}
  Let \(\bm{\eta}=\bm{\eta}_{\bm{y}}\), and assume the Gaussian model in \eqref{eq:observed_model_gaussian}.
  Define
  \begin{equation}
    \mathcal{Q}_{\bm{Y}}
    =
    \left(
      I_N
      -
      \frac{\Sigma\bm{\eta}\bm{\eta}^\top}
           {\bm{\eta}^\top \Sigma \bm{\eta}}
    \right)\bm{Y}.
    \label{eq:appendix_nuisance_parameter}
  \end{equation}
  Then, for any fixed \(q\), the conditional distribution of
  \(T_{\bm{\eta}}(\bm{Y})=\bm{\eta}^{\top}\bm{Y}\) given
  \[
    \mathcal{E}_{\bm{Y}}=\mathcal{E}_{\bm{y}},
    \qquad
    \mathcal{Q}_{\bm{Y}}=q
  \]
  depends on the mean parameter \(\bm{\mu}\) only through
  \(\Delta^{\mathrm{sel}}=\bm{\eta}^{\top}\bm{\mu}\).
  In particular, it does not depend on the nuisance component
  \(\bm{\mu}^{\mathrm{nui}}\) in \eqref{eq:observed_parameter_decomposition}.
\end{proposition}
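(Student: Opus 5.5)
The plan is to decompose \(\bm{Y}\) into its component along \(\bm{\eta}\) and the residual \(\mathcal{Q}_{\bm{Y}}\), and then to reduce to the unconditional joint law of the pair \((T_{\bm{\eta}}(\bm{Y}),\mathcal{Q}_{\bm{Y}})\). Write \(\bm{b}=\Sigma\bm{\eta}/(\bm{\eta}^\top\Sigma\bm{\eta})\). The identity
\[
\bm{Y}=\mathcal{Q}_{\bm{Y}}+\bm{b}\,T_{\bm{\eta}}(\bm{Y})
\]
holds deterministically, so \(\bm{Y}\) is a measurable function of \((T_{\bm{\eta}}(\bm{Y}),\mathcal{Q}_{\bm{Y}})\). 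Hence the event \(\{\mathcal{E}_{\bm{Y}}=\mathcal{E}_{\bm{y}}\}\) can be written as \(\{\mathcal{E}_{q+\bm{b}T}=\mathcal{E}_{\bm{y}}\}\). On the slice \(\mathcal{Q}_{\bm{Y}}=q\) this is a condition on \(T\) alone, namely \(T\in\mathcal{Z}(q):=\{z:\mathcal{E}_{q+\bm{b}z}=\mathcal{E}_{\bm{y}}\}\). This set is determined by \(q\), \(\bm{b}\) and \(\mathcal{E}_{\bm{y}}\), and so does not involve \(\bm{\mu}\).

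Next I would show that \(T_{\bm{\eta}}(\bm{Y})\) and \(\mathcal{Q}_{\bm{Y}}\) are independent under \eqref{eq:observed_model_gaussian}. The pair is jointly Gaussian as a linear image of \(\bm{Y}\). Its cross-covariance is
\[
\mathrm{Cov}(\mathcal{Q}_{\bm{Y}},\bm{\eta}^\top\bm{Y})=(I_N-\bm{b}\bm{\eta}^\top)\Sigma\bm{\eta}=\Sigma\bm{\eta}-\bm{b}\,\bm{\eta}^\top\Sigma\bm{\eta}=\bm{0},
\]
so the two are independent. The marginal law of \(T\) is \(N(\bm{\eta}^\top\bm{\mu},\bm{\eta}^\top\Sigma\bm{\eta})\), which depends on \(\bm{\mu}\) only through \(\Delta^{\mathrm{sel}}\). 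By contrast, the law of \(\mathcal{Q}_{\bm{Y}}\) has mean \(\bm{\mu}^{\mathrm{nui}}\), so all dependence on the nuisance parameter sits in \(\mathcal{Q}_{\bm{Y}}\).

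Combining the two steps, conditioning on \(\mathcal{Q}_{\bm{Y}}=q\) leaves the law of \(T\) unchanged by independence: it is still \(N(\Delta^{\mathrm{sel}},\bm{\eta}^\top\Sigma\bm{\eta})\). Further conditioning on \(T\in\mathcal{Z}(q)\) produces that normal law truncated to \(\mathcal{Z}(q)\). Concretely, the conditional law is the normal density \(\phi\bigl((t-\Delta^{\mathrm{sel}})/\sqrt{v_{\bm{\eta}}}\bigr)\) restricted to \(t\in\mathcal{Z}(q)\) and renormalized. Every ingredient of this density depends on \(\bm{\mu}\) only through \(\Delta^{\mathrm{sel}}\), which proves the claim. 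The same computation also gives Lemma~\ref{lem:line_reduction}.

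The main obstacle is rigor in conditioning on the null event \(\{\mathcal{Q}_{\bm{Y}}=q\}\). I would handle it by working with regular conditional distributions. One route is to verify directly that the truncated-normal kernel above satisfies the defining identity \(\mathbb{E}[h(T)\mathbf{1}\{\mathcal{E}_{\bm{Y}}=\mathcal{E}_{\bm{y}}\}g(\mathcal{Q}_{\bm{Y}})]=\dots\) for bounded test functions \(h,g\), using Fubini and the product structure of the joint law of \((T,\mathcal{Q}_{\bm{Y}})\). This requires \(\Prob(T\in\mathcal{Z}(q))>0\), which holds for almost every relevant \(q\) because the selection event has positive probability. It also requires the selection map to be measurable, which holds since \(\mathcal{T}\) ranges over a finite set and \(\bm{\eta}_{\bm{Y}}\) is piecewise defined.
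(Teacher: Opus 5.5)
Your proposal is correct and follows essentially the same route as the paper's proof: zero cross-covariance gives independence of \(T_{\bm{\eta}}(\bm{Y})\) and \(\mathcal{Q}_{\bm{Y}}\), and the decomposition \(\bm{Y}=q+\bm{b}t\) turns the selection event into a \(\bm{\mu}\)-free truncation set for \(t\), so the renormalized \(N(\Delta^{\mathrm{sel}},v_{\bm{\eta}})\) density depends on \(\bm{\mu}\) only through \(\Delta^{\mathrm{sel}}\). Your use of a regular conditional distribution to handle conditioning on the null event \(\{\mathcal{Q}_{\bm{Y}}=q\}\) adds rigor that the paper omits, but it does not change the argument.
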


The proof is given in Appendix~\ref{app:proof_observable_nuisance_sufficiency}.
Proposition~\ref{prop:appendix_observable_nuisance_sufficiency} shows that the residual obtained after projecting onto the selected weight-vector direction can be conditioned on to remove the nuisance component from the conditional law of the target statistic.
Conditioning on \(\mathcal{Q}_{\bm{Y}}=\mathcal{Q}_{\bm{y}}\) therefore yields a selective distribution for \(T_{\bm{\eta}}(\bm{Y})\) that depends on the mean parameter only through \(\Delta^{\mathrm{sel}}\).
In this sense, \(\mathcal{Q}_{\bm{Y}}\) plays the role of a sufficient statistic for the nuisance parameter.
The elimination of nuisance parameters by conditioning is standard in conditional inference \citep{lehmann1986testing}, and the same residual statistic appears in classical selective inference for linear regression \citep{lee2016exact, tibshirani2016exact}.

\subsection{One-Sided Selective Test}
\label{app:one_sided_test}

The one-sided selective $p$-value used in the main text is already defined formally in \eqref{eq:selective_pvalue}.
It tests
\(
\mathrm{H}_0:\Delta^{\mathrm{sel}}\le 0
\)
versus
\(
\mathrm{H}_1:\Delta^{\mathrm{sel}}>0
\).
Its validity is established in Theorem~\ref{thm:validity}; we also provide the stronger exact-uniformity statement.

\begin{proposition}[Validity of the one-sided selective test]
  \label{prop:appendix_one_sided_validity}
  Under the assumptions of Theorem~\ref{thm:validity}, the test that rejects for small values of the induced $p$-value \(p_{\mathrm{sel}}(\bm{Y})\) is exactly uniform under the simple null hypothesis and valid for the composite null.
In particular, for every \(\alpha\in(0,1)\),
  \begin{equation}
    \Prob_{\Delta^{\mathrm{sel}}=0}\!\left(
      p_{\mathrm{sel}}(\bm{Y})\le \alpha
      \,\middle|\,
      \mathcal{E}_{\bm{Y}},\mathcal{Q}_{\bm{Y}}
    \right)
    =
    \alpha
    \qquad \text{a.s.,}
    \label{eq:appendix_one_sided_conditional_exact}
  \end{equation}
  and hence
  \begin{equation}
    \Prob_{\Delta^{\mathrm{sel}}=0}(p_{\mathrm{sel}}(\bm{Y})\le \alpha)
    =
    \alpha.
    \label{eq:appendix_one_sided_marginal_exact}
  \end{equation}
  Consequently,
  \begin{equation}
    \Prob_{\mathrm{H}_0}\!\left(
      p_{\mathrm{sel}}(\bm{Y})\le \alpha
      \,\middle|\,
      \mathcal{E}_{\bm{Y}},\mathcal{Q}_{\bm{Y}}
    \right)
    \le
    \alpha
    \qquad \text{a.s.,}
    \label{eq:appendix_one_sided_conditional}
  \end{equation}
  and hence
  \begin{equation}
    \Prob_{\mathrm{H}_0}(p_{\mathrm{sel}}(\bm{Y})\le \alpha)\le \alpha.
    \label{eq:appendix_one_sided_marginal}
  \end{equation}
\end{proposition}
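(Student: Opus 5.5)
The plan is the standard probability integral transform argument, applied conditionally on $(\mathcal{E}_{\bm{Y}},\mathcal{Q}_{\bm{Y}})$ and then averaged out.

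\textbf{Step 1: the conditional law.} Fix a realization with $\bm{\eta}_{\bm{y}}\neq\bm{0}$, which happens almost surely by assumption. By Lemma~\ref{lem:line_reduction}, conditional on $\{\mathcal{E}_{\bm{Y}}=\mathcal{E}_{\bm{y}},\mathcal{Q}_{\bm{Y}}=\mathcal{Q}_{\bm{y}}\}$ we have $T_{\bm{\eta}}(\bm{Y})\sim\mathrm{TN}(\Delta^{\mathrm{sel}},v_{\bm{\eta}},\mathcal{Z})$. By Proposition~\ref{prop:appendix_observable_nuisance_sufficiency}, this law depends on $\bm{\mu}$ only through $\Delta^{\mathrm{sel}}$. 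Hence $p_{\mathrm{sel}}(\bm{y})=1-G^{\mathrm{sel}}_0(T_{\bm{\eta}}(\bm{y}))$ is a well-defined function of the observed statistic and the conditioning variables. Here we use that $\mathcal{Z}$ has positive Lebesgue measure almost surely, so the conditional distribution is continuous and $\Prob(T\ge t)=\Prob(T>t)$. Positive measure holds because the event $\{\mathcal{E}_{\bm{Y}}=\mathcal{E}_{\bm{y}}\}$ is, by the piecewise-linear structure of Appendix~\ref{app:admissibility_framework}, a union of polyhedra. Those realizations whose slice hits only measure-zero pieces form a null set.

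\textbf{Step 2: exact conditional uniformity at $\Delta^{\mathrm{sel}}=0$.} Under $\Delta^{\mathrm{sel}}=0$, the conditional CDF $G^{\mathrm{sel}}_0$ is continuous and strictly increasing on the support $\mathcal{Z}$. The probability integral transform then gives $G_0^{\mathrm{sel}}(T_{\bm{\eta}}(\bm{Y}))\sim\mathrm{Unif}(0,1)$ conditionally, so $p_{\mathrm{sel}}(\bm{Y})\sim\mathrm{Unif}(0,1)$ conditionally as well. This is \eqref{eq:appendix_one_sided_conditional_exact}. Taking expectations over $(\mathcal{E}_{\bm{Y}},\mathcal{Q}_{\bm{Y}})$ via the tower property yields \eqref{eq:appendix_one_sided_marginal_exact}.

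There is a subtlety: $\bm{\eta}$ and hence $\mathcal{Q}$ are themselves random. The conditioning must therefore be read as on the joint event $\{\mathcal{E}_{\bm{Y}}=\mathcal{E}_{\bm{y}}\}$, which fixes $\bm{\eta}$, intersected with $\mathcal{Q}_{\bm{Y}}=\mathcal{Q}_{\bm{y}}$. The events $\{\mathcal{E}_{\bm{Y}}=e\}$ over the countably many possible values $e$ (finitely many trajectories and weight vectors, since $\mathcal{X}$ is finite) partition the sample space up to null sets. Marginalization is then a countable sum, which keeps the argument rigorous.

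\textbf{Step 3: the composite null, and where the difficulty lies.} The truncated normal family $\mathrm{TN}(\delta,v,\mathcal{Z})$ has monotone likelihood ratio in $\delta$, because its density is proportional to $\exp(\delta z/v)\phi(z/\sqrt v)\mathbf{1}_{\mathcal{Z}}(z)$ and $\mathcal{Z}$ is fixed. Consequently $G^{\mathrm{sel}}_\delta(t)$ is nonincreasing in $\delta$ for each $t$.

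Take $\Delta^{\mathrm{sel}}=\delta\le0$. The event $\{p_{\mathrm{sel}}\le\alpha\}$ equals $\{T_{\bm{\eta}}\ge t_\alpha\}$, where $t_\alpha$ is the upper $\alpha$-quantile of $G_0^{\mathrm{sel}}$. Stochastic ordering gives $\Prob_\delta(T_{\bm{\eta}}\ge t_\alpha\mid\cdot)\le\Prob_0(T_{\bm{\eta}}\ge t_\alpha\mid\cdot)=\alpha$, which is \eqref{eq:appendix_one_sided_conditional}. Averaging then gives \eqref{eq:appendix_one_sided_marginal}.

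I expect the main obstacle to be exactly this step: making the stochastic-dominance argument precise when $\mathcal{Z}$ is a union of intervals. The proof of dominance will use the MLR property of exponential tilts, which holds regardless of the shape of the truncation set.
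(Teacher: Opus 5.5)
Your proposal is correct and follows essentially the same route as the paper: the truncated-normal law from Lemma~\ref{lem:line_reduction}, the probability integral transform at $\Delta^{\mathrm{sel}}=0$ for exact conditional uniformity, monotone likelihood ratio of the exponentially tilted truncated family for the composite null, and then averaging over $(\mathcal{E}_{\bm{Y}},\mathcal{Q}_{\bm{Y}})$. Your extra checks, namely that $\mathcal{Z}$ almost surely contains an open interval around $T(\bm{y})$ (so the conditional law is continuous) and that marginalization is a finite sum over selection events, fill in details the paper leaves implicit rather than changing the argument.
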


\subsection{One-Sided Selective Confidence Interval}
\label{app:one_sided_ci}

As noted at the end of \S~\ref{subsec:si_concept}, the same conditional distribution also supports the inversion of the directional selective test to a lower one-sided selective confidence interval.

\begin{definition}[Lower one-sided selective confidence interval]
  \label{def:appendix_one_sided_ci}
  The lower one-sided selective confidence interval for the effect is obtained by inverting the family of one-sided tests
  \(
  \mathrm{H}_0:\Delta^{\mathrm{sel}}\le \delta
  \)
  versus
  \(
  \mathrm{H}_1:\Delta^{\mathrm{sel}}>\delta
  \),
  namely
  \begin{equation}
    C^{\Delta,+}_{1-\alpha}(\bm{y})
    =
    \left\{
      \delta\in\R:
      G^{\mathrm{sel}}_\delta\!\left(T(\bm y)\right)
      \le
      1-\alpha
    \right\}.
    \label{eq:appendix_one_sided_ci}
  \end{equation}
\end{definition}

\begin{proposition}[Properties of the lower one-sided selective confidence interval]
  \label{prop:appendix_one_sided_ci}
  Under the assumptions of Theorem~\ref{thm:validity}, the set \(C^{\Delta,+}_{1-\alpha}(\bm{y})\) is a lower interval for each observed response vector \(\bm{y}\), and the induced interval \(C^{\Delta,+}_{1-\alpha}(\bm{Y})\) has exact coverage both conditionally and marginally.
Writing
  \(
  \Delta_{\bm{Y}}=\bm{\eta}_{\bm{Y}}^\top \bm{\mu}
  \)
  for the effect, we have
  \begin{equation}
    \Prob\!\left(
      \Delta_{\bm{Y}}\in C^{\Delta,+}_{1-\alpha}(\bm{Y})
      \,\middle|\,
      \mathcal{E}_{\bm{Y}},\mathcal{Q}_{\bm{Y}}
    \right)
    =
    1-\alpha
    \qquad \text{a.s.,}
    \label{eq:appendix_one_sided_ci_conditional}
  \end{equation}
  and therefore
  \begin{equation}
    \Prob\!\left(
      \Delta_{\bm{Y}}\in C^{\Delta,+}_{1-\alpha}(\bm{Y})
    \right)
    =
    1-\alpha.
    \label{eq:appendix_one_sided_ci_marginal}
  \end{equation}
\end{proposition}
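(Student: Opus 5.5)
The plan is to work entirely inside the truncated-normal representation of Lemma~\ref{lem:line_reduction}, show that the map \(\delta\mapsto G^{\mathrm{sel}}_\delta(t)\) is monotone so that the defining set in \eqref{eq:appendix_one_sided_ci} is a lower interval, and then obtain exact coverage from the probability integral transform applied conditionally on \((\mathcal{E}_{\bm{Y}},\mathcal{Q}_{\bm{Y}})\).

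\textbf{Monotonicity and the lower-interval property.} Fix an observed \(\bm y\). By Lemma~\ref{lem:line_reduction}, \(G^{\mathrm{sel}}_\delta\) is the CDF of \(\mathrm{TN}(\delta,v_{\bm\eta},\mathcal Z)\), with \(\mathcal Z\) and \(v_{\bm\eta}\) not depending on \(\delta\). This family has density proportional to \(\exp(\delta z/v_{\bm\eta})\,\phi(z/\sqrt{v_{\bm\eta}})\,\mathbf 1_{\mathcal Z}(z)\), which is a one-parameter exponential family in \(z\). It therefore has monotone likelihood ratio in \(z\). Hence, for each fixed \(t\), the map \(\delta\mapsto G^{\mathrm{sel}}_\delta(t)\) is nonincreasing, strictly decreasing when \(t\) lies in the interior of the support, and continuous by dominated convergence.

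It follows that \(\{\delta: G^{\mathrm{sel}}_\delta(T(\bm y))\le 1-\alpha\}\) is an interval of the form \([L,\infty)\), possibly closed at \(L\) by continuity. This is the sense of ``lower interval'': it is unbounded above. Its endpoint \(L\) is finite, because \(G^{\mathrm{sel}}_\delta(t)\to 1\) as \(\delta\to-\infty\) and \(\to 0\) as \(\delta\to\infty\) for \(t\) in the interior of \(\mathcal Z\). This holds for almost every \(\bm y\).

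\textbf{Conditional and marginal coverage.} Condition on \(\mathcal{E}_{\bm Y}=\mathcal{E}_{\bm y}\) and \(\mathcal{Q}_{\bm Y}=\mathcal{Q}_{\bm y}\). This fixes \(\bm\eta\), hence the true \(\Delta^{\mathrm{sel}}=\bm\eta^\top\bm\mu\), which coincides with \(\Delta_{\bm Y}\) on this event. By Lemma~\ref{lem:line_reduction} and Proposition~\ref{prop:appendix_observable_nuisance_sufficiency}, \(T(\bm Y)\) conditionally has continuous CDF \(G^{\mathrm{sel}}_{\Delta^{\mathrm{sel}}}\).

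Coverage means \(\Delta_{\bm Y}\in C^{\Delta,+}_{1-\alpha}(\bm Y)\), which is the event \(G^{\mathrm{sel}}_{\Delta^{\mathrm{sel}}}(T(\bm Y))\le 1-\alpha\). The probability integral transform makes \(G^{\mathrm{sel}}_{\Delta^{\mathrm{sel}}}(T(\bm Y))\) conditionally \(\mathrm{Unif}(0,1)\). The coverage event therefore has conditional probability exactly \(1-\alpha\), which gives \eqref{eq:appendix_one_sided_ci_conditional}.

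Taking the expectation over \((\mathcal{E}_{\bm Y},\mathcal{Q}_{\bm Y})\) by the tower property yields \eqref{eq:appendix_one_sided_ci_marginal}.

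\textbf{Main obstacle.} The step needing the most care is making this conditioning rigorous, since the conditioning event has probability zero. I will handle it by using that the selection partition is finite or countable: the trajectories are finite, and the \(\bm\eta\) values are countable by piecewise linearity. Each event \(\{\mathcal E_{\bm Y}=e\}\) has positive probability, and within it conditioning on \(\mathcal{Q}_{\bm Y}\) is a regular conditional distribution derived from the independence of \(\bm\eta^\top\bm Y\) and \(\mathcal Q_{\bm Y}\) under the Gaussian model. The assumption \(\Prob(\bm\eta_{\bm Y}\neq\bm 0)=1\) ensures \(v_{\bm\eta}>0\) almost surely. Finally, the measure-zero set where \(T\) lies on the boundary of \(\mathcal Z\) is irrelevant to all of the probabilities above.
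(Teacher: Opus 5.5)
Your proposal is correct and follows essentially the same route as the paper's proof. The monotone likelihood ratio of the truncated-normal family gives that \(\delta\mapsto G^{\mathrm{sel}}_\delta(T(\bm y))\) is continuous and nonincreasing with limits \(1\) and \(0\), which yields the lower-interval form \([L,\infty)\). The probability integral transform at the true \(\Delta^{\mathrm{sel}}\), conditionally on \((\mathcal{E}_{\bm Y},\mathcal{Q}_{\bm Y})\), then gives exact coverage, and the tower property gives the marginal statement. Your extra care (interior points for the limits, conditioning cell-by-cell on the finitely many selection outcomes) is a refinement the paper leaves implicit. The finiteness of the \(\bm\eta\)-values you invoke comes from Definition~\ref{def:target_selector}, not from the stated hypotheses of Theorem~\ref{thm:validity}.
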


\subsection{Two-Sided Selective Test}
\label{app:two_sided_test}

The same conditional distribution also supports an equal-tailed two-sided selective test of the point null.

\begin{definition}[Equal-tailed two-sided selective test]
  \label{def:appendix_two_sided_test}
  The equal-tailed two-sided selective $p$-value for testing
  \(
  \mathrm{H}_0:\Delta^{\mathrm{sel}}=0
  \)
  versus
  \(
  \mathrm{H}_1:\Delta^{\mathrm{sel}}\neq 0
  \)
  is
  \begin{equation}
    p_{\mathrm{sel}}^{\mathrm{ts}}(\bm{y})
    =
    2\min\!\left\{
      G^{\mathrm{sel}}_0\!\left(T(\bm{y})\right),
      1-G^{\mathrm{sel}}_0\!\left(T(\bm{y})\right)
    \right\}.
    \label{eq:appendix_two_sided_pvalue}
  \end{equation}
\end{definition}

\begin{proposition}[Validity of the equal-tailed two-sided selective test]
  \label{prop:appendix_two_sided_test}
  Under the assumptions of Theorem~\ref{thm:validity}, the test that rejects for small values of \(p_{\mathrm{sel}}^{\mathrm{ts}}(\bm{Y})\) is exactly uniform under the point null, both conditionally and marginally.
In particular, for every \(\alpha\in(0,1)\),
  \begin{equation}
    \Prob_{\Delta^{\mathrm{sel}}=0}\!\left(
      p_{\mathrm{sel}}^{\mathrm{ts}}(\bm{Y})\le \alpha
      \,\middle|\,
      \mathcal{E}_{\bm{Y}},\mathcal{Q}_{\bm{Y}}
    \right)
    =
    \alpha
    \qquad \text{a.s.,}
    \label{eq:appendix_two_sided_conditional_exact}
  \end{equation}
  and hence
  \begin{equation}
    \Prob_{\Delta^{\mathrm{sel}}=0}\!\left(
      p_{\mathrm{sel}}^{\mathrm{ts}}(\bm{Y})\le \alpha
    \right)
    =
    \alpha.
    \label{eq:appendix_two_sided_marginal_exact}
  \end{equation}
\end{proposition}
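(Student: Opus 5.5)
The statement to prove is Proposition~\ref{prop:appendix_two_sided_test}. The plan is the probability integral transform applied conditionally, followed by the tower property.

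\textbf{Step 1: the conditional law.} Fix a realization of the conditioning variables $(\mathcal{E}_{\bm{Y}},\mathcal{Q}_{\bm{Y}})$. This fixes $\bm{\eta}$, the line $\bm{a}+\bm{b}z$, and the truncation set $\mathcal{Z}$. By Lemma~\ref{lem:line_reduction}, under $\Delta^{\mathrm{sel}}=0$ the statistic $T(\bm{Y})$ then follows $\mathrm{TN}(0,v_{\bm{\eta}},\mathcal{Z})$. Proposition~\ref{prop:appendix_observable_nuisance_sufficiency} guarantees this law does not depend on the nuisance component, so its CDF is exactly $G^{\mathrm{sel}}_0$ as defined in \eqref{eq:selective_cdf}.

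\textbf{Step 2: continuity and uniformity.} I must check that $\mathcal{Z}$ has positive Lebesgue measure almost surely. Then $G^{\mathrm{sel}}_0$ is continuous, being the integral of a Gaussian density restricted to $\mathcal{Z}$. The observed $z$ lies in $\mathcal{Z}$, but a singleton would make the conditional law degenerate. I would argue as follows. The selection events form a finite (or countable) partition of $\R^N$ into pieces cut out by linear inequalities. Pieces of measure zero have total probability zero under the Gaussian. Conditioning on $\mathcal{Q}$ is handled by Fubini along lines parallel to $\bm{b}$: almost every slice of a positive-measure piece has positive one-dimensional measure. Given continuity, the probability integral transform gives $U:=G^{\mathrm{sel}}_0(T(\bm{Y}))\sim\mathrm{Unif}(0,1)$ conditionally. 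The event $\{p^{\mathrm{ts}}_{\mathrm{sel}}\le\alpha\}$ equals $\{U\le\alpha/2\}\cup\{U\ge 1-\alpha/2\}$, which is a disjoint union for $\alpha<1$. Its conditional probability is therefore $\alpha/2+\alpha/2=\alpha$, which proves \eqref{eq:appendix_two_sided_conditional_exact}.

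\textbf{Step 3: from conditional to marginal.} Take expectations over $(\mathcal{E}_{\bm{Y}},\mathcal{Q}_{\bm{Y}})$. The conditional probability equals the constant $\alpha$ almost surely, so the marginal probability is $\alpha$, which is \eqref{eq:appendix_two_sided_marginal_exact}. One subtlety is that the hypothesis itself is random, since $\bm{\eta}$ varies with $\bm{Y}$. I will interpret the marginal statement as holding over the event where the selected null $\Delta^{\mathrm{sel}}=0$ holds, mirroring the treatment in Theorem~\ref{thm:validity}. The assumption $\Prob(\bm{\eta}_{\bm{Y}}\neq\bm{0})=1$ keeps $v_{\bm{\eta}}>0$ so that the construction is well defined.

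The main obstacle is the measure-theoretic justification in Step 2: showing that the conditional law is non-atomic almost surely and that conditioning on the measure-zero event $\{\mathcal{Q}_{\bm{Y}}=q\}$ is legitimate. I would handle both by working with regular conditional distributions built from the Gaussian density factorized along the orthogonal decomposition $\bm{Y}=\mathcal{Q}_{\bm{Y}}+\bm{b}\,T(\bm{Y})$. Under $\Sigma$ these two components are independent.
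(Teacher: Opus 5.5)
Your proposal is correct and follows the paper's own argument. The conditional law is $\mathrm{TN}(0,v_{\bm{\eta}},\mathcal{Z})$ by Lemma~\ref{lem:line_reduction}, the probability integral transform makes $G^{\mathrm{sel}}_0(T(\bm{Y}))$ conditionally uniform, the rejection event splits into two disjoint tails of mass $\alpha/2$ each, and averaging over $(\mathcal{E}_{\bm{Y}},\mathcal{Q}_{\bm{Y}})$ gives the marginal statement. Your Fubini argument that $\mathcal{Z}$ has positive Lebesgue measure almost surely is a useful addition, since the paper asserts continuity of the truncated normal without justifying it.
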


\subsection{Two-Sided Selective Confidence Interval}
\label{app:two_sided_ci}

The equal-tailed two-sided selective confidence interval used in the main text is already defined formally in \eqref{eq:selective_ci_delta}.
Equivalently, it can be written as the inversion of the equal-tailed two-sided selective tests:
\begin{equation}
  C^{\Delta}_{1-\alpha}(\bm{y})
  =
  \left\{
    \delta\in\R:
    2\min\!\left\{
      G^{\mathrm{sel}}_\delta\!\left(T(\bm y)\right),
      1-G^{\mathrm{sel}}_\delta\!\left(T(\bm y)\right)
    \right\}
    \ge
    \alpha
  \right\}.
  \label{eq:appendix_two_sided_ci_inversion}
\end{equation}

\begin{proposition}[Properties of the two-sided selective confidence interval]
  \label{prop:appendix_two_sided_ci}
  Under the assumptions of Theorem~\ref{thm:validity}, the set \(C^{\Delta}_{1-\alpha}(\bm{y})\) is an interval for each observed response vector \(\bm{y}\), and the induced interval \(C^{\Delta}_{1-\alpha}(\bm{Y})\) has exact coverage both conditionally and marginally.
Writing
  \(
  \Delta_{\bm{Y}}=\bm{\eta}_{\bm{Y}}^\top \bm{\mu}
  \)
  for the effect, we have
  \begin{equation}
    \Prob\!\left(
      \Delta_{\bm{Y}}\in C^{\Delta}_{1-\alpha}(\bm{Y})
      \,\middle|\,
      \mathcal{E}_{\bm{Y}},\mathcal{Q}_{\bm{Y}}
    \right)
    =
    1-\alpha
    \qquad \text{a.s.,}
    \label{eq:appendix_two_sided_conditional_delta}
  \end{equation}
  and therefore
  \begin{equation}
    \Prob\!\left(
      \Delta_{\bm{Y}}\in C^{\Delta}_{1-\alpha}(\bm{Y})
    \right)
    =
    1-\alpha.
    \label{eq:appendix_two_sided_marginal}
  \end{equation}
\end{proposition}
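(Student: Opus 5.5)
The plan is to reduce Proposition~\ref{prop:appendix_two_sided_ci} to the conditional law established in Lemma~\ref{lem:line_reduction}, using the probability integral transform together with monotonicity of the truncated normal family in its mean.

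\textbf{Step 1: the set is an interval.} Fix $\bm{y}$, and write $t=T(\bm{y})$ and $\mathcal{Z}$ for the truncation set. By Lemma~\ref{lem:line_reduction}, $G^{\mathrm{sel}}_\delta(t)$ is the CDF of $\mathrm{TN}(\delta,v_{\bm{\eta}},\mathcal{Z})$ evaluated at $t$. I will show that $\delta\mapsto G^{\mathrm{sel}}_\delta(t)$ is continuous and strictly decreasing. The family $\mathrm{TN}(\delta,v,\mathcal{Z})$ has density proportional to $\exp(\delta z/v)\,\phi(z/\sqrt v)\mathbf 1_{\mathcal Z}(z)$. This is an exponential family in $z$, so it has monotone likelihood ratio in $\delta$, which makes the family stochastically increasing. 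Strictness follows because $t$ lies in $\mathcal{Z}$ (the observed point reproduces its own event) and, almost surely, $\mathcal{Z}$ has positive Lebesgue measure on both sides of $t$. To see this, note that $\mathcal{Z}$ is a finite union of intervals by the piecewise-linear structure, and $t$ is almost surely interior to one of them. Given strict monotonicity, the condition $\alpha/2\le G^{\mathrm{sel}}_\delta(t)\le 1-\alpha/2$ defines a closed interval $[L,U]$. Because $G^{\mathrm{sel}}_\delta(t)\to 1$ as $\delta\to-\infty$ and $\to 0$ as $\delta\to\infty$, both endpoints are finite.

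\textbf{Step 2: conditional coverage.} Condition on $\{\mathcal{E}_{\bm{Y}}=\mathcal{E}_{\bm{y}},\mathcal{Q}_{\bm{Y}}=\mathcal{Q}_{\bm{y}}\}$. On this event $\bm{\eta}_{\bm{Y}}=\bm{\eta}$, so the target is $\Delta_{\bm{Y}}=\bm{\eta}^\top\bm{\mu}=\Delta^{\mathrm{sel}}$. The truncation set $\mathcal{Z}$ is also fixed on this event. This is the key identification: the only random quantity left is $T(\bm{Y})$, so $\mathcal{Z}$ is determined by $(\mathcal{E}_{\bm{Y}},\mathcal{Q}_{\bm{Y}})$. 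Coverage holds iff $G^{\mathrm{sel}}_{\Delta^{\mathrm{sel}}}(T(\bm{Y}))\in[\alpha/2,1-\alpha/2]$. By Lemma~\ref{lem:line_reduction}, $T(\bm{Y})$ conditionally follows $\mathrm{TN}(\Delta^{\mathrm{sel}},v_{\bm{\eta}},\mathcal{Z})$. That distribution has a continuous CDF, so by the probability integral transform $G^{\mathrm{sel}}_{\Delta^{\mathrm{sel}}}(T(\bm{Y}))\sim\mathrm{Unif}(0,1)$ conditionally. The coverage probability is therefore exactly $1-\alpha$, which gives \eqref{eq:appendix_two_sided_conditional_delta}. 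Here I am using $\Prob(\bm{\eta}_{\bm{Y}}\neq\bm{0})=1$, so that the lemma applies almost surely.

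\textbf{Step 3: marginal coverage.} Take the expectation of the conditional statement over $(\mathcal{E}_{\bm{Y}},\mathcal{Q}_{\bm{Y}})$ via the tower property. Since the conditional probability equals the constant $1-\alpha$ almost surely, the marginal probability is also $1-\alpha$, giving \eqref{eq:appendix_two_sided_marginal}.

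\textbf{Main obstacle.} The delicate point is measure-theoretic: the conditioning is on events of probability zero. The event $\mathcal{E}_{\bm{Y}}$ takes finitely many values with positive probability, but $\mathcal{Q}_{\bm{Y}}$ is continuous. I would handle this through a regular conditional distribution. Decompose $\bm{Y}=\mathcal{Q}_{\bm{Y}}+\bm{b}\,T(\bm{Y})$. Under the Gaussian model, $T$ and $\mathcal{Q}$ are independent, because their covariance is $(I-\Sigma\bm\eta\bm\eta^\top/\bm\eta^\top\Sigma\bm\eta)\Sigma\bm\eta=0$. Restricting to the selection event then gives the truncated density directly, and this is exactly what Lemma~\ref{lem:line_reduction} provides. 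A second subtlety is that $\bm{\eta}$ is itself data-dependent. This is resolved by working separately on each cell of the finite partition by the value of $\bm{\eta}_{\bm{Y}}$, and summing over cells in Step 3.
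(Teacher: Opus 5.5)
Your proposal is correct and follows essentially the same route as the paper: the monotone likelihood ratio of the truncated-normal family makes $\delta\mapsto G^{\mathrm{sel}}_\delta(T(\bm y))$ continuous and monotone, so the acceptance set is an interval; the probability integral transform at the true $\Delta_{\bm Y}$ gives exact conditional coverage; and the tower property gives marginal coverage. Strict monotonicity is not needed for the interval claim, since continuity plus weak monotonicity suffices. Your observation that $T(\bm y)$ is a.s. interior to $\mathcal{Z}$ is nonetheless useful, because it justifies the limits $G^{\mathrm{sel}}_\delta(T(\bm y))\to 1$ and $\to 0$, which the paper simply asserts.
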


\newpage
\section{Proofs}
\label{app:proofs}

\subsection{Proof of Proposition~\ref{prop:appendix_observable_nuisance_sufficiency}}
\label{app:proof_observable_nuisance_sufficiency}

\begin{proof}
Under the Gaussian model in \eqref{eq:observed_model_gaussian}, let
\[
  T_{\bm{\eta}}(\bm{Y})=\bm{\eta}^{\top}\bm{Y},
  \qquad
  v_{\bm{\eta}}=\bm{\eta}^{\top}\Sigma\bm{\eta},
\]
and define
\[
  \mathcal{Q}_{\bm{Y}}
  =
  \left(
    I_N-\frac{\Sigma\bm{\eta}\bm{\eta}^{\top}}{v_{\bm{\eta}}}
  \right)\bm{Y}.
\]
Then
\[
  \operatorname{Cov}\!\left(
    \mathcal{Q}_{\bm{Y}},T_{\bm{\eta}}(\bm{Y})
  \right)
  =
  \left(
    I_N-\frac{\Sigma\bm{\eta}\bm{\eta}^{\top}}{v_{\bm{\eta}}}
  \right)\Sigma\bm{\eta}
  =
  0.
\]
Since \((T_{\bm{\eta}}(\bm{Y}),\mathcal{Q}_{\bm{Y}})\) is jointly Gaussian, it follows that
\(T_{\bm{\eta}}(\bm{Y})\) and \(\mathcal{Q}_{\bm{Y}}\) are independent.
Moreover,
\[
  T_{\bm{\eta}}(\bm{Y})\sim
  N\!\left(\Delta^{\mathrm{sel}},v_{\bm{\eta}}\right),
  \qquad
  \Delta^{\mathrm{sel}}=\bm{\eta}^{\top}\bm{\mu}.
\]
For any fixed \(q\), every vector \(\bm{z}\) satisfying
\(\mathcal{Q}_{\bm{z}}=q\) can be written as
\[
  \bm{z}
  =
  \frac{\Sigma\bm{\eta}}{v_{\bm{\eta}}}t+q,
  \qquad
  t=T_{\bm{\eta}}(\bm{z}).
\]
Therefore, after conditioning on both
\(\mathcal{Q}_{\bm{Y}}=q\) and the selection event
\(\mathcal{E}_{\bm{Y}}=\mathcal{E}_{\bm{y}}\), the conditional density of
\(T_{\bm{\eta}}(\bm{Y})\) is proportional to
\[
  \mathbf 1\!\left\{
    \mathcal{E}_{\Sigma\bm{\eta}t/v_{\bm{\eta}}+q}
    =
    \mathcal{E}_{\bm{y}}
  \right\}
  \exp\!\left\{
    -\frac{(t-\Delta^{\mathrm{sel}})^2}{2v_{\bm{\eta}}}
  \right\}.
\]
The normalizing constant depends on the truncation region induced by the
selection event, but it depends on the mean parameter only through
\(\Delta^{\mathrm{sel}}\). Hence the nuisance parameter is eliminated by
conditioning on \(\mathcal{Q}_{\bm{Y}}\).
\end{proof}

\subsection{Proof of Lemma~\ref{lem:line_reduction}}
\label{app:proof_line_reduction}

\begin{proof}
Fix the observed selection event, and let \(\bm{\eta}=\bm{\eta}_{\bm{y}}\) so that \(\mathcal{E}_{\bm{Y}}=\mathcal{E}_{\bm{y}}\) implies \(\bm{\eta}_{\bm{Y}}=\bm{\eta}\).
Let \(f^{\mathrm{pre}}\) denote densities under the pre-selection Gaussian distribution from \S~\ref{subsec:inferential_outputs}.

For notational simplicity, let \(v_{\bm{\eta}}=\bm{\eta}^\top\Sigma\bm{\eta}\), and define
\[
  T_{\bm{\eta}}(\bm{Y}) := \bm{\eta}^\top \bm{Y}.
\]
Under the conditioning on \(\mathcal{Q}_{\bm{Y}}=\mathcal{Q}_{\bm{y}}\), we have
\[
  \mathcal{Q}_{\bm{Y}}=\mathcal{Q}_{\bm{y}}
  \iff
  \left(I_N-\frac{\Sigma\bm{\eta}\bm{\eta}^\top}{v_{\bm{\eta}}}\right)\bm{Y}
  =\mathcal{Q}_{\bm{y}}
  \iff
  \bm{Y}=\bm{a}+\bm{b}z
\]
for \(z=\bm{\eta}^\top\bm{Y}\in\mathbb{R}\).

Hence,
\[
  \begin{aligned}
    \left\{
      \bm{Y}\in\mathbb{R}^N :
      \mathcal{E}_{\bm{Y}}=\mathcal{E}_{\bm{y}},
      \mathcal{Q}_{\bm{Y}}=\mathcal{Q}_{\bm{y}}
    \right\}
    &=
    \left\{
      \bm{Y}\in\mathbb{R}^N :
      \mathcal{E}_{\bm{Y}}=\mathcal{E}_{\bm{y}},
      \bm{Y}=\bm{a}+\bm{b}z,
      z\in\mathbb{R}
    \right\}\\
    &=
    \left\{
      \bm{Y}=\bm{a}+\bm{b}z\in\mathbb{R}^N :
      \mathcal{E}_{\bm{a}+\bm{b}z}=\mathcal{E}_{\bm{y}},
      \ z\in\mathbb{R}
    \right\}\\
    &=
    \left\{
      \bm{Y}=\bm{a}+\bm{b}z\in\mathbb{R}^N :
      z\in\mathcal{Z}
    \right\},
  \end{aligned}
\]
where
\[
  \mathcal{Z}
  :=
  \left\{
    z\in\mathbb{R} :
    \mathcal{E}_{\bm{a}+\bm{b}z}=\mathcal{E}_{\bm{y}}
  \right\}.
\]
This proves \eqref{eq:one_dimensional_slice}.

Under the pre-selection Gaussian distribution, \(T_{\bm{\eta}}(\bm{Y})\) and \(\mathcal{Q}_{\bm{Y}}\) are jointly Gaussian. Their covariance is
\[
\operatorname{Cov}\!\left(T_{\bm{\eta}}(\bm{Y}),\mathcal{Q}_{\bm{Y}}\right)
=
\bm{\eta}^\top\Sigma
\left(I_N-\frac{\Sigma\bm{\eta}\bm{\eta}^\top}{v_{\bm{\eta}}}\right)^\top
=
\bm{0}^\top.
\]
Therefore, \(T_{\bm{\eta}}(\bm{Y})\) and \(\mathcal{Q}_{\bm{Y}}\) are independent under the pre-selection Gaussian distribution, and hence
\[
f^{\mathrm{pre}}_{\,T_{\bm{\eta}}(\bm{Y}),\,\mathcal{Q}_{\bm{Y}}}(t,q)
=
f^{\mathrm{pre}}_{\,T_{\bm{\eta}}(\bm{Y})}(t)\,
f^{\mathrm{pre}}_{\,\mathcal{Q}_{\bm{Y}}}(q),
\]
and therefore
\[
f^{\mathrm{pre}}_{\,T_{\bm{\eta}}(\bm{Y}) \mid \mathcal{Q}_{\bm{Y}}=\mathcal{Q}_{\bm{y}}}(t)
=
f^{\mathrm{pre}}_{\,T_{\bm{\eta}}(\bm{Y})}(t).
\]

Also,
\[
T_{\bm{\eta}}(\bm{Y}) \sim \mathcal{N}(\Delta^{\mathrm{sel}}, v_{\bm{\eta}})
\qquad
\text{under the pre-selection Gaussian distribution.}
\]

Since
\[
\left\{
\mathcal{E}_{\bm{Y}}=\mathcal{E}_{\bm{y}},
\ \mathcal{Q}_{\bm{Y}}=\mathcal{Q}_{\bm{y}}
\right\}
=
\left\{
T_{\bm{\eta}}(\bm{Y})\in\mathcal{Z},
\ \mathcal{Q}_{\bm{Y}}=\mathcal{Q}_{\bm{y}}
\right\},
\]
the actual conditional distribution is obtained by restricting the pre-selection Gaussian density to the set \(T_{\bm{\eta}}(\bm{Y})\in\mathcal{Z}\).
Therefore, for \(t\in\mathbb{R}\),
\begin{align*}
&f_{\,T_{\bm{\eta}}(\bm{Y}) \mid \mathcal{E}_{\bm{Y}}=\mathcal{E}_{\bm{y}},\,\mathcal{Q}_{\bm{Y}}=\mathcal{Q}_{\bm{y}}}(t) \\
&\qquad =
\frac{
f^{\mathrm{pre}}_{\,T_{\bm{\eta}}(\bm{Y}) \mid \mathcal{Q}_{\bm{Y}}=\mathcal{Q}_{\bm{y}}}(t)\,
\mathbf{1}\{t\in\mathcal{Z}\}
}{
\displaystyle
\int_{\mathcal{Z}}
f^{\mathrm{pre}}_{\,T_{\bm{\eta}}(\bm{Y}) \mid \mathcal{Q}_{\bm{Y}}=\mathcal{Q}_{\bm{y}}}(u)\,du
}\\
&\qquad =
\frac{
f^{\mathrm{pre}}_{\,T_{\bm{\eta}}(\bm{Y})}(t)\mathbf{1}\{t\in\mathcal{Z}\}
}{
\displaystyle
\int_{\mathcal{Z}} f^{\mathrm{pre}}_{\,T_{\bm{\eta}}(\bm{Y})}(u)\,du
}\\
&\qquad =f_{\,\mathrm{TN}(\Delta^{\mathrm{sel}}, v_{\bm{\eta}}, \mathcal{Z})}(t).
\end{align*}
Finally, on the event \(\mathcal{E}_{\bm{Y}}=\mathcal{E}_{\bm{y}}\), we have \(\bm{\eta}_{\bm{Y}}=\bm{\eta}_{\bm{y}}=\bm{\eta}\), and hence
\[
T(\bm{Y})=T_{\bm{\eta}}(\bm{Y}).
\]
Therefore,
\[
T(\bm{Y}) \mid
\left\{
\mathcal{E}_{\bm{Y}}=\mathcal{E}_{\bm{y}},
\mathcal{Q}_{\bm{Y}}=\mathcal{Q}_{\bm{y}}
\right\}
\sim
\mathrm{TN}(\Delta^{\mathrm{sel}}, v_{\bm{\eta}}, \mathcal{Z}).
\]

This proves the claim.
\end{proof}

\subsection{Proof of Theorem~\ref{thm:validity}}
\label{app:proof_validity}

\begin{proof}
Fix the observed selection event and the sufficient statistic for the observable nuisance parameter.
By Lemma~\ref{lem:line_reduction}, the conditional distribution of \(T(\bm{Y})\) is a truncated normal distribution with mean \(\Delta^{\mathrm{sel}}\), variance \(v_{\bm{\eta}}\), and truncation set \(\mathcal{Z}\).
Under the null hypothesis \eqref{eq:post_bo_random_null}, we have \(\Delta^{\mathrm{sel}}\le 0\).
Let \(f_{\Delta,\mathcal{Z}}\) denote the corresponding density.

For \(\Delta_1>\Delta_0\) and \(t\in\mathcal{Z}\),
\begin{equation}
  \frac{f_{\Delta_1,\mathcal{Z}}(t)}{f_{\Delta_0,\mathcal{Z}}(t)}
  =
  C(\Delta_0,\Delta_1,\mathcal{Z})
  \exp\!\left(\frac{(\Delta_1-\Delta_0)t}{v_{\bm{\eta}}}\right),
\end{equation}
where \(C(\Delta_0,\Delta_1,\mathcal{Z})\) does not depend on \(t\).
Hence this family has a monotone likelihood ratio in the statistic \(t\), and is therefore stochastically increasing in \(\Delta\).
In particular, if
\(
\bar F_{\Delta,\mathcal{Z}}(t)=\Prob_\Delta\bigl(T\ge t\mid T\in\mathcal{Z}\bigr)
\)
denotes the truncated upper-tail probability, then
\begin{equation}
  \bar F_{\Delta,\mathcal{Z}}(t)
  \le
  \bar F_{0,\mathcal{Z}}(t)
  \qquad
  \text{for all } \Delta\le 0 \text{ and } t\in\mathcal{Z}.
  \label{eq:least_favorable_tail}
\end{equation}

The selective \(p\)-value in \eqref{eq:selective_pvalue} is exactly
\(
p_{\mathrm{sel}}(\bm{y})=\bar F_{0,\mathcal{Z}}(T(\bm{y}))
\),
that is, the upper-tail probability under the least-favorable null value \(\Delta^{\mathrm{sel}}=0\).

Now fix \(\alpha\in(0,1)\), and let \(c_\alpha\) satisfy
\begin{equation}
  \bar F_{0,\mathcal{Z}}(c_\alpha)=\alpha.
\end{equation}
Under the simple null hypothesis \(\Delta^{\mathrm{sel}}=0\), the conditional distribution of \(T(\bm{Y})\) is the truncated normal distribution indexed by \(\Delta=0\), so the probability-integral transform gives exact uniformity:
\begin{align}
  \Prob_{\Delta^{\mathrm{sel}}=0}
  \bigl(
    p_{\mathrm{sel}}(\bm{Y})\le \alpha
    \,\big|\,
    \mathcal{E}_{\bm{Y}}=\mathcal{E}_{\bm{y}},
    \mathcal{Q}_{\bm{Y}}=\mathcal{Q}_{\bm{y}}
  \bigr)
  &=
  \Prob_{0}\bigl(T(\bm{Y})\ge c_\alpha \mid T(\bm{Y})\in\mathcal{Z}\bigr)
  \notag\\
  &=
  \bar F_{0,\mathcal{Z}}(c_\alpha)
  =
  \alpha,
\end{align}
which proves the exact-uniformity claims in \eqref{eq:appendix_one_sided_conditional_exact} and \eqref{eq:appendix_one_sided_marginal_exact}.

For the composite null \(\Delta^{\mathrm{sel}}\le 0\), conditionally on the observed selection event and the sufficient statistic for the observable nuisance parameter,
\begin{align}
  \Prob_{\mathrm{H}_0}
  \bigl(
    p_{\mathrm{sel}}(\bm{Y})\le \alpha
    \,\big|\,
    \mathcal{E}_{\bm{Y}}=\mathcal{E}_{\bm{y}},
    \mathcal{Q}_{\bm{Y}}=\mathcal{Q}_{\bm{y}}
  \bigr)
  &=
  \Prob_\Delta\bigl(T(\bm{Y})\ge c_\alpha \mid T(\bm{Y})\in\mathcal{Z}\bigr)
  \notag\\
  &=
  \bar F_{\Delta,\mathcal{Z}}(c_\alpha)
  \notag\\
  &\le
  \bar F_{0,\mathcal{Z}}(c_\alpha)
  =
  \alpha,
\end{align}
where the inequality follows from \eqref{eq:least_favorable_tail}.
Taking expectations over the sufficient statistic for the observable nuisance parameter and the selection event preserves the inequality and yields
\begin{equation}
  \Prob_{\mathrm{H}_0}\bigl(p_{\mathrm{sel}}(\bm{Y})\le \alpha\bigr)\le \alpha.
\end{equation}
This proves \eqref{eq:validity_theorem}.
\end{proof}

\subsection[Proofs for confidence intervals]{Proofs for confidence intervals}
\label{app:proof_ci_validity}

\begin{proof}
Fix the observed values of \((\mathcal{E}_{\bm{Y}},\mathcal{Q}_{\bm{Y}})\).
By Lemma~\ref{lem:line_reduction}, the conditional distribution of \(T(\bm{Y})\) is a truncated normal family with mean parameter \(\Delta\), variance \(v_{\bm{\eta}}\), and truncation set \(\mathcal{Z}\).
As in the proof of Theorem~\ref{thm:validity}, this family has a monotone likelihood ratio in the statistic \(T\), hence is stochastically increasing in \(\Delta\).
Therefore, for fixed \(T(\bm{y})\), the conditional CDF \(G_\Delta^{\mathrm{sel}}(T(\bm{y}))\) is continuous and nonincreasing in \(\Delta\).
Since the conditional distribution shifts to the right as \(\Delta\) increases, we also have
\[
  G_\Delta^{\mathrm{sel}}(T(\bm{y})) \to 1
  \quad \text{as } \Delta\to -\infty,
  \qquad
  G_\Delta^{\mathrm{sel}}(T(\bm{y})) \to 0
  \quad \text{as } \Delta\to \infty.
\]
Hence the acceptance set in \eqref{eq:appendix_one_sided_ci} is a lower interval, and the acceptance set in \eqref{eq:selective_ci_delta} is an interval.

For the two-sided interval, the inversion identity in \eqref{eq:appendix_two_sided_ci_inversion} is immediate from the scalar equivalence
\[
  2\min\{u,1-u\}\ge \alpha
  \iff
  \frac{\alpha}{2}\le u\le 1-\frac{\alpha}{2}.
\]

Now evaluate that interval at the true effect \(\Delta=\Delta_{\bm{Y}}\).
Because the truncated normal distribution is continuous, the probability integral transform yields
\begin{equation}
  G_{\Delta_{\bm{Y}}}^{\mathrm{sel}}\!\left(T(\bm{Y})\right)
  \sim
  \mathrm{Unif}(0,1)
  \qquad
  \text{conditionally on } (\mathcal{E}_{\bm{Y}},\mathcal{Q}_{\bm{Y}}).
  \label{eq:appendix_uniform_transform}
\end{equation}
Hence
\begin{align}
  \Prob\!\left(
    \Delta_{\bm{Y}}\in C^{\Delta,+}_{1-\alpha}(\bm{Y})
    \,\middle|\,
    \mathcal{E}_{\bm{Y}},\mathcal{Q}_{\bm{Y}}
  \right)
  &=
  \Prob\!\left(
    G_{\Delta_{\bm{Y}}}^{\mathrm{sel}}\!\left(T(\bm{Y})\right)
    \le
    1-\alpha
    \,\middle|\,
    \mathcal{E}_{\bm{Y}},\mathcal{Q}_{\bm{Y}}
  \right)
  \notag\\
  &=
  1-\alpha,
\end{align}
which proves \eqref{eq:appendix_one_sided_ci_conditional}.
Likewise,
\begin{align}
  \Prob\!\left(
    \Delta_{\bm{Y}}\in C^{\Delta}_{1-\alpha}(\bm{Y})
    \,\middle|\,
    \mathcal{E}_{\bm{Y}},\mathcal{Q}_{\bm{Y}}
  \right)
  &=
  \Prob\!\left(
    \frac{\alpha}{2}
    \le
    G_{\Delta_{\bm{Y}}}^{\mathrm{sel}}\!\left(T(\bm{Y})\right)
    \le
    1-\frac{\alpha}{2}
    \,\middle|\,
    \mathcal{E}_{\bm{Y}},\mathcal{Q}_{\bm{Y}}
  \right)
  \notag\\
  &=
  1-\alpha,
\end{align}
which proves \eqref{eq:appendix_two_sided_conditional_delta}.

Taking expectations of \eqref{eq:appendix_one_sided_ci_conditional} and \eqref{eq:appendix_two_sided_conditional_delta} over \((\mathcal{E}_{\bm{Y}},\mathcal{Q}_{\bm{Y}})\) yields \eqref{eq:appendix_one_sided_ci_marginal} and \eqref{eq:appendix_two_sided_marginal}.
Under the point null \(\Delta^{\mathrm{sel}}=0\), \eqref{eq:appendix_uniform_transform} becomes
\(
G_0^{\mathrm{sel}}(T(\bm{Y}))\sim \mathrm{Unif}(0,1)
\)
conditionally on \((\mathcal{E}_{\bm{Y}},\mathcal{Q}_{\bm{Y}})\).
Therefore,
\begin{align}
  \Prob_{\Delta^{\mathrm{sel}}=0}\!\left(
    p_{\mathrm{sel}}^{\mathrm{ts}}(\bm{Y})\le \alpha
    \,\middle|\,
    \mathcal{E}_{\bm{Y}},\mathcal{Q}_{\bm{Y}}
  \right)
  &=
  \Prob_{\Delta^{\mathrm{sel}}=0}\!\left(
    G_0^{\mathrm{sel}}(T(\bm{Y}))\le \frac{\alpha}{2}
    \,\middle|\,
    \mathcal{E}_{\bm{Y}},\mathcal{Q}_{\bm{Y}}
  \right)
  \notag\\
  &\quad+
  \Prob_{\Delta^{\mathrm{sel}}=0}\!\left(
    G_0^{\mathrm{sel}}(T(\bm{Y}))\ge 1-\frac{\alpha}{2}
    \,\middle|\,
    \mathcal{E}_{\bm{Y}},\mathcal{Q}_{\bm{Y}}
  \right)
  \notag\\
  &=
  \alpha,
\end{align}
which proves \eqref{eq:appendix_two_sided_conditional_exact}.
Taking expectations yields \eqref{eq:appendix_two_sided_marginal_exact}.
The statement in Theorem~\ref{thm:ci_validity} is exactly the marginal coverage property in \eqref{eq:appendix_two_sided_marginal}.
\end{proof}

\newpage
\section{Computation of the Interval Set \(\mathcal{Z}\)}
\label{app:computation}

\subsection{A Toy Example}
\label{subsec:toy_example}

To give intuition for why the conditional distribution in our setting becomes truncated normal, we start with the smallest example that already mirrors the setting summarized in \S~\ref{sec:problem_setup} and \S~\ref{sec:target_of_inference}.
Fix \(\mathcal X=\{\bm{x}^{(1)},\bm{x}^{(2)},\bm{x}^{(3)}\}\) and a collection budget \(N=2\).
The first query is always \(\bm{x}_1=\bm{x}^{(1)}\), and for \(\mathcal D_1=\{(\bm x_1,Y_1)\}\) define
\[
  \mathcal A^{\rm ADC}(\mathcal D_1)
  =
  \begin{cases}
    \bm{x}^{(2)}, & Y_1<0,\\
    \bm{x}^{(3)}, & Y_1\ge 0.
  \end{cases}
\]
Equivalently, the queried trajectory is \((\bm{x}^{(1)},\bm{x}^{(2)})\) when \(Y_1<0\) and \((\bm{x}^{(1)},\bm{x}^{(3)})\) when \(Y_1\ge 0\).
This is the simplest example of a piecewise-constant ADC map in \S~\ref{sec:problem_setup}: for the fixed first query, the update is constant on the two halfspaces \(\{Y_1<0\}\) and \(\{Y_1\ge 0\}\).
As in \S~\ref{sec:selective_pvalues}, once the realized trajectory is fixed, the realized observed response vector is Gaussian,
\(
\bm{Y}=(Y_1,Y_2)^\top\sim N(\bm{\mu},I_2)
\),
where \(Y_2\) denotes the second response on the realized branch.

Next define the target-construction map \(G:\R^2\to\R^2\) by selecting the larger observed response:
\[
  \bm{\eta}_{\bm{Y}}
  =
  G(\bm{Y})
  =
  \begin{cases}
    (0,1)^\top, & Y_2\ge Y_1,\\
    (1,0)^\top, & Y_1>Y_2.
  \end{cases}
\]
Then the target and test statistic from \S~\ref{sec:target_of_inference} are
\(
\Delta_{\bm{Y}}=\bm{\eta}_{\bm{Y}}^\top\bm{\mu}
\)
and
\(
T(\bm{Y})=\bm{\eta}_{\bm{Y}}^\top\bm{Y}
\).
This is the simplest example of a piecewise-constant target-construction map in \S~\ref{sec:target_of_inference}, because \(G\) is constant on the two halfspaces \(\{Y_2\ge Y_1\}\) and \(\{Y_1>Y_2\}\).

Now we consider the conditional distribution
\(
  T_{\bm{\eta}}(\bm{Y})
  \ \Big|\
  \bigl\{
    \mathcal E_{\bm Y}=\mathcal E_{\bm y},
    \ \mathcal Q_{\bm Y}=\mathcal Q_{\bm y}
  \bigr\}
\)
in \eqref{eq:conditional_test_statistic_2}.

Because the pair \((\mathcal T_{\bm y},\bm{\eta}_{\bm y})\) has four possible realizations, the selection event has four branches, each of which corresponds to a different constraint on \(\bm{Y}\).
Figure~\ref{fig:toy_truncation_patterns} summarizes these four cases.
If \(\bm{\eta}=(0,1)^\top\), then \(T_{\bm{\eta}}(\bm{Y})=Y_2\) and \(\mathcal Q_{\bm Y}=(Y_1,0)^\top\), so conditioning on \(\mathcal Q_{\bm Y}=\mathcal Q_{\bm y}\) fixes \(Y_1=y_1\).
The trajectory condition then contributes only the already-fixed sign of \(y_1\), while the remaining event \(\bm{\eta}_{\bm Y}=\bm{\eta}\) is simply \(Y_2\ge y_1\).
Hence the admissible set is \(\mathcal Z=[y_1,\infty)\).
If \(\bm{\eta}=(1,0)^\top\), then \(T_{\bm{\eta}}(\bm{Y})=Y_1\) and \(\mathcal Q_{\bm Y}=(0,Y_2)^\top\), so conditioning fixes \(Y_2=y_2\).
The event \(\bm{\eta}_{\bm Y}=\bm{\eta}\) becomes \(Y_1>y_2\), and the trajectory condition contributes \(Y_1<0\) when the observed trajectory is \((\bm{x}^{(1)},\bm{x}^{(2)})\) and \(Y_1\ge 0\) when it is \((\bm{x}^{(1)},\bm{x}^{(3)})\).
Therefore \(\mathcal Z=(y_2,0)\) in the former case and \(\mathcal Z=[\max\{0,y_2\},\infty)\) in the latter.

\begin{figure}[t]
  \centering
  \begin{minipage}[t]{0.53\linewidth}
    \vspace{0pt}
    \centering
    \scriptsize
    \setlength{\tabcolsep}{2.5pt}
    \renewcommand{\arraystretch}{1.08}
    \resizebox{\linewidth}{!}{
    \begin{tabular}{@{}ccc@{}}
      \toprule
      & \(\bm{\eta}_{\bm y}=(0,1)^\top\) & \(\bm{\eta}_{\bm y}=(1,0)^\top\) \\
      \midrule
      \(\mathcal T_{\bm y}=(1,2)\) &
      \makecell[l]{\(Y_2\ge y_1\)\\ \(\Longleftrightarrow\)\\ \(\mathcal Z=[y_1,\infty)\)} &
      \makecell[l]{\(y_2<Y_1<0\)\\ \(\Longleftrightarrow\)\\ \(\mathcal Z=(y_2,0)\)} \\
      \(\mathcal T_{\bm y}=(1,3)\) &
      \makecell[l]{\(Y_2\ge y_1\)\\ \(\Longleftrightarrow\)\\ \(\mathcal Z=[y_1,\infty)\)} &
      \makecell[l]{\(Y_1\ge 0,\ Y_1>y_2\)\\ \(\Longleftrightarrow\)\\ \(\mathcal Z=[\max\{0,y_2\},\infty)\)} \\
      \bottomrule
    \end{tabular}
    }
  \end{minipage}
  \hfill
  \begin{minipage}[t]{0.43\linewidth}
    \vspace{0pt}
    \centering
    \includegraphics[width=1.0\textwidth,trim=0 0.2cm 0 0,clip]{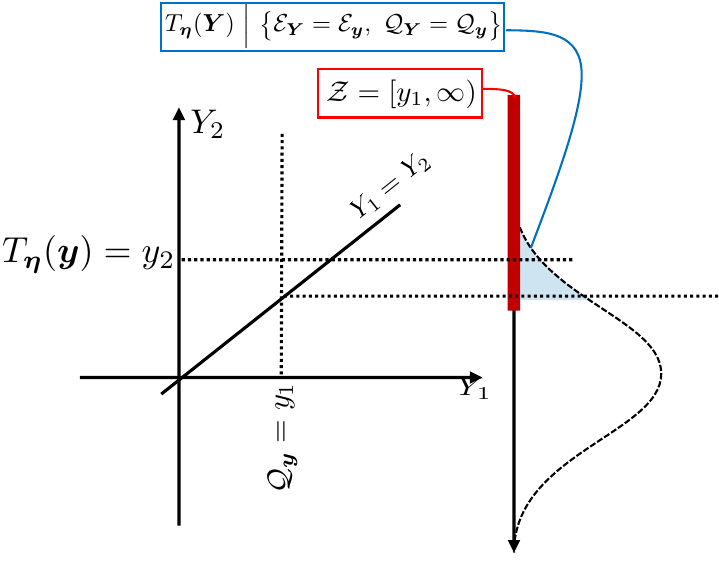}
  \end{minipage}
  \caption{Toy example summary. Left: The four possible realizations of the selection event by trajectory and selected weight vector, with each cell showing the corresponding constraint and the truncation set. Right: A schematic illustration for the branch \(\mathcal T_{\bm y}=(1,2)\) and \(\bm{\eta}_{\bm y}=(1,0)^\top\).}
  \label{fig:toy_truncation_patterns}
\end{figure}

Thus
\[
  T_{\bm{\eta}}(\bm{Y})
  \mid
  \bigl\{
    \mathcal E_{\bm Y}=\mathcal E_{\bm y},
    \ \mathcal Q_{\bm Y}=\mathcal Q_{\bm y}
  \bigr\}
  \sim
  \mathrm{TN}(\Delta^{\mathrm{sel}},1,\mathcal Z),
\]
which is a truncated normal distribution with mean \(\Delta^{\mathrm{sel}}=\bm{\eta}^\top\bm{\mu}\) and variance \(\bm{\eta}^\top I_2 \bm{\eta}=1\), truncated to the set \(\mathcal Z\) defined above.

This simple example gives intuition for the general case: after conditioning on the selection event and the sufficient statistic for the nuisance parameter, the conditional distribution of the test statistic becomes a one-dimensional Gaussian distribution truncated to a set \(\mathcal Z\) that can be represented by a finite collection of linear inequalities.

\subsection{A Sufficient Condition for Computing \(\mathcal{Z}\)}
\label{app:admissibility_framework}

Our framework enables the exact computation of \(\mathcal{Z}\) for a wide class of problems, including many widely used ADC algorithms and target-construction rules, provided that they satisfy the piecewise-constant properties in Definitions~\ref{def:ppc_adc} and~\ref{def:target_selector}.

\begin{definition}[Piecewise-constant ADC maps considered in this paper]
\label{def:ppc_adc}
We say that an ADC algorithm \(\mathcal A^{\rm ADC}\)
belongs to the class considered in this paper if, for every stage
\(n=1,\ldots,N-1\) and every fixed queried sequence
\(\bm x_{1:n}\in\mathcal X^n\), there exists a finite polyhedral partition
\(\{Q_{n,\ell}(\bm x_{1:n})\}_{\ell=1}^{L_n(\bm x_{1:n})}\)
of \(\mathbb R^n\) with associated points
\(\bm x_{n+1,\ell}\in\mathcal X\) such that, for each \(\ell\),
\[
\mathcal A^{\rm ADC}(\mathcal D_n)
=
\bm x_{n+1,\ell}
\qquad
\forall
\mathcal D_n\in(\mathcal X\times\mathbb R)^n
\quad
\mathrm{such\ that}
\quad
\bm y_{1:n}\in Q_{n,\ell}(\bm x_{1:n}).
\]
\end{definition}

\begin{definition}[Piecewise-constant target-construction maps considered in this paper]
\label{def:target_selector}
We say that a data-dependent target \(\Delta_{\bm Y}\) belongs to the target class considered in this paper if its weight vector is defined by a target-construction map
\[
  G: \R^N \to \R^N,
  \qquad
  \bm{Y} \mapsto \bm{\eta}_{\bm{Y}},
\]
that is polyhedral piecewise constant. That is, there exists a finite polyhedral partition \(\{P_k\}_{k=1}^K\) of \(\R^N\) such that, for each \(k\),
\[
  \bm{\eta}_{\bm{Y}} = \bm{c}_k
  \qquad
  \forall \bm{Y}\in P_k,
\]
for some vector \(\bm{c}_k\in\R^N\).
Data-independent maps are a special case of this definition.
\end{definition}

With these definitions in place, we introduce the trajectory map associated with an ADC algorithm,
\[
H_{\mathcal A}:\mathbb R^N \to \mathcal X^N,
\qquad
\bm Y \mapsto \mathcal T_{\bm Y},
\]
which returns the queried trajectory produced by the algorithm from the response vector.
When the ADC algorithm in \S~\ref{sec:problem_setup} is polyhedral piecewise constant
in the sense of Definition~\ref{def:ppc_adc},
the map \(H_{\mathcal A}\) is itself polyhedral piecewise constant.
This geometric property is the key to the proof below.

\begin{definition}[Problems considered in this paper]
\label{def:admissible_problem}
A post-ADC analysis problem belongs to the class considered in this paper if its target of inference belongs to the target class in Definition~\ref{def:target_selector}, its ADC algorithm satisfies Definition~\ref{def:ppc_adc}, and the weight vector \(\bm{\eta}_{\bm{Y}}\) is nonzero.
\end{definition}

\begin{proposition}[Exact computation of the interval set \(\mathcal{Z}\) for problems considered in this paper]
  \label{prop:admissibility_interval_set}
  Fix an observed response vector \(\bm{y}\) with \(\bm{\eta}_{\bm{y}}\neq \bm{0}\), and define the one-dimensional slice \(\bm{Y}(z)=\bm{a}+\bm{b}z\) as in \eqref{eq:one_dimensional_slice}.
  If the problem belongs to the class considered in this paper with target-construction map \(G\) and ADC algorithm \(\mathcal A\), then
  \[
    \mathcal{Z}_{G,\mathcal A}
    =
    \left\{
      z\in\R:
      G(\bm{Y}(z))=G(\bm{y}),
      \
      H_{\mathcal A}(\bm{Y}(z))=H_{\mathcal A}(\bm{y})
    \right\}
  \]
  is a finite union of intervals obtained from finitely many linear equalities and inequalities in \(z\).
  Consequently, \(\mathcal{Z}_{G,\mathcal A}\) can be computed exactly by solving finitely many linear constraints.
\end{proposition}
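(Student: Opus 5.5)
The plan is to split $\mathcal{Z}_{G,\mathcal A}$ into a trajectory part and a target part, write each as a finite union of polyhedral preimages, and then restrict to the affine line $z\mapsto \bm a+\bm b z$. A polyhedron's preimage under an affine map $\R\to\R^N$ is an interval, possibly empty, degenerate, or half-open, since polyhedral partitions may mix strict and non-strict inequalities. Any set built from such intervals by finite intersections and unions is therefore a finite union of intervals.

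First I will treat the target map $G$. By Definition~\ref{def:target_selector}, choose the polyhedral partition $\{P_k\}_{k=1}^K$. Let $\mathcal K=\{k: \bm c_k=G(\bm y)\}$. Then
\[
\{z: G(\bm Y(z))=G(\bm y)\}=\bigcup_{k\in\mathcal K}\{z:\bm a+\bm b z\in P_k\}.
\]
Each $P_k$ is defined by finitely many constraints $\bm\alpha^\top\bm Y\le\beta$ or $<\beta$ (equalities as two inequalities). Substituting gives $(\bm\alpha^\top\bm b)z\le \beta-\bm\alpha^\top\bm a$, which defines an interval. If the coefficient of $z$ vanishes, the constraint either holds for all $z$ or for none.

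Second, for the trajectory map $H_{\mathcal A}$, I will show by induction on $n$ that the event $\{\bm x_{1:n+1}(\bm Y)=\bm x_{1:n+1}(\bm y)\}$ is a polyhedron in $\R^N$. This is the step needing care, because the partition at stage $n$ depends on the previously queried points, which are themselves data-dependent. The argument works because we condition on the observed trajectory. Fix the realized $\bm x_{1:N}=\mathcal T_{\bm y}$. Then
\[
\{H_{\mathcal A}(\bm Y)=\mathcal T_{\bm y}\}=\bigcap_{n=1}^{N-1}\bigl\{\bm Y_{1:n}\in \textstyle\bigcup_{\ell:\ \bm x_{n+1,\ell}=\bm x_{n+1}}Q_{n,\ell}(\bm x_{1:n})\bigr\}.
\]
This identity needs an induction. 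If the first $n$ queries agree with the realized ones, then the $(n+1)$-th query is determined by which cell of the fixed partition $\{Q_{n,\ell}(\bm x_{1:n})\}$ contains $\bm Y_{1:n}$. Conversely, if all the stagewise conditions hold, the trajectories agree step by step. The initial query $\bm x_1$ is fixed and does not depend on the data. Each set $\{\bm Y: \bm Y_{1:n}\in Q\}$ is a polyhedron in $\R^N$ (a cylinder over $Q$), so the trajectory event is a finite intersection of finite unions of polyhedra.

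Restricting to the line, each stagewise set becomes a finite union of intervals, and so does their intersection. Intersecting with the target set from the first step again gives a finite union of intervals. The endpoints are the explicit ratios $(\beta-\bm\alpha^\top\bm a)/(\bm\alpha^\top\bm b)$, so $\mathcal Z_{G,\mathcal A}$ can be computed exactly by evaluating these finitely many linear constraints and taking unions and intersections of the resulting intervals. The main obstacle is the bookkeeping in the induction for $H_{\mathcal A}$: the partitions change with the trajectory, and the argument has to be anchored to the observed trajectory. The rest is routine.
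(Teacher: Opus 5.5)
Your proposal is correct and takes the same route as the paper. Both write the target event and the trajectory event as finite unions of polyhedra, intersect them, and restrict to the line $\bm a+\bm b z$. Where the paper disposes of the trajectory event with ``by the same reasoning'', relying on its unproved remark that $H_{\mathcal A}$ is polyhedral piecewise constant, your stagewise induction anchored at the observed trajectory $\mathcal T_{\bm y}$ supplies the missing justification.
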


\begin{proof}
For each target-construction cell \(P_k\), the event
\[
  \{ \bm{Y}\in P_k,\ G(\bm{Y})=G(\bm{y}) \}
\]
is a polyhedron because \(P_k\) is polyhedral and \(G\) is constant on \(P_k\).
Taking the union over finitely many cells shows that the target-construction event
\[
  \{G(\bm{Y})=G(\bm{y})\}
\]
is a finite union of polyhedra.
By the same reasoning, the trajectory event
\[
  \{H_{\mathcal A}(\bm{Y})=H_{\mathcal A}(\bm{y})\}
\]
is a finite union of polyhedra.
Their intersection is therefore a finite union of polyhedra in \(\R^N\).
Intersecting these polyhedra with the line
\[
  \{\bm{a}+\bm{b}z:\ z\in\R\}
\]
produces a finite union of intervals on the \(z\)-axis, each described by linear equalities and inequalities in \(z\).
\end{proof}

\subsection{Algorithm-Specific Interval Computation Details}
\label{app:interval_computation}

This subsection provides more details on how to compute the interval set \(\mathcal{Z}\) for the GP-UCB and TPE algorithms.

To compute the interval \([L_z,U_z]\) around a seed point \(z\in\R\), it is enough to derive conditions under which applying the same ADC algorithm recursively to \((\mathcal{X},\bm{a}+\bm{b}r)\) yields the same observed selection event as applying it to \((\mathcal{X},\bm{a}+\bm{b}z)\).
In the main-text setting, the regions \((\mathcal{I}_{\bm{a}+\bm{b}r},\mathcal{J}_{\bm{a}+\bm{b}r})\) are determined by order relations among the observed responses, so once the queried trajectory is fixed they can be represented by linear inequalities in \(r\).

The remaining challenge is therefore to characterize when the queried trajectory and the acquisition-score comparisons from the original ADC run are unchanged.
Suppose that the first \(n\) ADC steps agree with that run.
Then \(\bm{X}_n\) is fixed, and the next queried point \(\bm{x}_{n+1}\) is reproduced whenever
\[
  \mathcal A^{\rm ADC}(\mathcal D_n(r))=\bm x_{n+1}.
\]
For acquisition-maximizing ADC algorithms, this is equivalent to
\begin{equation}
  \forall \bm{x}\in\mathcal{X},\
  a(\bm{x};\mathcal{D}_n(r))
  \le
  a(\bm{x}_{n+1};\mathcal{D}_n(r)),
  \label{eq:appendix_next_point_condition}
\end{equation}
where \(\mathcal{D}_n(r)=\{(\bm{x}_i,y_i(r))\}_{i=1}^n\) is the restricted dataset formed from the responses
\(
\bm{y}_{1:n}(r)=\bm{a}_n+\bm{b}_n r
\).

\paragraph{GP-UCB.}
For GP-UCB, the posterior mean has the form
\[
  \mu_n(\bm{x};\bm{y}_{1:n}(r))
  =
  \bm{k}_n(\bm{x})^\top (K_n+\sigma^2 I_n)^{-1} \bm{y}_{1:n}(r),
\]
which is affine in the restricted response vector \(\bm{y}_{1:n}(r)=\bm{a}_n+\bm{b}_n r\).
The posterior variance \(s_n^2(\bm{x})\) depends only on the queried inputs \(\bm{x}_{1:n}\) and is therefore fixed once the history is fixed.
Hence, for every candidate \(\bm{x}\in\mathcal X\), the GP-UCB comparison
\[
  \mu_n(\bm{x};\bm{y}_{1:n}(r)) + \kappa s_n(\bm{x})
  \le
  \mu_n(\bm{x}_{n+1};\bm{y}_{1:n}(r)) + \kappa s_n(\bm{x}_{n+1})
\]
is a linear inequality in \(r\).
Intersecting these inequalities over all \(\bm{x}\in\mathcal X\) and over stages \(n=1,\ldots,N-1\) yields a finite family of linear inequalities that characterizes the GP-UCB part of the selection-event-consistent set along the slice.

\paragraph{TPE.}
Fix \(m_n=\lceil \gamma n\rceil\), and let \(\mathcal{H}_n(\bm{y})\) and \(\mathcal{L}_n(\bm{y})\) denote the upper and lower TPE partitions induced by \(\bm{y}\).
To keep the same TPE acquisition comparison on the slice in \eqref{eq:one_dimensional_slice}, we condition on the quantile-partition event
\begin{equation}
  y_i(r)\ge y_j(r)
  \qquad
  \forall i\in\mathcal{H}_n(\bm{y}),
  \ \forall j\in\mathcal{L}_n(\bm{y}).
  \label{eq:appendix_tpe_partition}
\end{equation}
Because each \(y_i(r)\) has the form \(a_i+b_i r\), \eqref{eq:appendix_tpe_partition} is again a finite collection of linear inequalities.
Up to the probability-zero tie boundary, this conditioning is equivalent to requiring
\(
\mathcal{H}_n(\bm{y}(r))=\mathcal{H}_n(\bm{y})
\)
and
\(
\mathcal{L}_n(\bm{y}(r))=\mathcal{L}_n(\bm{y})
\).

Once the partition is fixed, the TPE score no longer depends on the response magnitudes.
Indeed, define
\begin{equation}
  \begin{aligned}
  \bar g_n(\bm{x})
  &:=
  \frac{1}{|\mathcal{H}_n(\bm{y})|}
  \sum_{i\in\mathcal{H}_n(\bm{y})}
  \kappa_{\mathrm{TPE}}(\bm{x},\bm{x}_i),\\
  \bar \ell_n(\bm{x})
  &:=
  \frac{1}{|\mathcal{L}_n(\bm{y})|}
  \sum_{i\in\mathcal{L}_n(\bm{y})}
  \kappa_{\mathrm{TPE}}(\bm{x},\bm{x}_i).
  \end{aligned}
  \label{eq:appendix_tpe_constants}
\end{equation}
Then for every \(r\) satisfying \eqref{eq:appendix_tpe_partition},
\begin{equation}
  a^{\mathrm{TPE}}(\bm{x};\mathcal D_n(r))
  =
  \frac{\bar g_n(\bm{x})}{\bar \ell_n(\bm{x})},
\end{equation}
which is constant in \(r\).
Since \(\kappa_{\mathrm{TPE}}>0\), all denominators are positive, and the next-point condition
\[
  a^{\mathrm{TPE}}(\bm{x};\mathcal D_n(r))
  \le
  a^{\mathrm{TPE}}(\bm{x}_{n+1};\mathcal D_n(r))
\]
is equivalent to the constant inequality
\begin{equation}
  \bar g_n(\bm{x})\bar \ell_n(\bm{x}_{n+1})
  \le
  \bar g_n(\bm{x}_{n+1})\bar \ell_n(\bm{x}).
\end{equation}
Thus the TPE contribution to the truncation set is determined by linear quantile-partition constraints.
It is combined with fixed score comparisons on the finite candidate set.

\newpage
\section{Randomized-Response Extension}
\label{app:randomized_extension}

This appendix presents a randomized-response variant of the selective distribution in our setting.
The randomization replaces the hard-truncation distribution on \(\mathcal{Z}\) by a randomized distribution on \(\widetilde{\mathcal{Z}}\), stabilizes the resulting conditional distribution, and avoids the interval-length instability studied in randomized selective inference \citep{tian2018selective, kivaranovic2024tight}.

\subsection{Randomized Selection and Conditional Distribution}

Throughout this subsection, assume that both the response covariance and the randomization covariance are isotropic:
\[
  \Sigma=\sigma^2 I_N,
  \qquad
  \Gamma=\tau^2 I_N.
\]
Let
\[
  \bm{\Omega}\sim\mathcal{N}(\bm{0},\Gamma)
\]
be independent of \(\bm{Y}\), where \(\tau^2\ge 0\) is a known randomization level, and define the randomized response vector
\[
  \widetilde{\bm{Y}}=\bm{Y}+\bm{\Omega}.
\]
The optimizer and target-construction map are now applied to \(\widetilde{\bm{Y}}\), whereas inference is still carried out for the original responses \(\bm{Y}\).
Let
\[
  \widetilde{\bm{y}}
  =
  \bm{y}+\bm{\omega}
\]
denote the randomized response vector corresponding to the realization \((\bm{y},\bm{\omega})\), where \(\bm{y}\) and \(\bm{\omega}\) are realizations of \(\bm{Y}\) and \(\bm{\Omega}\), respectively, and fix the resulting weight vector
\(
  \bm{\eta}=\bm{\eta}_{\widetilde{\bm{y}}}
\)
and corresponding effect
\(
  \Delta^{\mathrm{sel,rnd}}=\bm{\eta}^\top \bm{\mu}
\).
As in the unrandomized case, let
\[
  v_{\bm{\eta}}
  =
  \bm{\eta}^\top \Sigma \bm{\eta},
  \qquad
  \tau_{\bm{\eta}}^2
  =
  \bm{\eta}^\top \Gamma \bm{\eta}
  =
  \tau^2\|\bm{\eta}\|_2^2.
\]

\begin{definition}[Randomized selection event and the sufficient statistic for the nuisance parameter]
  \label{def:appendix_randomized_selection}
  Define
  \begin{equation}
    \widetilde{\mathcal{E}}_{\widetilde{\bm{Y}}}
    =
    \bigl(
      \mathcal{T}_{\widetilde{\bm{Y}}},
      \bm{\eta}_{\widetilde{\bm{Y}}}
    \bigr),
    \label{eq:appendix_randomized_selection_event}
  \end{equation}
  with observed value
  \(
    \widetilde{\mathcal{E}}_{\widetilde{\bm{y}}}
    =
    \bigl(
      \mathcal{T}_{\widetilde{\bm{y}}},
      \bm{\eta}_{\widetilde{\bm{y}}}
    \bigr)
  \),
  and
  \begin{equation}
    \widetilde{\mathcal{Q}}_{\widetilde{\bm{Y}}}
    =
    \left(
      I_N
      -
      \frac{(\Sigma+\Gamma)\bm{\eta}\bm{\eta}^\top}
           {\bm{\eta}^\top(\Sigma+\Gamma)\bm{\eta}}
    \right)\widetilde{\bm{Y}}
    =
    \left(
      I_N
      -
      \frac{\bm{\eta}\bm{\eta}^\top}
           {\bm{\eta}^\top \bm{\eta}}
    \right)\widetilde{\bm{Y}}.
    \label{eq:appendix_randomized_nuisance}
  \end{equation}
\end{definition}

The statistic used for inference remains the weight vector in the unrandomized responses,
\[
  T(\bm{Y})=\bm{\eta}^\top \bm{Y},
\]
with the weight vector fixed by the randomized selection.
Its observed value is \(T(\bm{y})=\bm{\eta}^\top \bm{y}\).

\begin{proposition}[Randomized conditional distribution and CDF]
  \label{prop:appendix_randomized_distribution}
  Suppose the selection map belongs to the class considered in Appendix~\ref{app:admissibility_framework} when viewed as a function of \(\widetilde{\bm{Y}}\).
Define
  \begin{equation}
    \widetilde{\bm{a}}
    =
    \widetilde{\mathcal{Q}}_{\widetilde{\bm{y}}},
    \qquad
    \widetilde{\bm{b}}
    =
    \frac{(\Sigma+\Gamma)\bm{\eta}}
         {\bm{\eta}^\top(\Sigma+\Gamma)\bm{\eta}}
    =
    \frac{\bm{\eta}}
         {\bm{\eta}^\top \bm{\eta}},
    \qquad
    \widetilde{\bm{Y}}(z)
    =
    \widetilde{\bm{a}}+\widetilde{\bm{b}}z,
    \label{eq:appendix_randomized_line}
  \end{equation}
  and let
  \begin{equation}
    \widetilde{\mathcal{Z}}
    =
    \left\{
      z\in\R:
      \widetilde{\mathcal{E}}_{\widetilde{\bm{Y}}(z)}
      =
      \widetilde{\mathcal{E}}_{\widetilde{\bm{y}}}
    \right\}.
    \label{eq:appendix_randomized_truncation_set}
  \end{equation}
  Then \(\widetilde{\mathcal{Z}}\) is a finite union of intervals.
Moreover, if
  \[
    Z\sim\mathcal{N}(\Delta^{\mathrm{sel,rnd}}, v_{\bm{\eta}}),
    \qquad
    R\sim\mathcal{N}(0,\tau_{\bm{\eta}}^2)
  \]
  are independent, then
  \begin{equation}
    \mathcal{L}\!\left(
      T(\bm{Y})
      \,\middle|\,
      \widetilde{\mathcal{E}}_{\widetilde{\bm{Y}}}=\widetilde{\mathcal{E}}_{\widetilde{\bm{y}}},
      \widetilde{\mathcal{Q}}_{\widetilde{\bm{Y}}}=\widetilde{\mathcal{Q}}_{\widetilde{\bm{y}}}
    \right)
    =
    \mathcal{L}\!\left(
      Z
      \,\middle|\,
      Z+R\in\widetilde{\mathcal{Z}}
    \right).
    \label{eq:appendix_randomized_distribution}
  \end{equation}
  Equivalently, writing
  \[
    \phi_{\mu,s^2}(x)
    :=
    \frac{1}{\sqrt{2\pi s^2}}
    \exp\!\left(
      -\frac{(x-\mu)^2}{2s^2}
    \right)
  \]
  for the \(\mathcal{N}(\mu,s^2)\) density and
  \[
    w_{\widetilde{\mathcal{Z}}}(t)
    :=
    \int_{\R}
    \phi_{0,\tau_{\bm{\eta}}^2}(r)\,
    \mathbf{1}\{t+r\in\widetilde{\mathcal{Z}}\}\,dr,
  \]
  then for every \(\Delta\in\R\), if
  \(
    Z_\Delta\sim\mathcal{N}(\Delta,v_{\bm{\eta}})
  \)
  is independent of \(R\), the corresponding randomized selective CDF is
  \begin{equation}
    G^{\mathrm{rnd}}_{\Delta,\widetilde{\mathcal{Z}}}(t)
    :=
    \Prob\!\left(
      Z_\Delta\le t
      \,\middle|\,
      Z_\Delta+R\in\widetilde{\mathcal{Z}}
    \right)
    =
    \frac{
      \displaystyle
      \int_{-\infty}^{t}
      \phi_{\Delta,v_{\bm{\eta}}}(u)\,
      w_{\widetilde{\mathcal{Z}}}(u)\,du
    }{
      \displaystyle
      \int_{\R}
      \phi_{\Delta,v_{\bm{\eta}}}(u)\,
      w_{\widetilde{\mathcal{Z}}}(u)\,du
    }.
    \label{eq:appendix_randomized_distribution_cdf}
  \end{equation}
\end{proposition}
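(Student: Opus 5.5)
The proof has two parts. The first shows that $\widetilde{\mathcal{Z}}$ is a finite union of intervals. The second identifies the conditional law of $T(\bm{Y})$.

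For the first part, apply Proposition~\ref{prop:admissibility_interval_set} verbatim, with $\widetilde{\bm{Y}}$ in place of $\bm{Y}$ and the line $\widetilde{\bm{a}}+\widetilde{\bm{b}}z$ in place of $\bm{a}+\bm{b}z$. By assumption the trajectory map and target map are polyhedral piecewise constant as functions of $\widetilde{\bm{Y}}$. Hence $\{\widetilde{\mathcal{E}}_{\widetilde{\bm{Y}}}=\widetilde{\mathcal{E}}_{\widetilde{\bm{y}}}\}$ is a finite union of polyhedra, and its intersection with a line is a finite union of intervals. Nothing new is needed here.

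For the distributional claim, work with the jointly Gaussian vector $(\bm{Y},\bm{\Omega})$, fix $\bm{\eta}$, and set $Z:=\bm{\eta}^\top\bm{Y}$ and $R:=\bm{\eta}^\top\bm{\Omega}$. Then $Z\sim\mathcal{N}(\Delta^{\mathrm{sel,rnd}},v_{\bm{\eta}})$ and $R\sim\mathcal{N}(0,\tau^2_{\bm{\eta}})$ are independent, and $\bm{\eta}^\top\widetilde{\bm{Y}}=Z+R$. Because $\widetilde{\bm{b}}=\bm{\eta}/\bm{\eta}^\top\bm{\eta}$, we have the decomposition $\widetilde{\bm{Y}}=\widetilde{\mathcal{Q}}_{\widetilde{\bm{Y}}}+\widetilde{\bm{b}}\,(Z+R)$.

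The key step is to show that $(Z,R)$ is independent of $\widetilde{\mathcal{Q}}_{\widetilde{\bm{Y}}}=(I_N-\bm{\eta}\bm{\eta}^\top/\bm{\eta}^\top\bm{\eta})(\bm{Y}+\bm{\Omega})$. Compute the cross-covariances:
\begin{align}
  \operatorname{Cov}(\widetilde{\mathcal{Q}}_{\widetilde{\bm{Y}}},Z) &= (I_N-\bm{\eta}\bm{\eta}^\top/\bm{\eta}^\top\bm{\eta})\,\sigma^2\bm{\eta}=\bm{0},\\
  \operatorname{Cov}(\widetilde{\mathcal{Q}}_{\widetilde{\bm{Y}}},R) &= (I_N-\bm{\eta}\bm{\eta}^\top/\bm{\eta}^\top\bm{\eta})\,\tau^2\bm{\eta}=\bm{0}.
\end{align}
Both vanish because $\Sigma$ and $\Gamma$ are isotropic. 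This is exactly why the isotropy assumption is imposed, since with general covariances a single projection could not decorrelate both components. Joint Gaussianity then gives independence.

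Now condition on $\widetilde{\mathcal{Q}}_{\widetilde{\bm{Y}}}=\widetilde{\bm{a}}$. On this event, $\widetilde{\bm{Y}}=\widetilde{\bm{Y}}(Z+R)$, so the selection event becomes $\{Z+R\in\widetilde{\mathcal{Z}}\}$. By independence, conditioning on $\widetilde{\mathcal{Q}}$ leaves the joint law of $(Z,R)$ unchanged. Therefore the law of $T(\bm{Y})=Z$ given both conditions is $\mathcal{L}(Z\mid Z+R\in\widetilde{\mathcal{Z}})$. One point needs care: on the selection event the fixed $\bm{\eta}$ coincides with $\bm{\eta}_{\widetilde{\bm{Y}}}$, so $T(\bm{Y})=\bm{\eta}^\top\bm{Y}$, as in the proof of Lemma~\ref{lem:line_reduction}.

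The CDF formula follows by Fubini. Write
\[
\Prob(Z\le t,\,Z+R\in\widetilde{\mathcal{Z}})=\int_{-\infty}^t\phi_{\Delta,v_{\bm{\eta}}}(u)\,w_{\widetilde{\mathcal{Z}}}(u)\,du,
\]
which uses independence of $Z$ and $R$. Dividing by the $t=\infty$ version gives \eqref{eq:appendix_randomized_distribution_cdf}.

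The main obstacle is making the conditioning on a measure-zero event $\{\widetilde{\mathcal{Q}}=\widetilde{\bm{a}}\}$ rigorous. I would handle it as in Lemma~\ref{lem:line_reduction}: write the joint density of $(Z,R,\widetilde{\mathcal{Q}})$ on the appropriate subspace, where it factorizes, and restrict it to the selection set. I also need the normalizer $\int\phi\,w>0$. This holds because the observed point lies in $\widetilde{\mathcal{Z}}$, and in the nondegenerate case $\widetilde{\mathcal{Z}}$ has positive length.
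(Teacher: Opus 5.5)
Your proposal is correct and follows essentially the same route as the paper: the decomposition $\widetilde{\bm{Y}}=\widetilde{\mathcal{Q}}_{\widetilde{\bm{Y}}}+\widetilde{\bm{b}}(Z+R)$, the two vanishing cross-covariances (which use isotropy) giving independence of $(T(\bm{Y}),R)$ from $\widetilde{\mathcal{Q}}_{\widetilde{\bm{Y}}}$, the reduction of the selection event to $\{Z+R\in\widetilde{\mathcal{Z}}\}$, and Bayes' rule/Fubini for the CDF. Your extra remarks on conditioning on the measure-zero event and on positivity of the normalizer address points the paper leaves implicit.
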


\begin{proof}
Let
\(
\widetilde{\bm{Y}}=\bm{Y}+\bm{\Omega}
\)
and fix the resulting weight vector
\(
\bm{\eta}=\bm{\eta}_{\widetilde{\bm{y}}}
\).
\[
  U
  :=
  \bm{\eta}^\top \widetilde{\bm{Y}}
  =
  \bm{\eta}^\top \bm{Y}
  +
  \bm{\eta}^\top \bm{\Omega}
  =
  T(\bm{Y})+R,
\]
where
\(
R=\bm{\eta}^\top \bm{\Omega}\sim\mathcal{N}(0,\tau_{\bm{\eta}}^2)
\)
is independent of
\(
T(\bm{Y})=\bm{\eta}^\top \bm{Y}\sim\mathcal{N}(\Delta^{\mathrm{sel,rnd}},v_{\bm{\eta}})
\).
Because \(\Sigma=\sigma^2 I_N\) and \(\Gamma=\tau^2 I_N\), we have
\[
  \Sigma+\Gamma=(\sigma^2+\tau^2)I_N,
\]
so the projector in \eqref{eq:appendix_randomized_nuisance} reduces to
\[
  P_{\bm{\eta}}
  :=
  I_N
  -
  \frac{(\Sigma+\Gamma)\bm{\eta}\bm{\eta}^\top}
       {\bm{\eta}^\top(\Sigma+\Gamma)\bm{\eta}}
  =
  I_N
  -
  \frac{\bm{\eta}\bm{\eta}^\top}
       {\bm{\eta}^\top\bm{\eta}},
\]
and \(\widetilde{\mathcal{Q}}_{\widetilde{\bm{Y}}}=P_{\bm{\eta}}\widetilde{\bm{Y}}\).
The direction vector in \eqref{eq:appendix_randomized_line} likewise reduces to its Euclidean form.
Also,
\[
  \widetilde{\bm{Y}}-\widetilde{\mathcal{Q}}_{\widetilde{\bm{Y}}}
  =
  (I_N-P_{\bm{\eta}})\widetilde{\bm{Y}}
  =
  \widetilde{\bm{b}}\,U.
\]
Hence
\[
  \widetilde{\bm{Y}}
  =
  \widetilde{\mathcal{Q}}_{\widetilde{\bm{Y}}}
  +
  \widetilde{\bm{b}}\,U.
\]
Conditioning on
\(
\widetilde{\mathcal{Q}}_{\widetilde{\bm{Y}}}
=
\widetilde{\mathcal{Q}}_{\widetilde{\bm{y}}}
\)
therefore yields the one-dimensional representation
\[
  \widetilde{\bm{Y}}
  =
  \widetilde{\bm{a}}+\widetilde{\bm{b}}u,
  \qquad
  u=U,
\]
with \(\widetilde{\bm{a}}\) and \(\widetilde{\bm{b}}\) as in \eqref{eq:appendix_randomized_line}.
Since the selection map belongs to the class considered in Appendix~\ref{app:admissibility_framework} as a function of \(\widetilde{\bm{Y}}\), intersecting its polyhedral cells with this line shows that \(\widetilde{\mathcal{Z}}\) in \eqref{eq:appendix_randomized_truncation_set} is a finite union of intervals.
Therefore,
\[
  \widetilde{\mathcal{E}}_{\widetilde{\bm{Y}}}
  =
  \widetilde{\mathcal{E}}_{\widetilde{\bm{y}}}
  \iff
  U\in\widetilde{\mathcal{Z}}
  \iff
  T(\bm{Y})+R\in\widetilde{\mathcal{Z}},
\]
under the condition
\(
\widetilde{\mathcal{Q}}_{\widetilde{\bm{Y}}}
=
\widetilde{\mathcal{Q}}_{\widetilde{\bm{y}}}
\).

\begin{align*}
  \operatorname{Cov}\!\left(
    T(\bm{Y}),
    \widetilde{\mathcal{Q}}_{\widetilde{\bm{Y}}}
  \right)
  &=
  \operatorname{Cov}\!\left(
    \bm{\eta}^\top\bm{Y},
    P_{\bm{\eta}}(\bm{Y}+\bm{\Omega})
  \right) \\
  &=
  \bm{\eta}^\top \Sigma P_{\bm{\eta}}^\top \\
  &=
  \bm{0}^\top,
  \\
  \operatorname{Cov}\!\left(
    R,
    \widetilde{\mathcal{Q}}_{\widetilde{\bm{Y}}}
  \right)
  &=
  \operatorname{Cov}\!\left(
    \bm{\eta}^\top\bm{\Omega},
    P_{\bm{\eta}}(\bm{Y}+\bm{\Omega})
  \right) \\
  &=
  \bm{\eta}^\top \Gamma P_{\bm{\eta}}^\top \\
  &=
  \bm{0}^\top.
\end{align*}
Thus the jointly Gaussian pair \(\bigl(T(\bm{Y}),R\bigr)\) is independent of \(\widetilde{\mathcal{Q}}_{\widetilde{\bm{Y}}}\). Hence
\begin{align}
  &\mathcal{L}\!\left(
    T(\bm{Y})
    \,\middle|\,
    \widetilde{\mathcal{E}}_{\widetilde{\bm{Y}}}
    =
    \widetilde{\mathcal{E}}_{\widetilde{\bm{y}}},
    \widetilde{\mathcal{Q}}_{\widetilde{\bm{Y}}}
    =
    \widetilde{\mathcal{Q}}_{\widetilde{\bm{y}}}
  \right)
  \notag\\
  &\qquad=
  \mathcal{L}\!\left(
    T(\bm{Y})
    \,\middle|\,
    T(\bm{Y})+R\in\widetilde{\mathcal{Z}},
    \widetilde{\mathcal{Q}}_{\widetilde{\bm{Y}}}
    =
    \widetilde{\mathcal{Q}}_{\widetilde{\bm{y}}}
  \right)
  \notag\\
  &\qquad=
  \mathcal{L}\!\left(
    T(\bm{Y})
    \,\middle|\,
    T(\bm{Y})+R\in\widetilde{\mathcal{Z}}
  \right).
\end{align}
Replacing \(T(\bm{Y})\) by an equal-distribution copy \(Z\sim\mathcal{N}(\Delta^{\mathrm{sel,rnd}},v_{\bm{\eta}})\), independent of \(R\), gives \eqref{eq:appendix_randomized_distribution}.
At this stage the conditioning on \(\widetilde{\mathcal{Q}}_{\widetilde{\bm{Y}}}\) has already been removed, so no independence between \(Z\) and \(\widetilde{\mathcal{Q}}_{\widetilde{\bm{Y}}}\) is needed; \(Z\) is only a copy of \(T(\bm{Y})\) used to represent the resulting one-dimensional distribution.
Now fix \(\Delta\in\R\), let \(Z_\Delta\sim\mathcal{N}(\Delta,v_{\bm{\eta}})\) be independent of \(R\), and let \(t\in\R\). By Bayes' rule,
\begin{align*}
  G^{\mathrm{rnd}}_{\Delta,\widetilde{\mathcal{Z}}}(t)
  &=
  \Prob\!\left(
    Z_\Delta\le t
    \,\middle|\,
    Z_\Delta+R\in\widetilde{\mathcal{Z}}
  \right) \\
  &=
  \frac{
    \Prob\!\left(
      Z_\Delta\le t,\,
      Z_\Delta+R\in\widetilde{\mathcal{Z}}
    \right)
  }{
    \Prob\!\left(
      Z_\Delta+R\in\widetilde{\mathcal{Z}}
    \right)
  }\\
  &=
  \frac{
    \displaystyle
    \int_{-\infty}^{t}
    \phi_{\Delta,v_{\bm{\eta}}}(u)\,
    w_{\widetilde{\mathcal{Z}}}(u)\,du
  }{
    \displaystyle
    \int_{\R}
    \phi_{\Delta,v_{\bm{\eta}}}(u)\,
    w_{\widetilde{\mathcal{Z}}}(u)\,du
  }.
\end{align*}
This proves \eqref{eq:appendix_randomized_distribution_cdf}. Differentiating in \(t\) yields the corresponding density
\[
  f^{\mathrm{rnd}}_{\Delta,\widetilde{\mathcal{Z}}}(t)
  =
  \frac{
    \phi_{\Delta,v_{\bm{\eta}}}(t)\,
    w_{\widetilde{\mathcal{Z}}}(t)
  }{
    \displaystyle
    \int_{\R}
    \phi_{\Delta,v_{\bm{\eta}}}(u)\,
    w_{\widetilde{\mathcal{Z}}}(u)\,du
  }.
\]
\end{proof}

\subsection{Randomized Inferential Outputs}

For the observed selection event and the sufficient statistic for the nuisance parameter associated with \(\widetilde{\bm{Y}}\), write
\begin{equation}
  G_{\Delta}^{\mathrm{sel,rnd}}(t)
  =
  \Prob_{\Delta^{\mathrm{sel,rnd}}=\Delta}\!\left(
    T(\bm{Y})\le t
    \,\middle|\,
    \widetilde{\mathcal{E}}_{\widetilde{\bm{Y}}}
    =
    \widetilde{\mathcal{E}}_{\widetilde{\bm{y}}},
    \
    \widetilde{\mathcal{Q}}_{\widetilde{\bm{Y}}}
    =
    \widetilde{\mathcal{Q}}_{\widetilde{\bm{y}}}
  \right),
  \label{eq:appendix_randomized_cdf}
\end{equation}
By Proposition~\ref{prop:appendix_randomized_distribution}, this is exactly the CDF in \eqref{eq:appendix_randomized_distribution_cdf} for the observed selection-consistent set \(\widetilde{\mathcal{Z}}\).

\begin{definition}[Randomized selective \(p\)-value and confidence interval]
  \label{def:appendix_randomized_outputs}
  The randomized one-sided selective \(p\)-value is
  \begin{equation}
    p_{\mathrm{sel}}^{\mathrm{rnd}}(\bm{y},\bm{\omega})
    =
    \Prob_{\Delta^{\mathrm{sel,rnd}}=0}\!\left(
      T(\bm{Y})\ge T(\bm{y})
      \,\middle|\,
      \widetilde{\mathcal{E}}_{\widetilde{\bm{Y}}}=\widetilde{\mathcal{E}}_{\widetilde{\bm{y}}},
      \
      \widetilde{\mathcal{Q}}_{\widetilde{\bm{Y}}}
      =
      \widetilde{\mathcal{Q}}_{\widetilde{\bm{y}}}
    \right),
    \label{eq:appendix_randomized_pvalue}
  \end{equation}
  and the corresponding equal-tailed randomized selective confidence interval is
  \begin{equation}
    C_{1-\alpha}^{\Delta,\mathrm{rnd}}
    (\bm{y},\bm{\omega})
    =
    \left\{
      \Delta\in\R:
      \frac{\alpha}{2}
      \le
      G_{\Delta}^{\mathrm{sel,rnd}}\!\left(T(\bm{y})\right)
      \le
      1-\frac{\alpha}{2}
    \right\}.
    \label{eq:appendix_randomized_ci}
  \end{equation}
  When \(\tau^2=0\), these expressions reduce to \eqref{eq:selective_pvalue} and \eqref{eq:selective_ci_delta}.
\end{definition}

\begin{proposition}[Validity of the randomized selective procedures]
  \label{prop:appendix_randomized_validity}
  Under the assumptions of Proposition~\ref{prop:appendix_randomized_distribution}, the induced randomized selective \(p\)-value is exactly uniform under the simple null hypothesis and valid for the composite null.
In particular, for every \(\alpha\in(0,1)\),
  \begin{equation}
    \Prob_{\Delta^{\mathrm{sel,rnd}}=0}\!\left(
      p_{\mathrm{sel}}^{\mathrm{rnd}}(\bm{Y},\bm{\Omega})\le\alpha
      \,\middle|\,
      \widetilde{\mathcal{E}}_{\widetilde{\bm{Y}}},
      \widetilde{\mathcal{Q}}_{\widetilde{\bm{Y}}}
    \right)
    =
    \alpha
    \qquad \text{a.s.}
    \label{eq:appendix_randomized_pvalue_exact}
  \end{equation}
  and therefore
  \begin{equation}
    \Prob_{\Delta^{\mathrm{sel,rnd}}=0}\!\left(
      p_{\mathrm{sel}}^{\mathrm{rnd}}(\bm{Y},\bm{\Omega})\le\alpha
    \right)
    =
    \alpha.
    \label{eq:appendix_randomized_pvalue_marginal_exact}
  \end{equation}
  Consequently,
  \begin{equation}
    \Prob_{\mathrm{H}_0}\!\left(
      p_{\mathrm{sel}}^{\mathrm{rnd}}(\bm{Y},\bm{\Omega})\le\alpha
      \,\middle|\,
      \widetilde{\mathcal{E}}_{\widetilde{\bm{Y}}},
      \widetilde{\mathcal{Q}}_{\widetilde{\bm{Y}}}
    \right)
    \le
    \alpha
    \qquad \text{a.s.}
    \label{eq:appendix_randomized_pvalue_validity}
  \end{equation}
  and therefore
  \begin{equation}
    \Prob_{\mathrm{H}_0}\!\left(
      p_{\mathrm{sel}}^{\mathrm{rnd}}(\bm{Y},\bm{\Omega})\le\alpha
    \right)
    \le
    \alpha.
    \label{eq:appendix_randomized_pvalue_marginal}
  \end{equation}
  Writing
  \[
    \Delta_{\bm{Y},\bm{\Omega}}^{\mathrm{rnd}}
    =
    \bm{\eta}_{\bm{Y}+\bm{\Omega}}^\top \bm{\mu}
  \]
  for the effect induced by the randomized responses, we also have
  \begin{equation}
    \Prob\!\left(
      \Delta_{\bm{Y},\bm{\Omega}}^{\mathrm{rnd}}
      \in
      C_{1-\alpha}^{\Delta,\mathrm{rnd}}(\bm{Y},\bm{\Omega})
      \,\middle|\,
      \widetilde{\mathcal{E}}_{\widetilde{\bm{Y}}},
      \widetilde{\mathcal{Q}}_{\widetilde{\bm{Y}}}
    \right)
    =
    1-\alpha
    \qquad \text{a.s.}
    \label{eq:appendix_randomized_ci_validity}
  \end{equation}
  and hence
  \begin{equation}
    \Prob\!\left(
      \Delta_{\bm{Y},\bm{\Omega}}^{\mathrm{rnd}}
      \in
      C_{1-\alpha}^{\Delta,\mathrm{rnd}}(\bm{Y},\bm{\Omega})
    \right)
    =
    1-\alpha.
    \label{eq:appendix_randomized_ci_marginal}
  \end{equation}
\end{proposition}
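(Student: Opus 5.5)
The proof will mirror the unrandomized arguments for Theorems~\ref{thm:validity} and~\ref{thm:ci_validity}. The only change is that the truncated normal family is replaced by the randomized conditional family identified in Proposition~\ref{prop:appendix_randomized_distribution}. Fix the observed values of $(\widetilde{\mathcal{E}}_{\widetilde{\bm{Y}}},\widetilde{\mathcal{Q}}_{\widetilde{\bm{Y}}})$. On this event the weight vector $\bm{\eta}$ and the set $\widetilde{\mathcal{Z}}$ are fixed. By Proposition~\ref{prop:appendix_randomized_distribution}, the conditional law of $T(\bm{Y})$ then has density
\[
f^{\mathrm{rnd}}_{\Delta,\widetilde{\mathcal{Z}}}(t)\propto \phi_{\Delta,v_{\bm{\eta}}}(t)\,w_{\widetilde{\mathcal{Z}}}(t).
\]
Here $\Delta=\Delta^{\mathrm{sel,rnd}}$, and the weight $w_{\widetilde{\mathcal{Z}}}$ does not depend on $\Delta$. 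The conditional CDF is therefore exactly $G^{\mathrm{sel,rnd}}_{\Delta}=G^{\mathrm{rnd}}_{\Delta,\widetilde{\mathcal{Z}}}$.

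\textbf{Step 1: continuity and positivity.} I will first check that $G^{\mathrm{rnd}}_{\Delta,\widetilde{\mathcal{Z}}}$ is a continuous CDF. The density is absolutely continuous, and the normalizer is positive almost surely. The normalizer is the probability of the realized selection event, which is positive for almost every realization because we condition on events that actually occurred. When $\tau^2>0$, $w_{\widetilde{\mathcal{Z}}}$ is a smooth convolution and is positive whenever $\widetilde{\mathcal{Z}}$ has positive measure; when $\tau^2=0$ the claim reduces to the earlier case.

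Continuity gives the probability integral transform: $G^{\mathrm{sel,rnd}}_{\Delta}(T(\bm{Y}))\sim\mathrm{Unif}(0,1)$ conditionally when $\Delta$ is the true effect. The $p$-value is $p^{\mathrm{rnd}}_{\mathrm{sel}}=1-G^{\mathrm{sel,rnd}}_0(T(\bm{y}))$, with no atoms. Hence \eqref{eq:appendix_randomized_pvalue_exact} holds under $\Delta^{\mathrm{sel,rnd}}=0$, and the coverage statement \eqref{eq:appendix_randomized_ci_validity} follows from
\[
\Prob\bigl(\alpha/2\le U\le 1-\alpha/2\bigr)=1-\alpha .
\]

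\textbf{Step 2: monotone likelihood ratio.} For the composite null I need stochastic monotonicity in $\Delta$. The ratio
\[
\frac{f^{\mathrm{rnd}}_{\Delta_1}(t)}{f^{\mathrm{rnd}}_{\Delta_0}(t)}=C\exp\!\bigl((\Delta_1-\Delta_0)t/v_{\bm{\eta}}\bigr)
\]
holds because the factor $w_{\widetilde{\mathcal{Z}}}(t)$ cancels. The family is therefore an exponential family in $t$ with base measure $w_{\widetilde{\mathcal{Z}}}(t)\,dt$, so it has monotone likelihood ratio and is stochastically increasing in $\Delta$. Taking $c_\alpha$ with $1-G^{\mathrm{rnd}}_{0}(c_\alpha)=\alpha$, for every $\Delta\le0$ we get
\[
\Prob_\Delta\bigl(T\ge c_\alpha\mid\cdot\bigr)\le\alpha,
\]
which gives \eqref{eq:appendix_randomized_pvalue_validity}.

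\textbf{Step 3: marginalization.} Integrating the almost-sure conditional statements over the distribution of $(\widetilde{\mathcal{E}}_{\widetilde{\bm{Y}}},\widetilde{\mathcal{Q}}_{\widetilde{\bm{Y}}})$ gives the marginal statements \eqref{eq:appendix_randomized_pvalue_marginal_exact}, \eqref{eq:appendix_randomized_pvalue_marginal} and \eqref{eq:appendix_randomized_ci_marginal}. On each event $\Delta^{\mathrm{rnd}}_{\bm{Y},\bm{\Omega}}$ equals the fixed $\bm{\eta}^\top\bm{\mu}$. Since the selection event takes finitely many values and $\widetilde{\mathcal{Q}}$ is continuous, I will phrase this via the tower property with regular conditional distributions.

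\textbf{Main obstacle.} The delicate point is justifying that Proposition~\ref{prop:appendix_randomized_distribution} gives the conditional law as a bona fide regular conditional distribution almost surely, with $\widetilde{\bm{\eta}}$ depending on the conditioning. I would handle this by partitioning on the finitely many values of $\bm{\eta}_{\widetilde{\bm{Y}}}$, since the problem class has finitely many polyhedral cells. Within each cell the argument is for a fixed $\bm{\eta}$, and the result then follows by summing over cells.
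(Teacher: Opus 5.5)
Your proposal is correct and follows essentially the same route as the paper. The $\Delta$-free factor $w_{\widetilde{\mathcal{Z}}}(t)=\Prob(R\in\widetilde{\mathcal{Z}}-t)$ cancels to give a monotone likelihood ratio, which feeds the least-favorable argument for the composite null. The probability integral transform gives exact uniformity at $\Delta^{\mathrm{sel,rnd}}=0$ and exact coverage, and averaging over $(\widetilde{\mathcal{E}}_{\widetilde{\bm{Y}}},\widetilde{\mathcal{Q}}_{\widetilde{\bm{Y}}})$ gives the marginal statements. Your extra checks (positivity of the normalizer, and partitioning over the finitely many values of $\bm{\eta}$) only make explicit measure-theoretic details the paper leaves implicit.
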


\begin{proof}
For the validity claim, fix the observed selection event and the sufficient statistic for the nuisance parameter associated with \(\widetilde{\bm{Y}}\).
By \eqref{eq:appendix_randomized_distribution}, the conditional density of \(T(\bm{Y})\) under parameter value \(\Delta\) is proportional to
\begin{equation}
  \exp\!\left(
    -\frac{(t-\Delta)^2}{2v_{\bm{\eta}}}
  \right)
  \Prob\!\left(
    R\in\widetilde{\mathcal{Z}}-t
  \right),
  \qquad
  \widetilde{\mathcal{Z}}-t
  :=
  \left\{
    r\in\R:\ t+r\in\widetilde{\mathcal{Z}}
  \right\}.
  \label{eq:appendix_randomized_density}
\end{equation}
The factor
\(
\Prob(R\in\widetilde{\mathcal{Z}}-t)
\)
does not depend on \(\Delta\).
Hence, for \(\Delta_1>\Delta_0\),
\begin{equation}
  \frac{
    f^{\mathrm{rnd}}_{\Delta_1,\widetilde{\mathcal{Z}}}(t)
  }{
    f^{\mathrm{rnd}}_{\Delta_0,\widetilde{\mathcal{Z}}}(t)
  }
  =
  C(\Delta_0,\Delta_1,\widetilde{\mathcal{Z}})
  \exp\!\left(
    \frac{(\Delta_1-\Delta_0)t}{v_{\bm{\eta}}}
  \right),
  \label{eq:appendix_randomized_mlr}
\end{equation}
where \(C(\Delta_0,\Delta_1,\widetilde{\mathcal{Z}})\) is constant in \(t\).
Thus the randomized family has a monotone likelihood ratio in \(T\), so it is stochastically increasing in \(\Delta\).
The same least-favorable-null-value argument used in the proof of Theorem~\ref{thm:validity} therefore gives \eqref{eq:appendix_randomized_pvalue_validity} and \eqref{eq:appendix_randomized_pvalue_marginal}.

For the confidence interval, the monotone likelihood ratio property implies that
\(
G_{\Delta}^{\mathrm{sel,rnd}}(T(\bm{y}))
\)
is monotone in \(\Delta\), so the acceptance set in \eqref{eq:appendix_randomized_ci} is an interval.
Evaluating that interval at the true randomized effect
\(
\Delta_{\bm{Y},\bm{\Omega}}^{\mathrm{rnd}}
\),
continuity of the conditional distribution gives the probability integral transform
\[
  G_{\Delta_{\bm{Y},\bm{\Omega}}^{\mathrm{rnd}}}^{\mathrm{sel,rnd}}
  \!\left(T(\bm{Y})\right)
  \sim
  \mathrm{Unif}(0,1)
  \qquad
  \text{conditionally on }
  \left(
    \widetilde{\mathcal{E}}_{\widetilde{\bm{Y}}},
    \widetilde{\mathcal{Q}}_{\widetilde{\bm{Y}}}
  \right),
\]
which yields \eqref{eq:appendix_randomized_ci_validity}.
Averaging over
\(
(\widetilde{\mathcal{E}}_{\widetilde{\bm{Y}}},\widetilde{\mathcal{Q}}_{\widetilde{\bm{Y}}})
\)
gives \eqref{eq:appendix_randomized_ci_marginal}.
\end{proof}

\newpage
\section{Experimental Details}

\subsection{Detailed Setup}
\label{app:appendix_setup_details}

This subsection provides the experimental details omitted from \S~\ref{sec:experiments}.

\paragraph{Candidate grid and region family.}
For each dimension $d\in\{1,2,3\}$, we use the axis-aligned grid
\(
\mathcal{X}=\{0,\tfrac{1}{m_d-1},\ldots,1\}^d
\)
with
\[
  m_d := \operatorname{round}\!\left(m_1^{1/d}\right),
  \qquad m_1=1024.
\]
For the high-region-versus-low-region target, we form the family $\mathfrak{C}_\ell=\{\mathcal{C}(\bm{s};\ell)\subset[N]:\ \mathcal{C}(\bm{s};\ell)\neq\emptyset,\ \bm{s}\in\R^d\}$ of axis-aligned hypercubes with side length
\[
  \ell_d := \ell_1^{1/d},
  \qquad \ell_1=0.2,
\]
and then construct the selected pair $(\mathcal{I}_{\bm{y}},\mathcal{J}_{\bm{y}})$ via \eqref{eq:region_selector}.

\paragraph{Default ADC hyperparameters.}
For GP-UCB, the default exploration parameter is $\kappa=2$, and the surrogate is GP regression with an RBF kernel of variance $1$ and dimension-adjusted length scale
\[
  \lambda_{\mathrm{RBF},d} := \lambda_{\mathrm{RBF},1}\sqrt{d},
  \qquad \lambda_{\mathrm{RBF},1}=0.1,
\]
with noise variance matched to the response model.
For TPE, the default quantile level is $\gamma=0.2$ and the Gaussian-kernel bandwidth is $0.1$.
Across all methods, tie-breaking and any auxiliary randomization such as choice of initial data points are fixed in advance.

\paragraph{Computational resources.}
We executed all experiments on an AMD EPYC 9474F processor with a 48-core 3.6\,GHz CPU and 768\,GB of memory.

\subsection{Synthetic Objective Families for the Power Experiment}
\label{app:synthetic_objective_families}

This subsection provides the six synthetic objective families used in Figure~\ref{fig:power}.
Each family is constructed from a separable one-dimensional template aggregated across coordinates, followed by rescaling to the range $[-1,1]$ over the finite candidate set $\mathcal{X}$.

\paragraph{Six synthetic objective families and the amplitude parameter $a$.}
Let $u\in[0,1]$ and define the six base one-dimensional functions
\begin{align}
  g_{\mathrm{sinc}}(u)
  &= \operatorname{sinc}\bigl(10(u-\tfrac{1}{2})\bigr),
  &\operatorname{sinc}(t)&=\frac{\sin(\pi t)}{\pi t}\ \ (\operatorname{sinc}(0)=1),
  \\
  g_{\mathrm{cos}}(u)
  &= -\cos(2\pi u),
  \\
  g_{\mathrm{chirp}}(u)
  &= \sin(2\pi u^2),
  \\
  g_{\mathrm{bump}}(u)
  &= \exp\!\left(-\frac{(u-0.7)^2}{2\cdot 0.08^2}\right),
  \\
  g_{\mathrm{peak}}(u)
  &= 1-\lvert u-0.4\rvert,
  \\
  g_{\mathrm{negFor}}(u)
  &= -(6u-2)^2\,\sin(12u-4).
\end{align}
For $\bm{x}=(x_1,\ldots,x_d)\in[0,1]^d$, define the coordinate-aggregated raw objective
\begin{equation}
  \tilde f_k(\bm{x})
  =
  \frac{1}{d}\sum_{j=1}^d g_k(x_j),
  \qquad
  k\in\{\mathrm{sinc},\mathrm{cos},\mathrm{chirp},\mathrm{bump},\mathrm{peak},\mathrm{negFor}\}.
  \label{eq:appendix_power_families_raw}
\end{equation}
We then scale each family to the range $[-1,1]$ over the discrete candidate set $\mathcal{X}$:
\begin{equation}
  m_k=\min_{\bm{x}\in\mathcal{X}}\tilde f_k(\bm{x}),
  \qquad
  M_k=\max_{\bm{x}\in\mathcal{X}}\tilde f_k(\bm{x}),
  \qquad
  s_k(\bm{x})=\frac{2\tilde f_k(\bm{x})-(M_k+m_k)}{M_k-m_k}.
  \label{eq:appendix_power_families_scaled}
\end{equation}
Finally, the amplitude parameter $a$ enters multiplicatively as
\begin{equation}
  f_{\star}^{(k,a)}(\bm{x}) = a\, s_k(\bm{x}).
  \label{eq:appendix_power_families_amplitude}
\end{equation}
In particular, $a=0$ yields the global null $f_\star\equiv 0$.

\subsection{Confidence Interval Length}
\label{app:appendix_ci_length}

This subsection provides the interval-length analysis that complements the coverage results in \S~\ref{subsec:experiments_coverage}.
The setting is the same as in \S~\ref{subsec:experiments_coverage}: the global null $f_\star\equiv 0$ with $d=3$ and varying horizon $N_{\mathrm{steps}}$.
Figure~\ref{fig:ci_len} shows the boxplots of interval lengths.
Across the displayed settings, the intervals of \proposed are wider than those of \naive, which is consistent with the additional conditioning required for validity.
Note that this result does not mean that \proposed is worse than \naive, because \naive is anti-conservative and hence does not achieve the nominal coverage level (see Figure~\ref{fig:coverage}).
\bonferroni intervals are dramatically wider, which aligns with its extreme conservativeness in the type-I error experiment.

\begin{figure}[t]
    \centering
    \setlength{\tabcolsep}{0pt}
    \begin{tabular}{@{}c@{\hspace{0.02\textwidth}}c@{}}
      \includegraphics[page=1,width=0.47\textwidth, trim=0 0.0cm 0 0.70cm, clip]{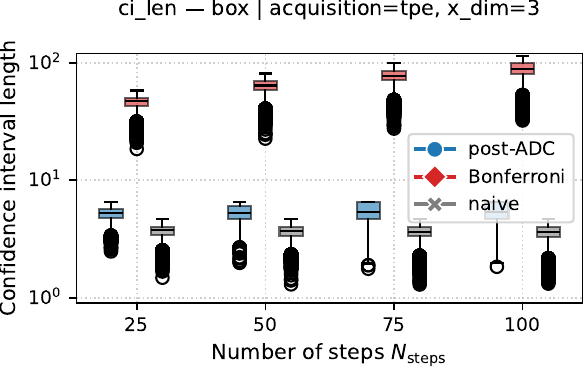} &
      \includegraphics[page=2,width=0.47\textwidth, trim=0 0.0cm 0 0.70cm, clip]{experiments/ci_len_nsteps.pdf} \\[-2mm]
      {\scriptsize \texttt{TPE}} &
      {\scriptsize \texttt{GP-UCB}}
    \end{tabular}
  \caption{Distribution of selective confidence interval lengths (log scale) for $d=3$ across horizons and acquisition rules.}
  \label{fig:ci_len}
\end{figure}

\subsection{Type-I Error vs. Dimension}
\label{app:appendix_fpr_dimension}

This subsection complements Figure~\ref{fig:fpr} by fixing $N_{\mathrm{steps}}=50$ and varying the dimension over $d\in\{1,2,3\}$ under the same global-null model $f_\star\equiv 0$.
Figure~\ref{fig:fpr_dim} reports the resulting empirical rejection probability for GP-UCB and TPE.

\begin{figure}[t]
  \centering
  \setlength{\tabcolsep}{0pt}
  \begin{tabular}{@{}c@{\hspace{0.02\textwidth}}c@{}}
    \includegraphics[page=1,width=0.47\textwidth, trim=0 0.0cm 0 0.7cm, clip]{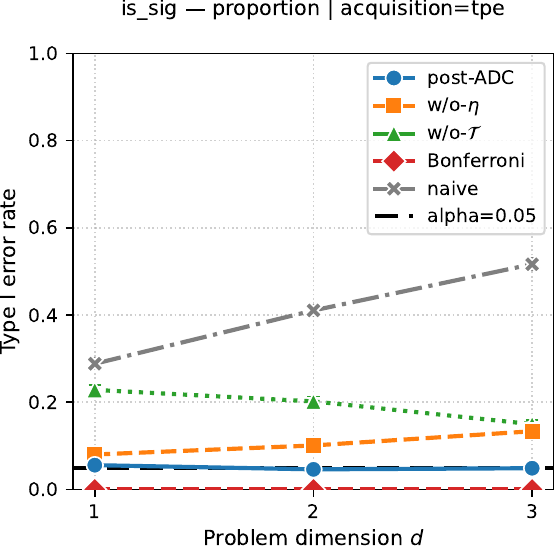} &
    \includegraphics[page=2,width=0.47\textwidth, trim=0 0.0cm 0 0.7cm, clip]{experiments/fpr_dim.pdf} \\[-2mm]
    {\scriptsize \texttt{TPE}} &
    {\scriptsize \texttt{GP-UCB}}
  \end{tabular}
  \caption{Type-I error under the global null $f_\star\equiv 0$ as the problem dimension varies over $d\in\{1,2,3\}$, with $N_{\mathrm{steps}}=50$ fixed. The horizontal dashed line marks the nominal level $\alpha=0.05$.}
  \label{fig:fpr_dim}
\end{figure}

\proposed remains close to the nominal level across the displayed dimensions, while the naive and partially conditioned baselines remain anti-conservative.
As in the main-text null experiment, \bonferroni stays overly conservative throughout this sweep.

\subsection{Type-I Error vs. Acquisition Hyperparameters}
\label{app:appendix_fpr_hyperparameter}

We also examine how the null rejection probability changes when the acquisition-rule hyperparameters are varied while holding the remaining experimental conditions fixed at $d=3$ and $N_{\mathrm{steps}}=50$.
For GP-UCB, we sweep the exploration parameter $\kappa$ over the grid from $0.5$ to $1024$.
For TPE, we sweep the quantile level $\gamma$ over the grid $\{0.1,0.2,\ldots,0.8\}$.
These denser sensitivity sweeps use $1{,}000$ Monte Carlo replicates per setting.
Figure~\ref{fig:hyperparameter} summarizes the resulting type-I error curves.
Across these sweeps, \proposed remains close to the nominal level, while the \naive and partially conditioned baselines remain anti-conservative over broad portions of the hyperparameter ranges.

\begin{figure}[t]
  \centering
  \setlength{\tabcolsep}{0pt}
  \begin{tabular}{@{}c@{\hspace{0.02\textwidth}}c@{}}
    \includegraphics[width=0.47\textwidth, trim=0 0.0cm 0 0.8cm, clip]{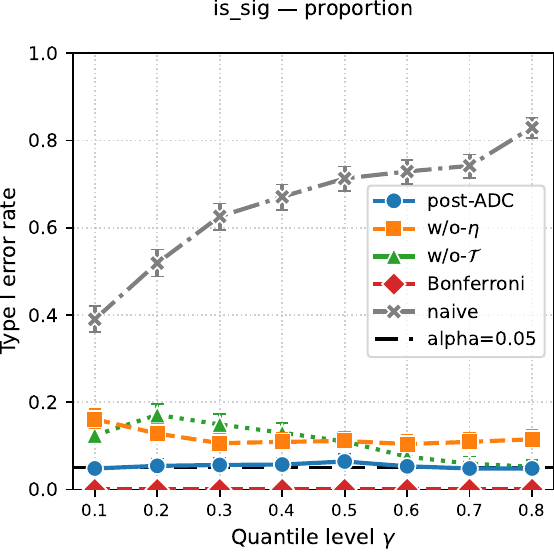} &
    \includegraphics[width=0.47\textwidth, trim=0 0.0cm 0 0.8cm, clip]{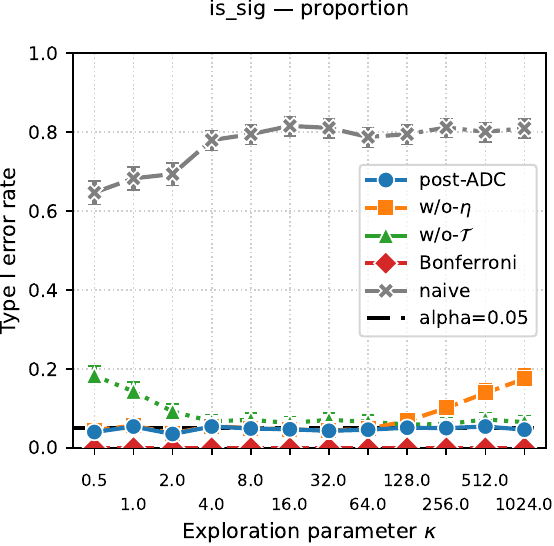} \\[-2mm]
    {\scriptsize \texttt{TPE}} &
    {\scriptsize \texttt{GP-UCB}}
  \end{tabular}
  \caption{Type-I error under the global null $f_\star\equiv 0$ as the acquisition hyperparameters vary, with $d=3$ and $N_{\mathrm{steps}}=50$ fixed. The horizontal dashed line marks the nominal level $\alpha=0.05$.}
  \label{fig:hyperparameter}
\end{figure}

\subsection{Real-Data Experiment Details}
\label{app:appendix_real_data}

\paragraph{Datasets.}
We use four UCI regression benchmarks, summarised in Table~\ref{tab:real_datasets}.
All datasets are licensed under the CC BY 4.0 license.
For each dataset we project the inputs onto a $d$-dimensional subspace spanned by the top-$d$ features ranked by mutual information with the response; feature sets for each $d\in\{1,2,3\}$ are listed in Table~\ref{tab:real_datasets}.
The candidate set $\mathcal{X}$ is formed by taking all distinct projected input vectors in the dataset and, if their count exceeds $1024$, sub-sampling $1024$ uniformly at random.
Response noise variance $\sigma^2$ is estimated by maximising the GP marginal likelihood (RBF kernel, length scale fitted jointly) on the full dataset; this $\hat\sigma^2$ is then used as the fixed noise level in the inference procedure.

\begin{table}[t]
  \centering
  \caption{Real datasets used in \S~\ref{subsec:experiments_real}. Features are listed in order of mutual-information rank; the top $d$ are used for each value of $d$.}
  \label{tab:real_datasets}
  \small
  \setlength{\tabcolsep}{2pt}
  \begin{tabular}{@{}p{0.27\textwidth}rrp{0.20\textwidth}p{0.20\textwidth}@{}}
    \toprule
    Dataset & $N$ & $|\mathcal{X}|$ & Response & \makecell[l]{Top-3 features\\(rank order)} \\
    \midrule
    Concrete Compressive Strength & 1030  & 412  & compressive strength & water, cement, superplasticizer\\
    Gas Turbine (CO)              & 36733 & 1024 & CO emission          & TIT, TEY, CDP \\
    Gas Turbine (NOx)             & 36733 & 1024 & NOx emission         & TIT, AT, TEY \\
    Combined Cycle Power Plant    & 9568  & 1024 & net hourly output    & AT, V, AP \\
    \bottomrule
  \end{tabular}
\end{table}

\paragraph{Experimental protocol.}
For each dataset and each $(d, \text{acquisition})$ combination, we draw $1{,}000$ independent bootstrap replicates.
In each replicate, $N = N_{\mathrm{init}} + N_{\mathrm{steps}} = 10 + 50 = 60$ samples are drawn uniformly without replacement from the full dataset and used as the observed responses.
The regions $(\mathcal{I}_{\bm{y}}, \mathcal{J}_{\bm{y}})$ are constructed exactly as in the synthetic experiments.
We compare only \proposed and \bonferroni.

\paragraph{Results for $d=2$ and $d=3$.}
Tables~\ref{tab:real_d2} and \ref{tab:real_d3} report the empirical rejection probabilities for $d=2$ and $d=3$, respectively.
The results are consistent with the $d=1$ results in Table~\ref{tab:real_d1}: \proposed detects signal on datasets with informative structure, while \bonferroni remains essentially powerless throughout.

\begin{table}[t]
  \centering
  \caption{Empirical rejection probability on five real datasets with $d=2$ and $1{,}000$ bootstrap replicates.}
  \label{tab:real_d2}
  \small
  \setlength{\tabcolsep}{3pt}
  \begin{tabular}{@{}llcccc@{}}
  \toprule
  Acquisition & Method & Concrete & \makecell{Gas Turbine\\(CO)} & \makecell{Gas Turbine\\(NOx)} & \makecell{Power\\Plant} \\
  \midrule
  \multirow{2}{*}{ucb} & \proposed & \textbf{0.145} & \textbf{0.994} & \textbf{0.836} & \textbf{1.000} \\
  & \bonferroni & 0.000 & 0.000 & 0.000 & 0.000 \\
  \midrule
  \multirow{2}{*}{tpe} & \proposed & \textbf{0.246} & \textbf{0.952} & \textbf{0.781} & \textbf{0.992} \\
  & \bonferroni & 0.000 & 0.000 & 0.000 & 0.000 \\
  \bottomrule
  \end{tabular}
\end{table}

\begin{table}[t]
  \centering
  \caption{Empirical rejection probability on five real datasets with $d=3$ and $1{,}000$ bootstrap replicates.}
  \label{tab:real_d3}
  \small
  \setlength{\tabcolsep}{3pt}
  \begin{tabular}{@{}llcccc@{}}
  \toprule
  Acquisition & Method & Concrete & \makecell{Gas Turbine\\(CO)} & \makecell{Gas Turbine\\(NOx)} & \makecell{Power\\Plant} \\
  \midrule
  \multirow{2}{*}{ucb} & \proposed & \textbf{0.166} & \textbf{0.999} & \textbf{0.741} & \textbf{1.000} \\
  & \bonferroni & 0.000 & 0.003 & 0.000 & 0.000 \\
  \midrule
  \multirow{2}{*}{tpe} & \proposed & \textbf{0.167} & \textbf{0.975} & \textbf{0.747} & \textbf{0.999} \\
  & \bonferroni & 0.000 & 0.006 & 0.000 & 0.000 \\
  \bottomrule
  \end{tabular}
\end{table}

\clearpage
\bibliographystyle{unsrt}
\bibliography{ref}

\end{document}